\newtheorem{theorem}{Theorem}
\newtheorem{lemma}{Lemma}
\newtheorem{proposition}{Proposition}
\newtheorem{assumption}{Assumption}
\begin{document}
%
% paper title
% Titles are generally capitalized except for words such as a, an, and, as,
% at, but, by, for, in, nor, of, on, or, the, to and up, which are usually
% not capitalized unless they are the first or last word of the title.
% Linebreaks \\ can be used within to get better formatting as desired.
% Do not put math or special symbols in the title.
\title{Fully-Corrective Gradient Boosting with Squared Hinge: Fast Learning Rates and Early Stopping}
%
%
% author names and IEEE memberships
% note positions of commas and nonbreaking spaces ( ~ ) LaTeX will not break
% a structure at a ~ so this keeps an author's name from being broken across
% two lines.
% use \thanks{} to gain access to the first footnote area
% a separate \thanks must be used for each paragraph as LaTeX2e's \thanks
% was not built to handle multiple paragraphs
%

\author{Jinshan~Zeng, Min Zhang
        and~Shao-Bo~Lin% <-this % stops a space
\thanks{J. Zeng and M. Zhang are with the School of Computer and Information Engineering, Jiangxi Normal University, Nanchang,  China, 330022. (Email: jsh.zeng@gmail.com (J. Zeng), zhangmin20181010@gmail.com (M. Zhang)) }% <-this % stops a space
\thanks{S.-B. Lin is with the Center of Intelligent Decision-Making and Machine Learning, School of Management, Xi'an Jiaotong University, Xi'an, China, 710049. (Email: sblin1983@gmail.com) }
\thanks{Corresponding author: \textit{Shao-Bo Lin (sblin1983@gmail.com)}}
%\thanks{Manuscript received April 19, 2005; revised August 26, 2015.}
}
\maketitle

% As a general rule, do not put math, special symbols or citations
% in the abstract or keywords.
\begin{abstract}
Boosting is a well-known method for improving the accuracy of weak learners in machine learning. However, its theoretical generalization guarantee is missing in literature. In this paper, we propose an efficient boosting method with theoretical generalization guarantees for binary classification. Three key ingredients of the proposed boosting method are: a) the \textit{fully-corrective greedy} (FCG) update in the boosting procedure, b) a differentiable \textit{squared hinge} (also called \textit{truncated quadratic}) function as the loss function, and c) an efficient alternating direction method of multipliers (ADMM) algorithm for the associated FCG optimization. The used squared hinge loss not only inherits the robustness of the well-known hinge loss for classification with outliers, but also brings some benefits for computational implementation and theoretical justification. Under some sparseness assumption, we derive a fast learning rate of the order ${\cal O}((m/\log m)^{-1/4})$ for the proposed boosting method, which can be further improved to ${\cal O}((m/\log m)^{-1/2})$ if certain additional noise assumption is imposed, where $m$ is the size of sample set. Both derived learning rates are the best ones among the existing generalization results of boosting-type methods for classification. Moreover, an efficient early stopping scheme is provided for the proposed method. A series of toy simulations and real data experiments are conducted to verify the developed theories and demonstrate the effectiveness of the proposed method.
\end{abstract}

% Note that keywords are not normally used for peerreview papers.
\begin{IEEEkeywords}
Boosting, classification, learning theory, fully-corrective greedy, early stopping
\end{IEEEkeywords}

% For peer review papers, you can put extra information on the cover
% page as needed:
% \ifCLASSOPTIONpeerreview
% \begin{center} \bfseries EDICS Category: 3-BBND \end{center}
% \fi
%
% For peerreview papers, this IEEEtran command inserts a page break and
% creates the second title. It will be ignored for other modes.
\IEEEpeerreviewmaketitle

\section{Introduction}
% The very first letter is a 2 line initial drop letter followed
% by the rest of the first word in caps.
%
% form to use if the first word consists of a single letter:
% \IEEEPARstart{A}{demo} file is ....
%
% form to use if you need the single drop letter followed by
% normal text (unknown if ever used by the IEEE):
% \IEEEPARstart{A}{}demo file is ....
%
% Some journals put the first two words in caps:
% \IEEEPARstart{T}{his demo} file is ....
%
% Here we have the typical use of a "T" for an initial drop letter
% and "HIS" in caps to complete the first word.
%\IEEEPARstart{T}{his} demo file is intended to serve as a ``starter file''
%for IEEE journal papers produced under \LaTeX\ using
%IEEEtran.cls version 1.8b and later.
%% You must have at least 2 lines in the paragraph with the drop letter
%% (should never be an issue)
%I wish you the best of success.
%
%\hfill mds
%
%\hfill August 26, 2015

Boosting  \cite{Freund1995} is a powerful learning scheme that combines multiple
weak prediction rules to produce a strong learner with the underlying intuition
that one can obtain accurate prediction by combining ``rough'' ones. It has been successfully used in numerous learning tasks  such as
regression, classification, ranking and  recognition \cite{Schapire2012}.
  The
gradient descent view   of boosting \cite{Friedman2000,Friedman2001}, or gradient boosting,  provides a springboard to understand
and improve boosting via connecting   boosting  with  a two-step stage-wise fitting of additive models
corresponding to various loss functions.

There are commonly four ingredients of gradient boosting: a set of weak learners, a loss function, an update scheme and an early stopping strategy.  The weak learner issue focuses on selecting a suitable set of weak learners by regulating the   property of the estimator to be found. Typical examples are decision trees \cite{Hastie2001}, neural networks \cite{Barron2008} and  kernels \cite{Lin2013}.
The loss function issue devotes to choosing an appropriate loss function to enhance the learning performance.
Besides the classical exponential loss in Adaboost \cite{Freund1995},
some other widely used loss functions  are
the logistic loss in Logit-Boosting  \cite{Friedman2000}, least square loss in $L_2$ Boosting \cite{Buhlmann2003} and hinge loss in  HingeBoost \cite{Gao2011}.
The update scheme refers to  how to iteratively
derive   a new estimator based on the selected weak learners. According to the gradient descent view,
there are numerous iterative schemes for boosting \cite{Friedman2001}.  Among these, five
most commonly used iterative schemes are the  original boosting iteration \cite{Freund1995},   regularized
boosting iteration via shrinkage (RSBoosting) \cite{Friedman2001}, regularized boosting via truncation
(RTBoosting) \cite{Zhang2005}, $\varepsilon$-Boosting \cite{Hastie2001}  and re-scaled boosting (RBoosting) \cite{Wang2019}.  Noting that boosting is doomed to over-fit \cite{Buhlmann2003}, the early stopping issue
  depicts how to
terminate the learning process to avoid over-fitting.
%\textit{An Information Criterion} (AIC) \cite{Buhlmann2003}, $\ell_0$-based complexity restriction \cite{Barron2008} and $\ell_1$-based adaptive terminate rule are popular strategies to yield a stopping rule of high quality.
Some popular strategies to yield a stopping rule of high quality are \textit{An Information Criterion} (AIC) \cite{Buhlmann2003}, $\ell_0$-based complexity restriction \cite{Barron2008} and $\ell_1$-based adaptive terminate rule.

The learning performance of $L_2$ Boosting has been rigorously verified in regression \cite{Barron2008,Bagirov2010}. In fact, under some sparseness assumption of the regression function, a learning rate of order $\mathcal O(m^{-1/2})$ has been provided for numerous variants of the original boosting \cite{Barron2008,Bagirov2010}, where $m$ denotes the size of data set. However, for classification where $L_2$ Boosting performs practically not so well, there lack tight classification risk estimates for boosting as well as its variants.
For example, the classification risk for  AdaBoost is of an order $\mathcal O((\log m)^{-1})$ \cite{Bartlett2007} and for some variant  of Logit-Boosting    is  of an order  $\mathcal O(m^{-1/8})$  \cite{Zhang2005}.
% The reasons for such  slow learning rates are two fold.
There are mainly two reasons resulting in such slow learning rates.
The one is that the  original update scheme in boosting leads to slow numerical convergence rate \cite{Livshits2009,Mukherjee2013}, which requires numerous boosting  iterations to achieve a
%specified
prescribed
accuracy. The other is that the widely used loss functions such as the exponential loss and logistic loss   do not admit the truncation (or clip) operator like (\ref{Truncation}) below,  requiring   tight uniform bounds for the derived estimator.

The aim of the present paper is to derive tight classification risk bounds for boosting-type algorithms via selecting appropriate iteration scheme and loss function. In fact, we adopt the widely used \textit{fully-corrective greedy} (FCG) update scheme and  the \textit{squared hinge} (also called truncated quadratic) function (i.e., $\phi_{h^2}(t)=\max\{0,1-t\}^2$ for any $t\in \mathbb{R}$) as the loss function.
%Due to its superior to the \textit{partial corrective greedy} (PCG) counterpart used in gradient boosting \cite{Friedman2001}, FCG update scheme has
%  been successfully  used in \cite{Sochman2004,Shalev-Shwartz2010,Johnson2014}
%in terms of its fast numerical convergence rate.
%Due to its superiority over the \textit{partially-corrective greedy} (PCG) counterpart used in gradient boosting \cite{Friedman2001},
 FCG update scheme has
  been successfully  used in \cite{Sochman2004,Shalev-Shwartz2010,Johnson2014},
mainly in terms of the fast numerical convergence rate.
Inspired by the square-type inequality \cite{Lin2017}, the squared hinge loss  has been exploited  to ease the computational implementation and is regarded to be an improvement of the classical hinge loss  \cite{Janocha2016,Mangasarian2001,Kanamori2012}.
By taking advantage of the special form of the squared hinge loss, we develop an alternating direction method of multipliers (ADMM) algorithm \cite{Glowinski1975,Gabay1976} for efficiently finding the optimal coefficients of the FCG optimization subproblem.
More importantly, a tight classification risk bound is derived  in the statistical learning framework  \cite{Cucer2007}, provided the algorithm is appropriately early stopped.

In a nutshell, our contributions can be summarized as follows.

$\bullet$ {\it Algorithmic side:} We propose a novel variant of boosting to  improve its performance for binary classification. The
FCG update scheme and squared hinge loss are utilized in the new variant to accelerate the numerical convergence rate and reduce the classification risk.

$\bullet$ {\it Theoretical side:} We derive fast learning rates for the proposed algorithm in binary classification. Under some regular sparseness assumption, the derived learning rate achieves an order of $\mathcal O((m/\log m)^{-1/4})$, which is a new record for boosting classification. If some additional noise condition is imposed, then the learning rate can be further improved to $\mathcal O((m/\log m)^{-1/2})$.

$\bullet$ {\it Numerical side:} We conduct a series of experiments including the toy simulations, UCI-benchmark data experiments and a real-world earthquake intensity classification experiment to show  the feasibility and  effectiveness of the proposed algorithm. Our numerical results show that the proposed
variant of boosting
is at least  comparable  with the state-of-the-art methods.

The rest of this paper is organized as follows.
In Section \ref{sc:algorithm}, we introduce the proposed boosting method in detail.
In Section \ref{sc:generalization-error}, we provide the theoretical generalization guarantees of the proposed method.
A series of toy simulations are conducted in Section \ref{sc:simulation} to illustrate the feasibility of the suggested method,
and some real-data experiments are provided in Section \ref{sc:realdata} to demonstrate the effectiveness of the proposed method.
All the proofs are provided in Section \ref{sc:proofs}.
We conclude this paper in Section \ref{sc:conclusion}.

\section{Proposed method}
\label{sc:algorithm}

In this section, after presenting the classical boosting, we introduce our variant in detail.

\subsection{Boosting}
\label{sc:boosting}

Boosting can be regarded as one of the most important methods in machine learning for classification and regression \cite{Schapire2003}.
The original versions of boosting proposed by \cite{Schapire1990} and \cite{Freund1995} were not adaptive and could not take full advantage of the weak learners.
Latter, an adaptive boosting algorithm called AdaBoost was introduced by \cite{Freund1997} to alleviate many practical difficulties of the earlier versions of boosting. The gradient descent view of boosting \cite{Friedman2000} then connects boosting with the well known greedy-type algorithms  \cite{Temlyakov2008a} equipped with different loss functions.
In light of this perspective, numerous variants of boosting were proposed to improve its learning performance \cite{Hastie2001}.

Given a data set  $D=\{(x_i,y_i)\}_{i=1}^m$ with size $m$, boosting starts with a set of weak learners  ${\cal G}_n := \{g_j\}_{j=1}^n$ with size $n$ and a loss function $\phi$. Mathematically, it formulates the learning problem to find a  function $f\in \mathrm{span}{\cal G}_n$ to minimize the following empirical risk
\begin{align}
\label{Eq:Empirical-risk}
{\cal E}_{D}^\phi(f):= \frac{1}{m} \sum_{i=1}^m \phi(y_i,f(x_i)),
\end{align}
where $\mathrm{span}{\cal G}_n$ represents the function space spanned linearly by ${\cal G}_n$.
If $\phi$ is  Fr\'{e}chet
differentiable, gradient boosting   firstly finds a
$g_k^*\in {\cal G}_n$ such that
%\begin{equation}\label{Gradient-proj}
%              - \nabla{\cal E}^\phi_D(f_{k-1},g_k^*)=\sup_{g\in {\mathcal G}_n}-\nabla{\cal E}^\phi_D(f_{k-1},g),
%\end{equation}
\begin{equation}\label{Gradient-proj}
              - (\nabla {\cal E}^\phi_D(f_{k-1}),g_k^*)
              =\sup_{g\in {\mathcal G}_n} -(\nabla {\cal E}^\phi_D(f_{k-1}),g),
\end{equation}
  where $(\nabla {\cal E}^\phi_D(f),h)$ denotes  the
value of linear functional $\nabla {\cal E}^\phi_D(f)$ at $h$.
Then,  it finds a $\beta_k^*\in \mathbb R$  such that
\begin{equation}\label{Linear search}
       {\cal E}^\phi_D(f_{k-1}+\beta_k^*g^*_k)= \inf_{\beta_k\in\mathbb
       R}{\cal E}^\phi_D(f_{k-1}+\beta_kg^*_k).
\end{equation}
In this way, gradient boosting yields a set of estimators $\{f_k\}_{k=1}^\infty$ iteratively and early stops the algorithm according to the bias-variance trade-off \cite{Zhang2005} to get the final estimator $f_{k^*}$, where $k^*$ is the terminal number of iterations.

%Therefore,  the weaker learner selection, loss function selection, update scheme and early stopping strategy are four important factors of gradient boosting.
According to the above description, it can be noted that the selections of weak learners, the loss function, update scheme and early stopping strategy  play important roles in the practical implementation of gradient boosting.
The studies in \cite{Bagirov2010,Johnson2014,Friedman2001,Buhlmann2003,Barron2008,Wang2019,Zhang2005,Lin2013} discussed  the importance of the mentioned four issues respectively and then presented numerous variants of boosting accordingly.

\subsection{Fully-corrective greedy update scheme}
\label{sc:boosting-FCG}
There are roughly two approaches to improve the learning performance of boosting:  variance-based method and bias-based method.  The former focuses on controlling  the structure ($\ell^1 $ norm) of the derived boosting estimator and then reduces the variance of boosting for a fixed number of iterations, while the later  devotes to accelerating  the numerical convergence rates and early stopping the iteration procedure.
Among these existing variants of boosting,
RSBoosting  \cite{Friedman2001},
RTBoosting  \cite{Zhang2005} and $\varepsilon$-Boosting \cite{Hastie2001} are typical variance-based methods, while RBoosting \cite{Wang2019} is a bias-based method.
 The problem is, however, that the variance-based method frequently requires a large number of iterations to achieve a desired accuracy,
while the bias-based method may suffer from the same problem unless some additional boundedness assumptions are imposed.
%usually needs additional boundedness assumptions, without which it also requires amount of iterations.
An intuitive experiment is provided in Fig. \ref{fig:comp-boost}.
From Fig. \ref{fig:comp-boost}(b), it follows that numerous iterations are required for these existing boosting methods to select a small number of weak learners.
%presents an intuitive example  for these methods and shows that to achieve a small number of weak learners, numerous iterations are required.
%\begin{figure}[!t]
%\begin{minipage}[b]{0.49\linewidth}
%\centering
%\includegraphics*[scale=.32]{Comp_Boost_TestErr_crop}
%%  \vspace{-.5cm}
%\centerline{{\small (a) Test error}}
%\end{minipage}
%\hfill
%\begin{minipage}[b]{0.49\linewidth}
%\centering
%\includegraphics*[scale=0.32]{Comp_Boost_DictNum-crop}
%%  \vspace{-.5cm}
%\centerline{{\small (b) Selected weak learners}}
%\end{minipage}
%\hfill
%\caption{ Comparisons on boosting schemes. (a) The curves of test error for different types of boosting methods, (b) the curves of the number of selected weak learners. The detailed experimental settings can be found in Simulation IV in Sec.\ref{sc:experiment-set}.
%}
%\label{fig:comp-boost}
%\end{figure}
\begin{figure}[!t]
\begin{minipage}[b]{0.49\linewidth}
\centering
\includegraphics*[scale=0.32]{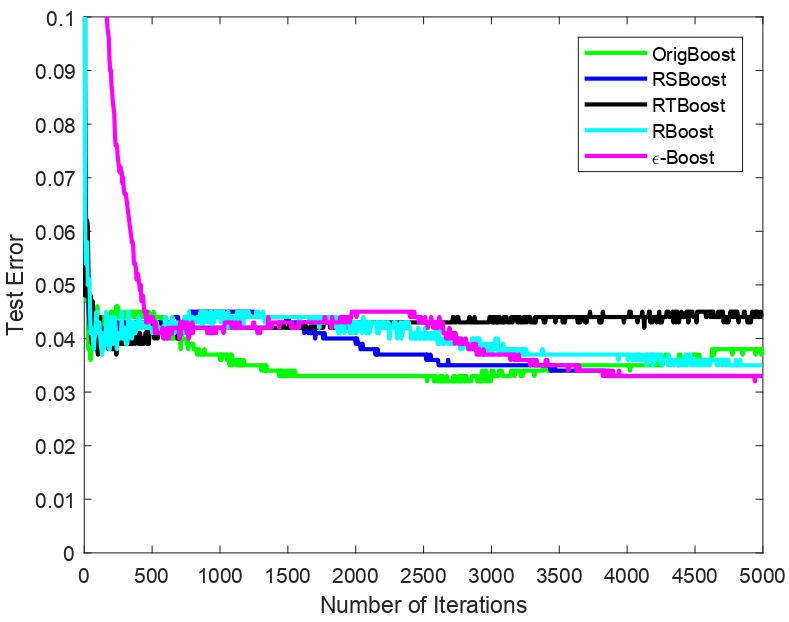}
%  \vspace{-.5cm}
\centerline{{\small (a) Test error}}
\end{minipage}
\hfill
\begin{minipage}[b]{0.49\linewidth}
\centering
\includegraphics*[scale=0.32]{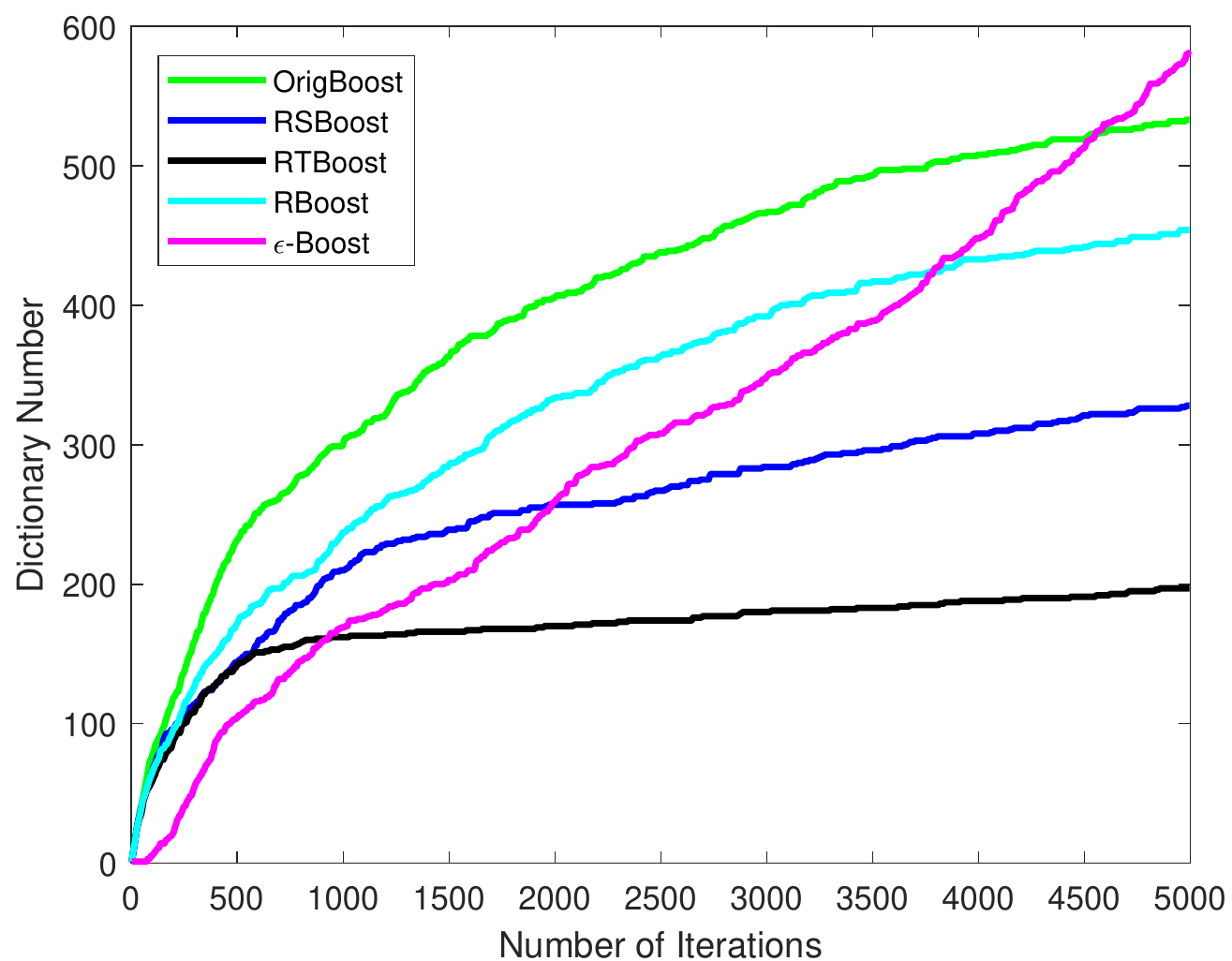}
%  \vspace{-.5cm}
\centerline{{\small (b) Selected weak learners}}
\end{minipage}
\hfill \caption{ Comparisons on boosting schemes. (a) The curves of test error for different types of boosting methods, (b) the curves of the number of selected weak learners. The detailed experimental settings can be found in Simulation IV in Sec.\ref{sc:experiment-set}.
} \label{fig:comp-boost}
\end{figure}

Different from above variants that can repeatedly select the same weak learners during the iterative procedure, the
 fully corrective greedy (FCG) update scheme proposed in \cite{Johnson2014} finds an optimal combination of all the selected weak learners. In particular, let
$\{g_j^*\}_{j=1}^k$ be the selected weak learners at the current iteration, fully corrective greedy boosting (FCGBoosting) builds an estimator
  via the following minimization
\begin{align}
\label{Eq:FCG}
    f_{D,k}= {\arg \min}_{f\in\mathrm{span}\{g_j^*\}_{j=1}^k} {\cal E}^\phi_{D}(f).
\end{align}
It should be pointed out that FCGBoosting is  similar to the orthogonal greedy algorithm in approximation theory \cite{Temlyakov2008a},
 fully-corrective variant of Frank-Wolfe method   in optimization \cite[Algorithm 4]{Jaggi2013},
%, where the original Frank-Wolfe method was proposed in \cite{Frank-Wolf1956} and some new convergence results of this method was derived in \cite{Freund2016}.
%Particularly,
and also   orthogonal matching pursuit in signal processing \cite{Tropp2007}.
The advantages of FCGBoosting lie in the sparseness of the derived estimator and the fast numerical convergence rates without any compactness assumption  \cite{Shalev-Shwartz2010}.

\subsection{Squared hinge loss}
\label{sc:loss}

Since the gradient descent viewpoint connects
the gradient boosting  with   various loss functions,
numerous loss functions have been employed in boosting to enhance the performance.
%there are numerous loss employed in boosting.
Among these, the exponential loss in AdaBoost, logistic loss in LogitBoost  and square loss in $L_2$ Boosting are the most popular ones.
In the classification settings,
the consistency of AdaBoost and LogitBoost has been proved in \cite{Zhang2005,Bartlett2007}  with  relatively slow learning rates.
In this paper, we equip boosting with the squared hinge  to improve the  learning performance, both in theory and experiments.
%loss (also called \textit{truncated quadratic} loss) widely used in classification problems \cite{Mangasarian2001,Kanamori2012,Janocha2016}.
%In this paper, we focus on  the  \textit{squared hinge} loss (also called \textit{truncated quadratic} loss), which has been widely used in classification problems \cite{Mangasarian2001,Kanamori2012,Janocha2016}, to equip boosting.

As shown in \cite{Lin2017},   the squared hinge   is of quadratic type, and thus theoretically behaves similar to the square loss and commonly better than the other typical loss functions including the exponential loss, logistic loss and hinge loss.
Furthermore, learning with the squared hinge loss usually permits the margin principle \cite{Kanamori2012} and thus practically performs better than the square loss for classification.
Selecting the loss function $\phi$ as the squared hinge loss, i.e., $\phi_{h^2}(t) = (\max\{0,1-t\})^2$ in FCGBoosting, we can obtain a new variant of boosting summarized in Algorithm \ref{alg:boosting}.

\begin{algorithm}\caption{FCGBoosting with squared hinge loss}\label{alg:boosting}
\begin{algorithmic}
\STATE{ {\bf Input}: training sample set $D:= \{x_i,y_i\}_{i=1}^m$, and a dictionary set ${\cal G}_n := \{g_j\}_{j=1}^n$.}
\STATE{Initialization:$f_{D,0}=0, {\cal T}^0 = \emptyset$.}
\STATE{ for $k=1,2,\ldots,$}
\STATE{\ \ \ let $ g_{j_k} = \arg\max_{g_j \in {\cal G}_n} \ -(\nabla {\cal E}^{\phi_{h^2}}_D(f_{k-1}),g_{j})$,}
\STATE{\ \ \ let ${\cal T}^k = {\cal T}^{k-1} \cup \{j_k\}$, and
%}
%\STATE{\ \ \
      \begin{equation}\label{FCB}
\text{(FCG)} \ \      f_{D,k} = {\arg\min}_{f\in \mathrm{span} \{g_j\}_{j\in {\cal T}^k}} {\cal E}^{\phi_{h^2}}_{D}(f).
      \end{equation}
%      }
%\STATE{
End until the stopping criterion is satisfied.}
%\STATE{ {\bf Output}:
%$f_{D,k}(x)=\sum_{i\in {\cal T}^k} \beta_i^{k} g_{i}(x)$.}
\end{algorithmic}
\end{algorithm}

Notice that the \textit{FCG} step \eqref{FCB} in Algorithm \ref{alg:boosting} is a smooth convex optimization problem, a natural algorithmic candidate is the gradient descent (GD) method.
However, as shown in Fig. \ref{fig:ADMM-vs-GD}, GD   needs many iterations to guarantee the convergence, which might be not efficient for the proposed boosting method, since the problem \eqref{FCB} in the FCG step should be solved at each iteration and there are usually numerous iterations for the proposed boosting method.
Instead, we use the alternating direction method of multipliers (ADMM) due to its high efficiency and fast convergence in practice \cite{Gabay1976,Glowinski1975,He2012} (also, shown by Fig. \ref{fig:ADMM-vs-GD}).
The convergence of the suggested ADMM algorithm (presented in Algorithm \ref{alg:ADMM} in Appendix A) and its $\mathcal O(1/t)$ rate of convergence have been established in the existing literature (say, \cite{Gabay1976,Glowinski1975,He2012}).
%As a result,  to solve the optimization problem \eqref{FCB}, computational time of ADMM is 0.034 seconds in our simulation concerning Figure \ref{fig:ADMM-vs-GD}, while that of GD is 0.53 seconds.

\begin{figure}[!t]
\begin{minipage}[b]{0.98\linewidth}
\centering
\includegraphics*[scale=.5]{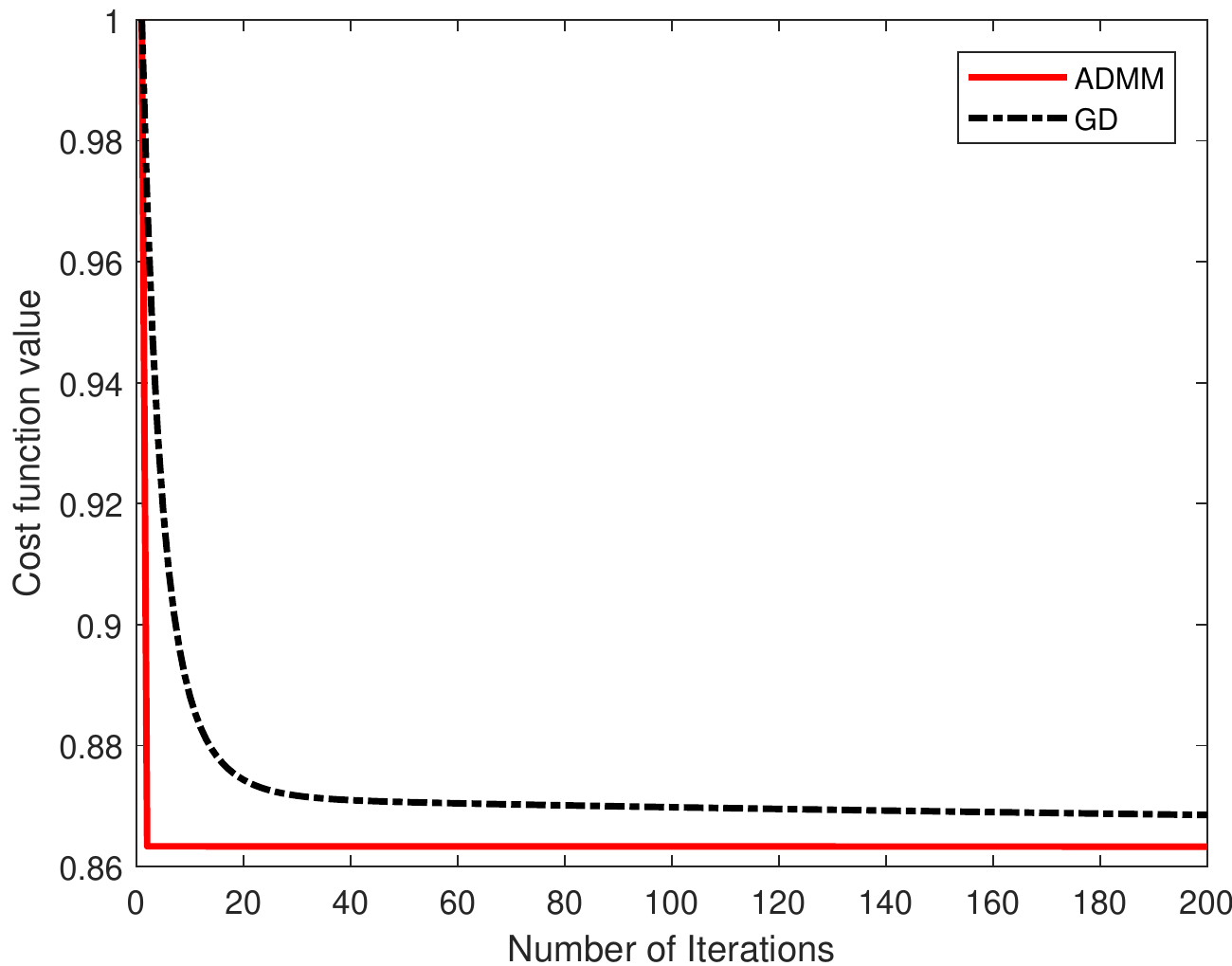}
%  \vspace{-.5cm}
%\centerline{{\small (a) Test error}}
\end{minipage}
\hfill
\caption{ Comparison on the efficiency of ADMM and  GD for problem \eqref{FCB}. The samples were generated according to Section \ref{sc:experiment-set} with 30\% uniform random noise with $m=1000.$ The matrix $A$ was formed by the Gaussian kernel dictionary with width   0.1 and   dictionary size  15. The computational time of ADMM is 0.034 seconds, while that of GD is 0.53 seconds. It can be observed that ADMM generally converges faster and more efficiently with a lower cost function value in the concerned optimization problem \eqref{FCB}.}
\label{fig:ADMM-vs-GD}
\end{figure}

\section{Generalization Error Analysis}
\label{sc:generalization-error}

In learning theory \cite{Cucer2007,Steinwart2008}, the sample set
$D=\{(x_i,y_i)\}_{i=1}^{m}$ with $x_i\in X$ and $y_i\in
Y=\{-1,1\}$ are drawn independently according to  an unknown
  distribution $\rho$ on $Z:=X\times Y$. Binary
classification algorithms produce a classifier $\mathcal
C:X\rightarrow Y$, whose generalization ability is measured by  the
misclassification error
$$
      \mathcal R(\mathcal C)=\mathbb P [\mathcal C(x)\neq y]=\int_X
      \mathbb P[y\neq\mathcal C(x)|x]d\rho_X,
$$
 where $\rho_X$ is the marginal distribution of $\rho$
and $\mathbb P[y|x]$ is the conditional probability at $x\in X$. The
Bayes rule
 $
         f_c(x) =\mathrm{sgn}(\eta(x)-1/2)$
  minimizes the misclassification error, where
$\eta(x)=\mathbb P[y=1|x]$ is the Bayes decision function and  $\mathrm{sgn}(t)=1$ if $t\geq 0$ and otherwise, $\mathrm{sgn}(t)=-1$.
Since
$f_c$ is independent of the classifier $\mathcal C$, the performance
of $\mathcal C$ can be measured by the excess misclassification
error $\mathcal R(\mathcal C)-\mathcal R(f_c)$. For the derived estimator $f_{D,k}$ in Algorithm \ref{alg:boosting}, we have $\mathcal C=\mathrm{sgn}(f_{D,k}(x))$.
Then, it is sufficient to present a bound for $f_{D,k}(x)-(\eta(x)-1/2)$. With this, we at first present a sparseness assumption on $\eta(x)-1/2$.

%
%Given a loss function $\phi$, denote by $\mathcal
%E (f):=\int_Z\phi(yf(x))d\rho$ the generalization error with
%respect to $\phi$ and by
%\begin{equation}\label{definition of frho}
%    f_\rho(x):=\arg\min_{t\in\mathbb R}\int_Y\phi(yt)d\rho(y|x)
%\end{equation}
%the regression function minimizing $\mathcal E(f)$.
%Given a set of samples $D:=\{(x_i,y_i)\}_{i=1}^m$, the empirical risk is defined as follows
%\begin{align*}
%{\cal E}_{D}(f):= \frac{1}{m}\sum_{i=1}^m \phi(y_i f(x_i)).
%\end{align*}
%Thus, the main purpose of binary classification is to learn a (non)linear function $f$ from the data.
\begin{assumption}
\label{assump:approx-error}
There exists an $h_0 \in \mathrm{span}{\cal G}_n$ such that
\begin{align}
\label{Eq:approx-error}
\|\eta-1/2 - h_0\|_{\rho} \leq C_1 n^{-r}, \ \text{and} \ \|h_0\|_{\ell^1} \leq C_2,
\end{align}
for some positive constants $r, C_1, C_2$.
\end{assumption}

Assumption \ref{assump:approx-error} requires that   $\eta(x)-1/2$ should be sparsely approximated by the set of weak learners with certain fast decay of some polynomial order.
Such an assumption is regular in the analysis of boosting algorithm and has been adopted in large literature \cite{Zhang2005,Bartlett2007,Barron2008,Temlyakov2008a,Livshits2009,Bagirov2010,Shalev-Shwartz2010,Lin2013,Mukherjee2013,Wang2019}.
Under this assumption, we can derive the following learning rate for FCGBoosting
\begin{theorem}
\label{Thm:main-corollary}
Let ${\cal G}_n := \{g_j\}_{j=1}^n$ be a set of weak learners with $\|g_j\|_\infty\leq 1$, $j=1,\dots,n$.  Under Assumption \ref{assump:approx-error}, if $n\sim m^a$ for $a\geq 1$, $r \geq \frac{1}{4a}$ and $k\sim\sqrt{\frac{m}{\log m}}$, then
  for any $0<\delta<1$, with confidence at least $1-\delta$, there holds
\begin{align*}
{\cal R}(\mathrm{sgn}( f_{D,k})) - {\cal R}(f_c)\leq  C_3 \left(\frac{m}{\log m} \right)^{-1/4} \log \frac{4}{\delta},
\end{align*}
where $C_3$ is a positive constant independent of $\delta$ or $m$.
\end{theorem}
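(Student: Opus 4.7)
The plan is to pass from the misclassification excess to the excess squared-hinge risk via a standard surrogate-loss comparison, decompose that quantity into an approximation, an FCG optimization and two sample parts, and then balance using the prescribed $n$ and $k$. Because $\phi_{h^2}$ is classification-calibrated with conditional minimizer $f_\phi^*(x)=2\eta(x)-1$ and behaves quadratically around this minimizer, I would first establish the surrogate inequality
\begin{align*}
\mathcal R(\mathrm{sgn}(f))-\mathcal R(f_c) \leq \sqrt{2}\bigl(\mathcal E^{\phi_{h^2}}(f)-\mathcal E^{\phi_{h^2}}(f_\phi^{*})\bigr)^{1/2},
\end{align*}
so the goal reduces to proving $\mathcal E^{\phi_{h^2}}(f_{D,k})-\mathcal E^{\phi_{h^2}}(f_\phi^{*})=O((m/\log m)^{-1/2}\log^2(4/\delta))$ with confidence $1-\delta$. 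I would also exploit the clipping property $\phi_{h^2}(y\pi f)\le \phi_{h^2}(yf)$ of the squared hinge, where $\pi$ truncates to $[-1,1]$, to replace $f_{D,k}$ by $\pi f_{D,k}$ in the concentration step, obtaining the needed uniform $L^\infty$ bound for free.

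With $h_0$ the sparse approximant of Assumption~\ref{assump:approx-error}, I would split
\begin{align*}
&\mathcal E^{\phi_{h^2}}(\pi f_{D,k})-\mathcal E^{\phi_{h^2}}(f_\phi^{*}) = \underbrace{[\mathcal E^{\phi_{h^2}}(\pi f_{D,k})-\mathcal E^{\phi_{h^2}}_D(\pi f_{D,k})]}_{S_1} \\
&\quad + \underbrace{[\mathcal E^{\phi_{h^2}}_D(f_{D,k})-\mathcal E^{\phi_{h^2}}_D(2h_0)]}_{S_2} + \underbrace{[\mathcal E^{\phi_{h^2}}_D(2h_0)-\mathcal E^{\phi_{h^2}}(2h_0)]}_{S_3} \\
&\quad + \underbrace{[\mathcal E^{\phi_{h^2}}(2h_0)-\mathcal E^{\phi_{h^2}}(f_\phi^{*})]}_{S_4}.
\end{align*}
The approximation term $S_4$ is bounded by a constant multiple of $\|2h_0-f_\phi^{*}\|_\rho^2$ using the quadratic structure of $\phi_{h^2}$, hence by $O(n^{-2r})$ by Assumption~\ref{assump:approx-error}; with $n\sim m^a$ and $r\ge 1/(4a)$ this is $O(m^{-1/2})$. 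The FCG optimization term $S_2$ is controlled by the fully-corrective greedy convergence rate for smooth convex objectives: since $2h_0\in\mathrm{span}\,\mathcal G_n$ and $\|h_0\|_{\ell^1}\le C_2$, one obtains $S_2=O(1/k)$, which with $k\sim\sqrt{m/\log m}$ is $O((m/\log m)^{-1/2})$. The fixed-function sample term $S_3$ follows from a one-shot Bernstein's inequality on the bounded random variable $\phi_{h^2}(y,2h_0(x))$, giving $O(\sqrt{\log(1/\delta)/m})$.

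The uniform sample term $S_1$ is the delicate step, and it is where the $\log m$ factor in the final rate originates. I would enclose $\pi f_{D,k}$ in the data-independent class
\begin{align*}
\mathcal F_{k,R}=\{\pi f : f\in\mathrm{span}\{g_j\}_{j\in T},\ |T|\le k,\ \|f\|_{\ell^1}\le R\},
\end{align*}
bound its covering number by $\log\mathcal N(\varepsilon,\mathcal F_{k,R},\|\cdot\|_\infty)=O(k\log(nR/\varepsilon))$ via a union bound over the $\binom{n}{k}\le n^k$ supports composed with per-support $\ell^1$-ball coverings, and combine this with the variance-to-mean (Bernstein-type) bound $\mathrm{Var}[\phi_{h^2}(y,f(x))-\phi_{h^2}(y,f_\phi^{*}(x))]\lesssim \mathcal E^{\phi_{h^2}}(f)-\mathcal E^{\phi_{h^2}}(f_\phi^{*})$ that $\phi_{h^2}$ inherits from the square loss. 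The resulting ratio-type one-sided inequality yields $S_1=O((R^2 k\log(nm)+\log(1/\delta))/m)$; substituting $n\sim m^a$, $k\sim\sqrt{m/\log m}$, and an a posteriori bound on $R=\|f_{D,k}\|_{\ell^1}$ produces the required $O((m/\log m)^{-1/2}\log^2(4/\delta))$. Summing $S_1+S_2+S_3+S_4$, taking the square root via the surrogate inequality and absorbing constants into $C_3$ delivers the claim. The main obstacle is precisely the control of $R=\|f_{D,k}\|_{\ell^1}$, since FCG does not explicitly cap this norm; handling it through the strong convexity of $\phi_{h^2}$ restricted to $[-1,1]$ (a property absent for the exponential and logistic losses) is what makes the square-type framework applicable and what forces the prescribed choice $k\sim\sqrt{m/\log m}$.
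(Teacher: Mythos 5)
Your overall architecture matches the paper's: pass from $\mathcal R(\mathrm{sgn}(f))-\mathcal R(f_c)$ to the excess squared-hinge risk via a comparison inequality (the paper cites Chen et al.\ for this), clip $f_{D,k}$ to $[-1,1]$ using the monotonicity of the squared hinge on $\{-1,1\}$-labelled data, and split the excess clipped risk into approximation, FCG-optimization and two sample terms. Your $S_4,\,S_2,\,S_3,\,S_1$ are, up to relabelling, exactly the paper's $\mathcal A(\mathcal G_n,h)$, $\mathcal H(D,k,h)$, $\mathcal S_D(h)$ and $-\mathcal S_D(\pi f_{D,k})$. The quadratic-type inequality, the $O(1/k)$ FCG rate (Proposition~1 of the paper), and the variance-to-mean bound inherited from the square loss are all deployed the same way. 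Your use of $2h_0$ rather than $h_0$ as the comparator is actually the right move given that $f_\rho=2\eta-1=2(\eta-1/2)$ while Assumption~1 approximates $\eta-1/2$.

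The genuine gap is in your treatment of $S_1$. You enclose $\pi f_{D,k}$ in the $\ell^1$-constrained class $\mathcal F_{k,R}$ with $R=\|f_{D,k}\|_{\ell^1}$, invoke an $L^\infty$ covering number that grows with $\log R$, and then acknowledge that bounding $R$ ``is the main obstacle.'' That obstacle is not closed by your proposal: the fully-corrective solve in \eqref{FCB} places no explicit cap on the $\ell^1$ norm of the coefficient vector, and strong convexity of $\phi_{h^2}$ restricted to $[-1,1]$ bounds function values, not coefficient magnitudes (the map $\beta\mapsto A\beta$ can have arbitrarily small singular values on the selected columns). The paper sidesteps this exactly: its hypothesis class $\mathcal F_k=\bigcup_{|\Lambda|\le k}\pi\mathcal J_\Lambda$ has \emph{no} $\ell^1$ constraint, and its covering estimate (Lemma~6, from Barron et al.) bounds the $L^1$ covering number of a truncated $k$-dimensional linear span purely through the pseudo-dimension $k+1$, hence entirely independently of $\|f_{D,k}\|_{\ell^1}$. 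This is why the paper proves a bespoke concentration inequality (Theorem~3) stated in terms of $L^1$ empirical covering numbers rather than $L^\infty$ ones, and explicitly remarks that without a boundedness assumption on $f_{D,k}$ an $L^\infty$ covering estimate is out of reach. To repair your argument, replace $\mathcal F_{k,R}$ by the unconstrained truncated class and switch from an $L^\infty$-covering/Bernstein pipeline to an $L^1$-covering/ratio-type inequality; as written, your $S_1$ bound depends on a quantity you cannot control.
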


The proof of this theorem will be  presented in Section VI.
This theorem provides some early stopping of the proposed version of boosting method under the assumption that the Bayes decision function can be well approximated by combining weak learners.
From Theorem \ref{Thm:main-corollary}, an optimal $k$ should be set as in the order of ${\cal O}(\sqrt{\frac{m}{\log m}})$,
which shows that the number of selected weak learners  is significantly less than $m$ and $n$.  It should be noted that the derived learning rate in Theorem \ref{Thm:main-corollary} is of the same order of FCGBoosting with the square loss \cite{Barron2008} under the same setting. To further improve the learning rate, the following Tsybakov noise condition \cite{Tsybakov2004} is generally required.

\begin{assumption}
\label{assump:Tsybakov-noise}
Let $0\leq q \leq \infty.$ There exists a positive constant $\hat{c}_q$ such that \begin{align*}
\rho_X(\{x\in X: |\eta(x)-1/2| \leq \hat{c}_q t\}) \leq t^q, \quad \forall t>0.
\end{align*}
\end{assumption}

The Tsybakov noise assumption measures
  the size of the set of points that
 are corrupted with
high noise in the labeling process, and
  always holds for $q=0$ with
$\hat{c}_q=1$. It is a standard noise assumption in classification which has been adopted in \cite{Steinwart2007,Xiang2011,Lin2017,Zeng2019} to derive  fast learning rates for classification algorithms.
Under the Tsybakov noise assumption, we can improve the rate as follows.
\begin{theorem}
\label{Thm:main-Tsybakov}
Let ${\cal G}_n := \{g_j\}_{j=1}^n$ be a set of weak learners with $\|g_j\|_\infty\leq 1$, $j=1,\dots,n$.  Under Assumption \ref{assump:approx-error} and  Assumption \ref{assump:Tsybakov-noise}  with   $0\leq q<\infty$, if $n\sim m^a$ for $a\geq 1$, $r \geq \frac{1}{4a}$ and
$k\sim\sqrt{\frac{m}{\log m}}$.
Then for any $0<\delta<1$, with confidence at least $1-\delta$, there holds
\begin{align*}
{\cal R}(\mathrm{sgn}(  f_{D,k})) - {\cal R}(f_c)\leq  C_4 \left(\frac{m}{\log m} \right)^{-\frac{q+1}{2(q+2)}} \log \frac{4}{\delta},
\end{align*}
where $C_4$ is a positive constant independent of $\delta$ or $m$.
\end{theorem}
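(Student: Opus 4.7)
The plan is to piggyback on the analysis underlying Theorem \ref{Thm:main-corollary} and improve only the final calibration step. The proof of Theorem \ref{Thm:main-corollary} actually controls the excess squared-hinge risk rather than the misclassification error directly: under the same hypotheses on $n,r,k$ it delivers, with confidence at least $1-\delta$,
\begin{align*}
\mathcal{E}^{\phi_{h^2}}(\pi f_{D,k}) - \mathcal{E}^{\phi_{h^2}}(f_\phi^*) \leq C\,(m/\log m)^{-1/2}\log(4/\delta),
\end{align*}
where $\pi$ truncates to $[-1,1]$ and $f_\phi^*(x)=2\eta(x)-1$ is the Bayes squared-hinge minimizer; Theorem \ref{Thm:main-corollary} then follows from the $q=0$ square-root calibration and the identity $\mathrm{sgn}(\pi f_{D,k})=\mathrm{sgn}(f_{D,k})$. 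This excess $\phi$-risk bound uses only the quadratic-type variance estimate $\mathrm{Var}(\phi_{h^2}(y,\pi f(x))-\phi_{h^2}(y,f_\phi^*(x)))\leq C(\mathcal{E}^{\phi_{h^2}}(\pi f)-\mathcal{E}^{\phi_{h^2}}(f_\phi^*))$, which is independent of Assumption \ref{assump:Tsybakov-noise} and therefore carries over verbatim.

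The only new ingredient is a noise-adapted calibration
\begin{align*}
\mathcal{R}(\mathrm{sgn}(f))-\mathcal{R}(f_c)\leq c_q\bigl(\mathcal{E}^{\phi_{h^2}}(f)-\mathcal{E}^{\phi_{h^2}}(f_\phi^*)\bigr)^{(q+1)/(q+2)}.
\end{align*}
I would prove this by starting from the identity $\mathcal{R}(\mathrm{sgn}(f))-\mathcal{R}(f_c)=\int|2\eta-1|\,\mathbf{1}\{\mathrm{sgn}(f)\neq f_c\}\,d\rho_X$, observing that $\mathrm{sgn}(f)\neq f_c$ forces $|f-f_\phi^*|\geq|2\eta-1|$, and splitting $X$ at the threshold $\{|2\eta-1|<t\}$. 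Assumption \ref{assump:Tsybakov-noise} bounds the low-margin piece by $t\cdot\rho_X\{|2\eta-1|<t\}\leq\hat{c}_q^{-q}\,t^{q+1}$, while Markov's inequality bounds the high-margin piece by $t^{-1}\|f-f_\phi^*\|_\rho^2$; the quadratic type of $\phi_{h^2}$ then gives $\|f-f_\phi^*\|_\rho^2\leq C(\mathcal{E}^{\phi_{h^2}}(f)-\mathcal{E}^{\phi_{h^2}}(f_\phi^*))$, and optimizing over $t>0$ yields the $(q+1)/(q+2)$ exponent.

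Inserting the excess $\phi$-risk bound into this calibration produces the claimed rate
\begin{align*}
\mathcal{R}(\mathrm{sgn}(f_{D,k}))-\mathcal{R}(f_c)\leq C_4\,(m/\log m)^{-(q+1)/(2(q+2))}\log(4/\delta),
\end{align*}
with $C_4$ absorbing the constants from both steps; the cases $q=0$ and $q\to\infty$ correctly recover Theorem \ref{Thm:main-corollary} and the $(m/\log m)^{-1/2}$ rate advertised in the abstract. The main obstacle is conceptual rather than technical: one has to isolate, from the proof of Theorem \ref{Thm:main-corollary}, the intermediate excess $\phi$-risk bound and verify that it does not already implicitly use Assumption \ref{assump:Tsybakov-noise}, so that the Tsybakov condition can be injected purely through the calibration step. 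Once that isolation is done, the remainder is a short and self-contained computation.
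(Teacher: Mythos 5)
Your proposal is correct and follows essentially the same route as the paper: the paper's proof of Theorem \ref{Thm:main-corollary} explicitly records the intermediate excess $\phi$-risk bound (its equation \eqref{Eq:generalization-error}, established via Theorem \ref{Thm:main} without any use of Assumption \ref{assump:Tsybakov-noise}), and the proof of Theorem \ref{Thm:main-Tsybakov} then simply combines that bound with the Tsybakov-calibrated comparison inequality giving exponent $\frac{q+1}{q+2}$, exactly as you describe. The only cosmetic difference is that you sketch a from-scratch derivation of the comparison inequality (split at a margin threshold $t$, apply Assumption \ref{assump:Tsybakov-noise} and Markov plus the quadratic-type bound \eqref{quadratic type}, then optimize $t$), whereas the paper cites it from Bartlett et al., Xiang, and Steinwart--Christmann.
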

The proof of this theorem is also postponed to Section VI.
Note that when $q=0$, the learning rate established in Theorem \ref{Thm:main-Tsybakov} reduces to that of Theorem \ref{Thm:main-corollary},
while when $q=\infty$, the obtained learning rate in Theorem \ref{Thm:main-Tsybakov} approaches to $\left(\frac{m}{\log m} \right)^{-1/2}$,
which is a new record for the boosting-type methods under the classification setting.

\section{Toy Simulations}
\label{sc:simulation}

In this section, we present a series of toy simulations to demonstrate   the feasibility and effectiveness of FCGBoosting.
All the numerical experiments were carried out in Matlab R2015b environment
running Windows 8, Intel(R) Xeon(R) CPU E5-2667 v3 @ 3.2GHz 3.2GH.

\subsection{Experimental settings}
\label{sc:experiment-set}

The settings of simulations are similar to that in \cite{Zeng2019} described as follows.

%\subsection{Experimental setting}

%The settings of simulations are described as follows.

{\bf Samples}: In simulations, the training  samples were generated as follows.
Let
$$
   \zeta(t)=\left((1-2t)_+^5(32t^2+10t+1)+1\right)/2,\qquad t\in[0,1]
$$
be a nonlinear Bayes rule.
Let ${\bf x}=\{x_i\}_{i=1}^m \subset ([0,1]\times [0,1])^m$
be drawn i.i.d. according to the uniform
distribution with size $m$.
Then we labeled the samples lying in the epigraph of function $\zeta(t)$ as the positive class, while the rest were labeled as the negative class,
that is, given an $x_i = (x_i(1),x_i(2))$, its label $y_i =1$ if $x_i(2) \geq \zeta(x_i(1))$, and otherwise, $y_i=-1.$
Besides the uniformly random noise generally considered in regression, we mainly focused on the outlier noise in our simulations, that is,
the noisy samples lying in the region that is far from the Bayes (see, Fig. \ref{fig:data}).
We considered different widths (i.e., \textit{tol}) and noise ratios (i.e., \textit{ratio}) with the banded region.

%Particularly, we consider
%Moreover, for the training samples, we added $r\%$ noise, that is, we selected $m*r\%$ training samples via a uniformly random way and reversed their labels.
%The testing samples ${\bf x}'=\{x_i'\}_{i=1}^{m'} \subset ([0,1]\times [0,1])^{m'}$ were generated according to the same procedure of the training samples without adding noise.

\begin{figure}[!t]
\begin{minipage}[b]{0.49\linewidth}
\centering
\includegraphics*[scale=0.34]{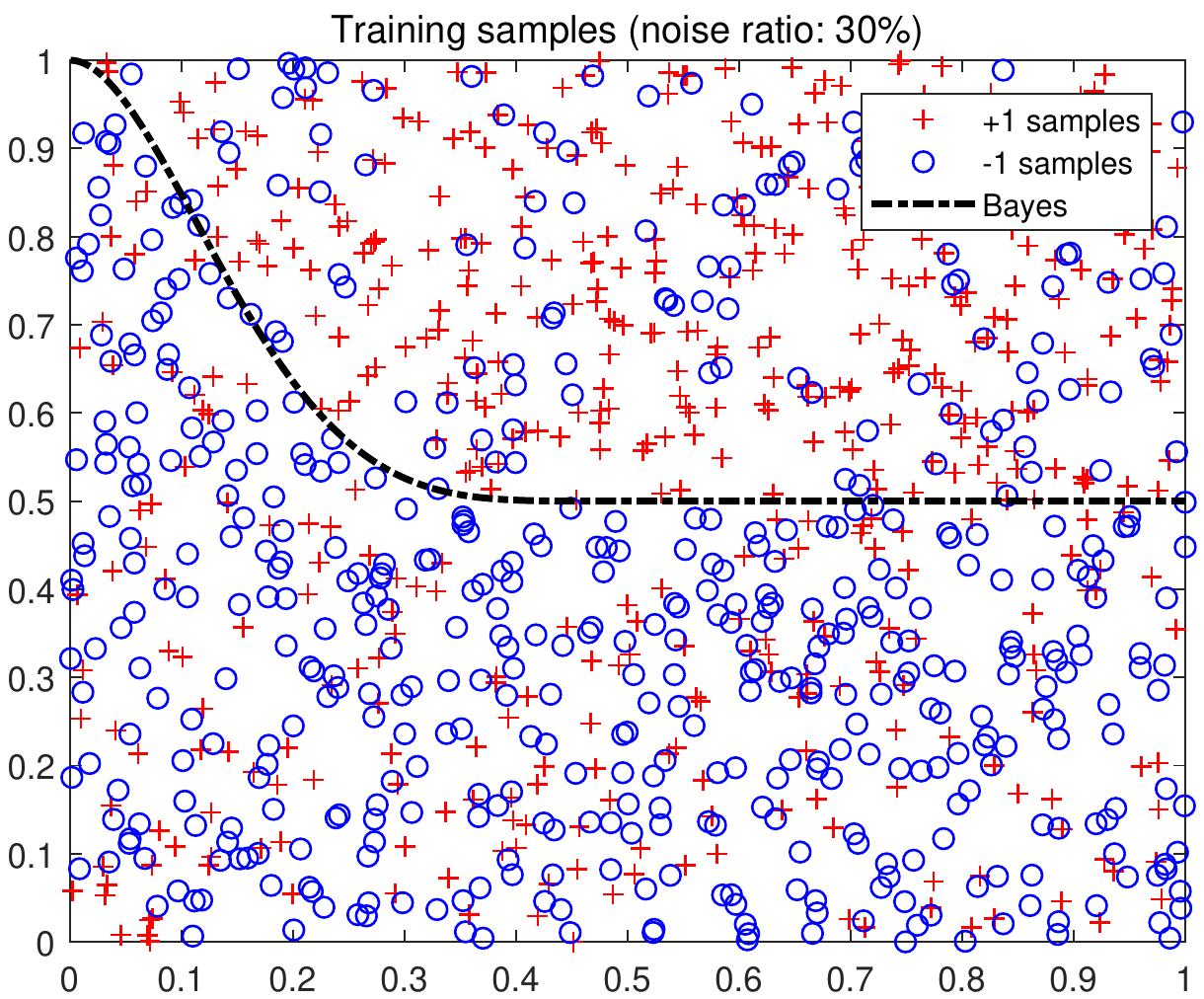}
%  \vspace{-.5cm}
\centerline{{\small (a) uniform random noise}}
\end{minipage}
\hfill
\begin{minipage}[b]{0.49\linewidth}
\centering
\includegraphics*[scale=0.34]{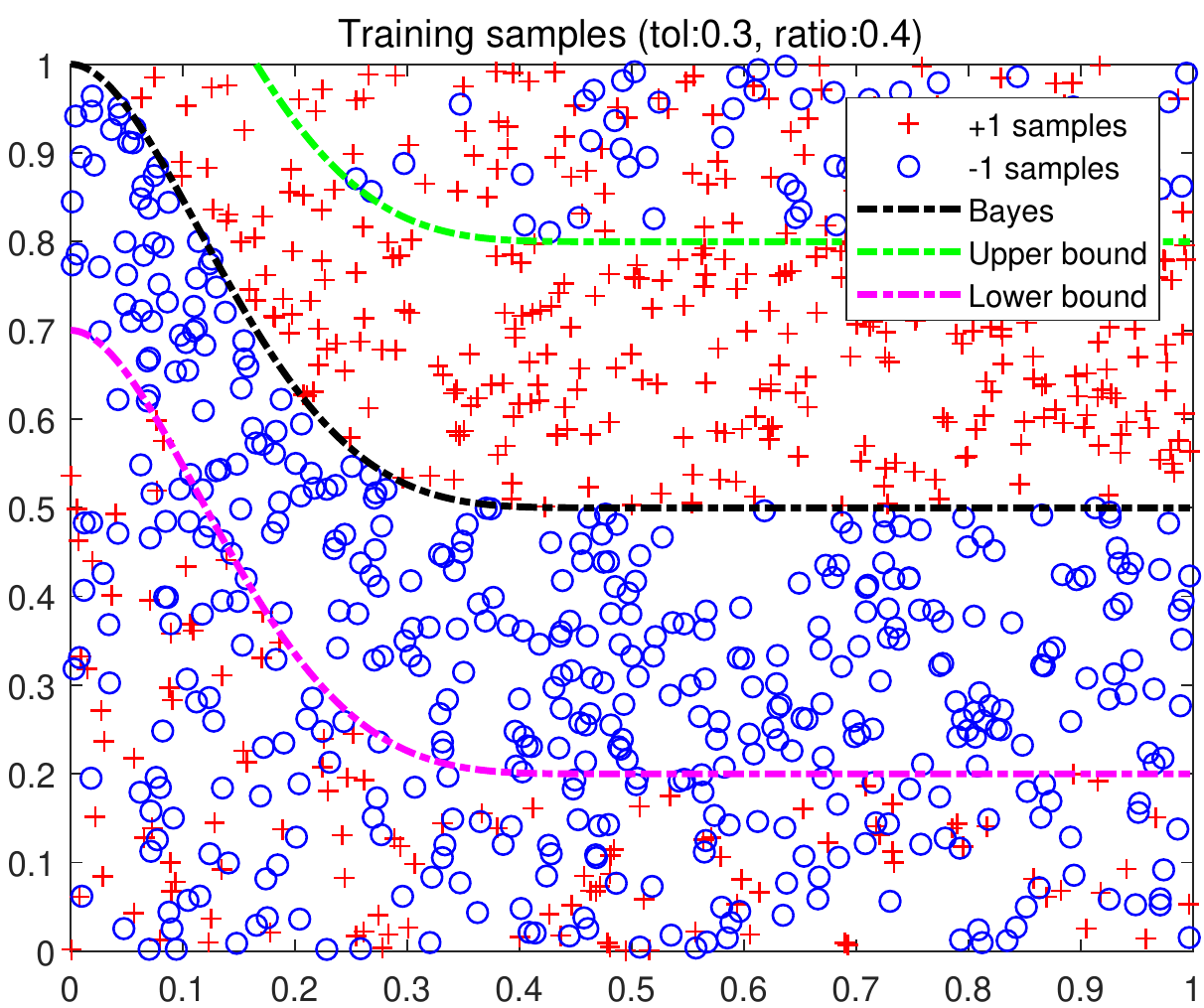}
%  \vspace{-.5cm}
\centerline{{\small (b) outlier noise}}
\end{minipage}
\hfill \caption{ The generated training samples with \textit{uniform random} noise (see, figure (a)) and \textit{outlier} noise (see, figure (b)) used in simulations. The red points are labeled as ``+1'' class, while the blue points are labeled as ``-1'' class. In the title of figure (b), the notation \textit{tol=0.3} represents that the difference between the Bayes rule and upper (lower) bound is 0.3, while the notation \textit{ratio=0.4} represents that the noise ratios in both left-lower and right-upper regions are 0.4. The total noise level in this case is 17.4\%.
} \label{fig:data}
\end{figure}

{\bf Implementation and Evaluation}:
We implemented four simulations to illustrate the effect of parameters and show the effectiveness of the proposed version of boosting method.
For each simulation, we repeated $20$ times of experiments and recorded the test error, which is defined as the ratio of the number of misclassified test labels to the test sample size.
The first one is to illustrate the effect of the number of iterations $k$, which is generally exploited for setting the stopping rule of the proposed method.
The second one is to demonstrate the effect of the number of dictionaries $n$.
The third one is to show the feasibility and effectiveness of the used squared hinge loss (i.e., $\phi_{h^2}(x) = (1-x)_+^2$) via comparing with some other loss functions including the square loss with $\phi_{sq}(x)=(1-x)^2$, the hinge loss with $\phi_{h}(x)=(1-x)_+$ and the cubed hinge loss with $\phi_{h^3}(x)=(1-x)_+^3$ \cite{Janocha2016}.
The final one is to show the outperformance of the fully-corrective update scheme via comparing to the existing popular update scheme used in gradient boosting.
Since the performance of boosting-type methods also depends on the dictionary type, in this paper, we considered four types of dictionaries, that is, the dictionaries formed by the Gaussian kernel, polynomial kernel, and the neural network kernels with sigmoid and rectified linear unit (ReLU) activations, respectively, and henceforth, they are respectively called \textit{Gauss, polynomial, sigmoid, Relu} for short.
We set empirically the parameters of ADMM algorithm (i.e., Algorithm \ref{alg:ADMM}) used in the FCG optimization step as follows: $\alpha =1$, $\gamma=1$ and the maximal number of iterations was set as 100.

\subsection{Simulation Results}
\label{sc:simul-result}

In this part, we report the experimental results and present some discussions.

{\bf Simulation 1: On effect of number of iterations $k$.}
From Algorithm \ref{alg:boosting}, the number of iterations $k$ is a very important algorithmic parameter, which is generally set as the stopping rule of the boosting type of methods.
By Theorems \ref{Thm:main-corollary} and  \ref{Thm:main-Tsybakov}, a moderately large $k$ (i.e., $k \sim \sqrt{\frac{m}{\log m}}$) is required to achieve the optimal generalization performance.
To illustrate the effect of the number of iterations $k$, we randomly generated training and test samples with both sizes being $m=1000$.
We considered both noise types in training samples, i.e., uniformly random noise with the noise level 30\%, and the outlier noise with $tol=0.3$ and $ratio = 0.4$ (in this case, the level of outlier noise is 17.4\%), as described in Section \ref{sc:experiment-set} .
Moreover, we considered four different dictionaries formed by Gaussian kernel, polynomial kernel, neural network kernel with sigmoid and neural network kernel with ReLU activation, respectively, where the sizes of all four dictionaries are the same $n=1000$.
We varied $k$ according to the set of size 11, i.e.,
$\left\{\left\lceil \frac{1}{2}\sqrt{\frac{m}{\log m}}\right \rceil, \left\lceil \sqrt{\frac{m}{\log m}} \right\rceil, 2 \left\lceil \sqrt{\frac{m}{\log m}} \right \rceil,
\ldots, 10 \left\lceil \sqrt{\frac{m}{\log m}} \right \rceil\right\}$,
and recorded the associated test error.
In this experiment, $\left\lceil \sqrt{\frac{m}{\log m}} \right\rceil = 13$ since $m=1000$.
The curves of test error are shown in Fig. \ref{fig:effectofk}.

From Fig. \ref{fig:effectofk}, the trends of test error for different dictionaries are generally similar, that is, as $k$ increasing from $\left\lceil \frac{1}{2}\sqrt{\frac{m}{\log m}}\right \rceil$ to $10 \left\lceil \sqrt{\frac{m}{\log m}} \right \rceil$, the test error generally decreases firstly and then becomes stable.
This phenomenon is mainly due to that when $k$ is small, the selected model might be under-fitting, and then increasing $k$ shall improve the generalization ability.
More specifically, in both uniform and outlier noise cases, it is generally sufficient to set the iteration number $k$ as $5 \left\lceil \sqrt{\frac{m}{\log m}} \right \rceil$ by Fig. \ref{fig:effectofk}.
%More specifically, in the uniform random noise case, the break points of these four dictionaries (\textit{Gauss, polynomial, sigmoid, Relu }) are roughly $3 \left\lceil \sqrt{\frac{m}{\log m}} \right \rceil$, $ \left\lceil \sqrt{\frac{m}{\log m}} \right \rceil$, $ 5 \left\lceil \sqrt{\frac{m}{\log m}} \right \rceil$, and $3 \left\lceil \sqrt{\frac{m}{\log m}} \right \rceil$, respectively,
%while in the outlier noise case, the associated break points become $\left\lceil \sqrt{\frac{m}{\log m}} \right \rceil$, $ 5\left\lceil \sqrt{\frac{m}{\log m}} \right \rceil$, $ 5 \left\lceil \sqrt{\frac{m}{\log m}} \right \rceil$, and $ \left\lceil \sqrt{\frac{m}{\log m}} \right \rceil$, respectively.
This in some extent verifies our main theorems (i.e., Theorems \ref{Thm:main-corollary} and \ref{Thm:main-Tsybakov}), which show that the moderately large $k$ is in the order of $\left\lceil \sqrt{\frac{m}{\log m}} \right \rceil$.
Motivated by this experiment, in practice, the maximal number of iterations $k$ for the proposed boosting method can be empirically chosen from these five values $\left\{\left\lceil \sqrt{\frac{m}{\log m}} \right \rceil, 2 \left\lceil \sqrt{\frac{m}{\log m}} \right \rceil,\ldots, 5\left\lceil \sqrt{\frac{m}{\log m}} \right \rceil \right\}$ via cross validation.
When comparing with these differen dictionaries, the generalization performance of \textit{Relu} are slightly better than the other three dictionaries in both noise cases.

%By Theorems \ref{Thm:main} and \ref{Thm:main-Tsybakov}, under certain assumptions,  the moderate $k$ should be taken in the order of ${\cal O}\left(\sqrt{\frac{m}{\log m}}\right)$, which is about several tens in this case.

\begin{figure}[!t]
\begin{minipage}[b]{0.49\linewidth}
\centering
\includegraphics*[scale=.32]{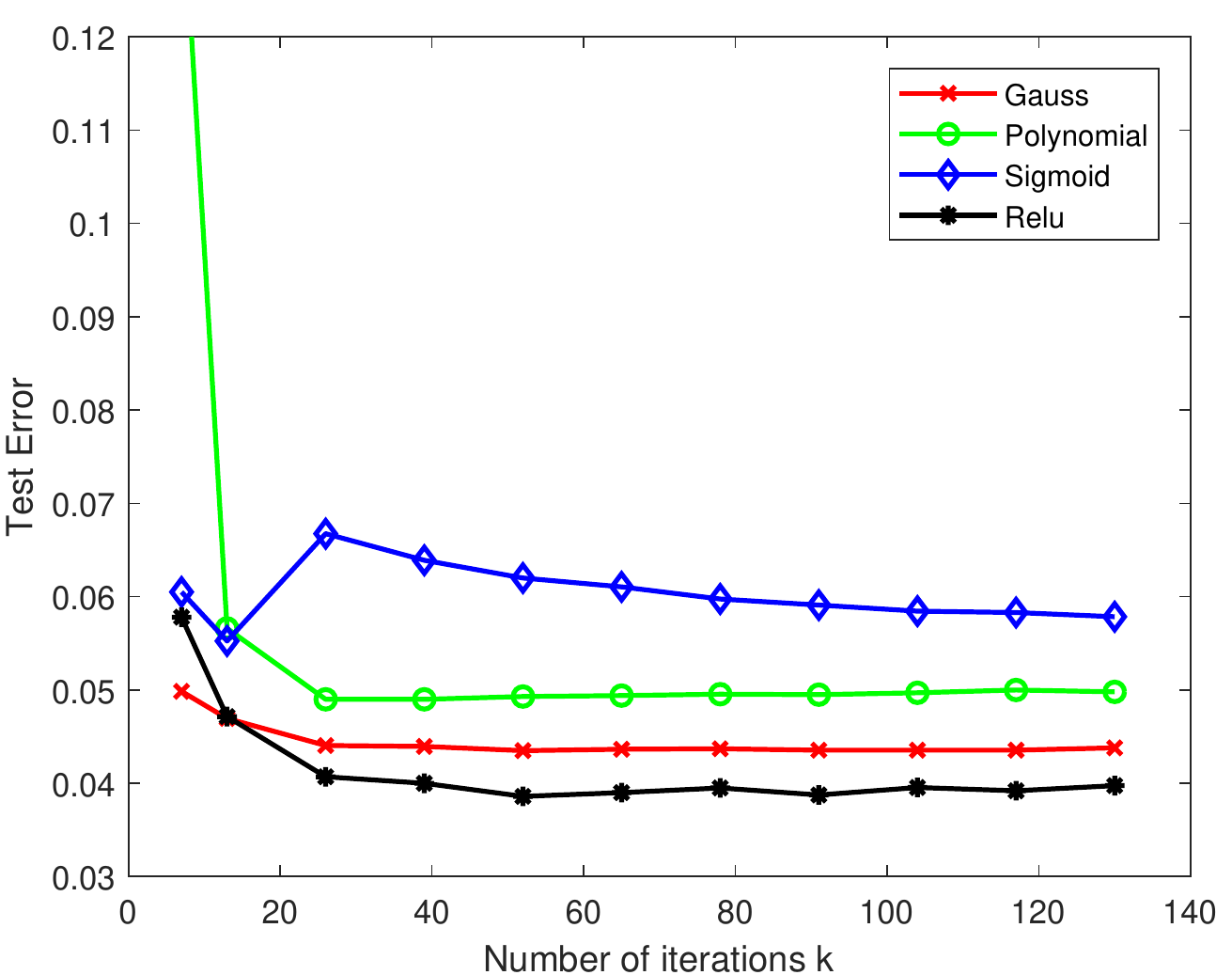}
%  \vspace{-.5cm}
\centerline{{\small (a) 30\% Uniform random noise}}
\end{minipage}
\hfill
\begin{minipage}[b]{0.49\linewidth}
\centering
\includegraphics*[scale=0.32]{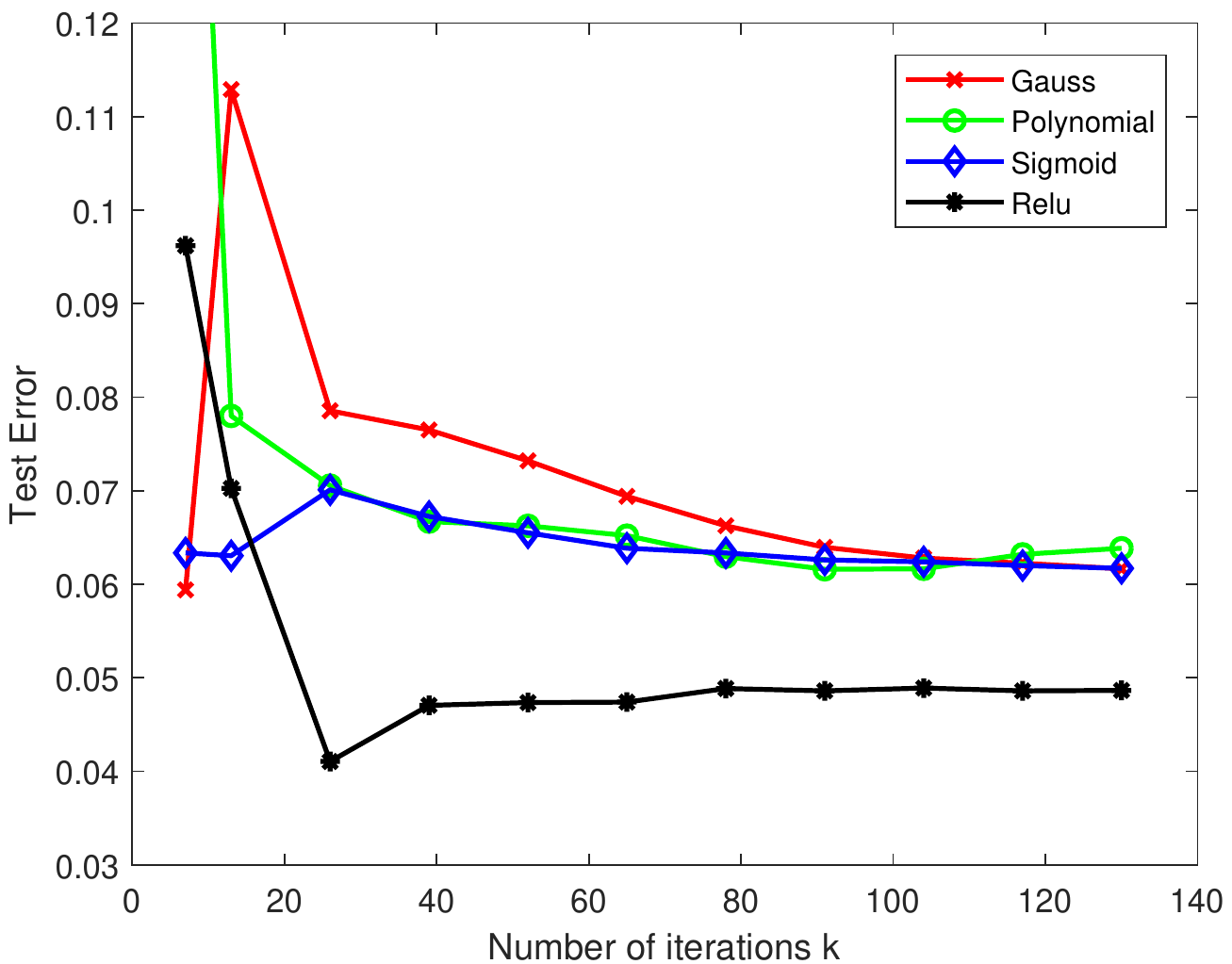}
%  \vspace{-.5cm}
\centerline{{\small (b) 17.4\% Outlier noise}}
\end{minipage}
\hfill
\caption{ Effect of the number of iterations $k$. (a) the curves of test error of four different dictionaries with respect to $k$ under the 30\% uniform random noise, (b) the curves of test error under the 17.4\% outlier noise where $tol=0.3$ and $ratio = 0.4$.
}
\label{fig:effectofk}
\end{figure}

{\bf Simulation 2: On effect of size of dictionary set $n$.}
Given a dictionary type, the size of dictionary set $n$ generally reflects the approximation ability of the given dictionary set.
Particularly, according to Assumption \ref{assump:approx-error}, one prerequisite condition for the boosting type methods is that the underlying learner should be well-approximated by the chosen dictionary set.
However, a larger dictionary set usually brings more computational cost.
Thus, it is meaningful to verify the possible optimal size of dictionary set.
To illustrate this, in this experiment, the training and test samples were generated in the same way of \textbf{Simulation 1}.
Instead of varying the number of iterations $k$, we varied the size of dictionary set $n$ from $m$ to $10m$, where $m=1000$ is the size of training samples. For each $n$, the number of iterations $k$ was chosen from these five values $\left\{\left\lceil \sqrt{\frac{m}{\log m}} \right \rceil, 2 \left\lceil \sqrt{\frac{m}{\log m}} \right \rceil,\ldots, 5\left\lceil \sqrt{\frac{m}{\log m}} \right \rceil \right\}$ via cross validation.
The curves of test error with respective to the number of dictionary sets $n$ are shown in Fig. \ref{fig:effectofn}.

From Fig. \ref{fig:effectofn}, the number of dictionary set $n$ has little effect on the generalization performance for all types of dictionaries and in both noise settings, when $n$ is in the order of ${\cal O}(m)$. This is also verified by our main theorems (i.e., Theorems \ref{Thm:main-corollary} and \ref{Thm:main-Tsybakov}). By Theorems \ref{Thm:main-corollary} and \ref{Thm:main-Tsybakov}, $n$ should be in the order of  ${\cal O} (m^a)$ for some $a\geq 1$ to achieve the optimal learning rates.
Specifically, in this experiment, we show that the generalization performance of the proposed method does not vary very much when $n$ varies from $m$ to $m^{4/3}$.
In our latter experiments, we empirically set $n=m$ in the consideration of both generalization performance and computational cost.
Regarding the performance of the different dictionaries, we observed that the performance of \textit{Relu} and \textit{Gauss} is slightly better than that of \textit{polynomial} and \textit{sigmoid}.

\begin{figure}[!t]
\begin{minipage}[b]{0.49\linewidth}
\centering
\includegraphics*[scale=.32]{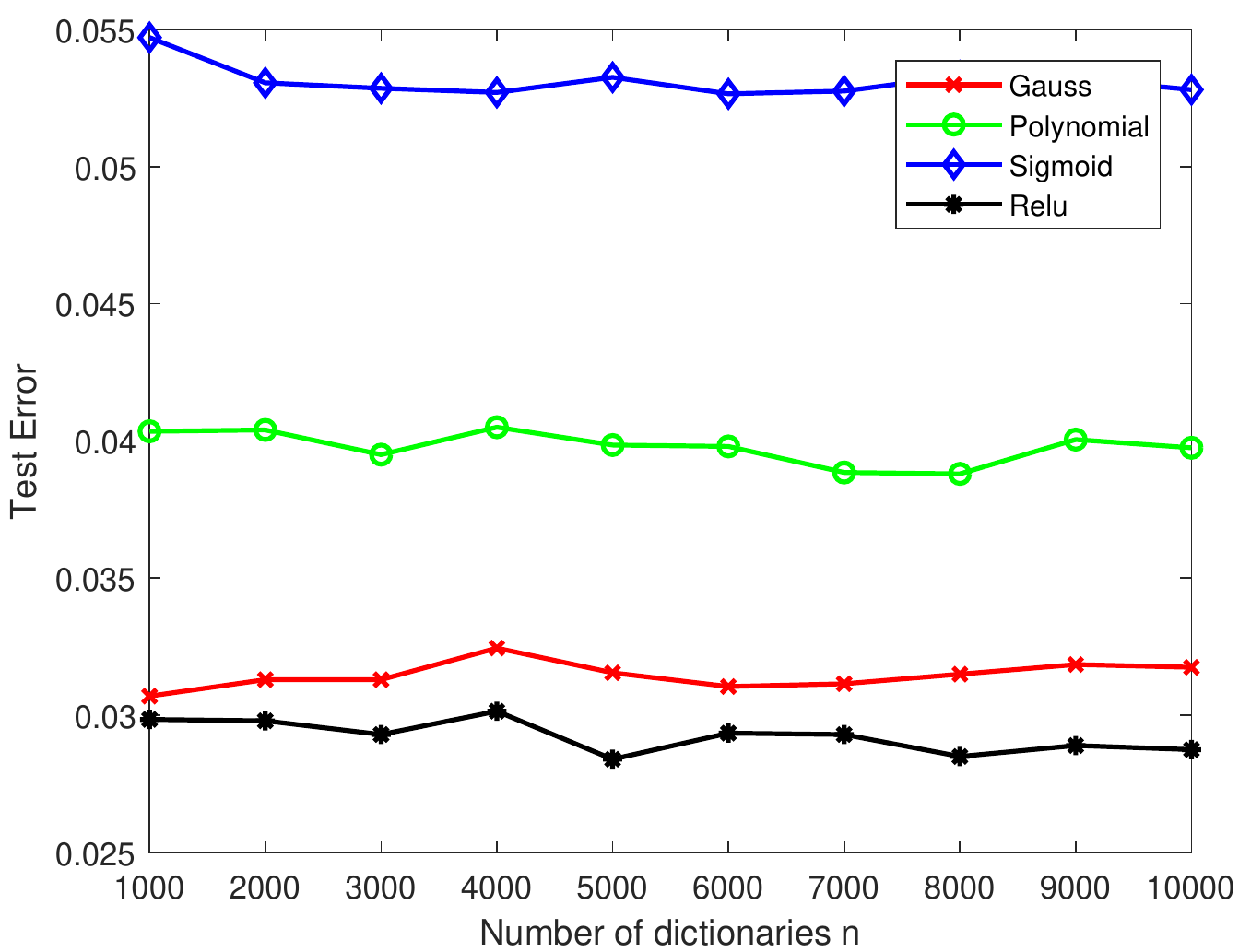}
%  \vspace{-.5cm}
\centerline{{\small (a) 30\% Uniform random noise}}
\end{minipage}
\hfill
\begin{minipage}[b]{0.49\linewidth}
\centering
\includegraphics*[scale=0.32]{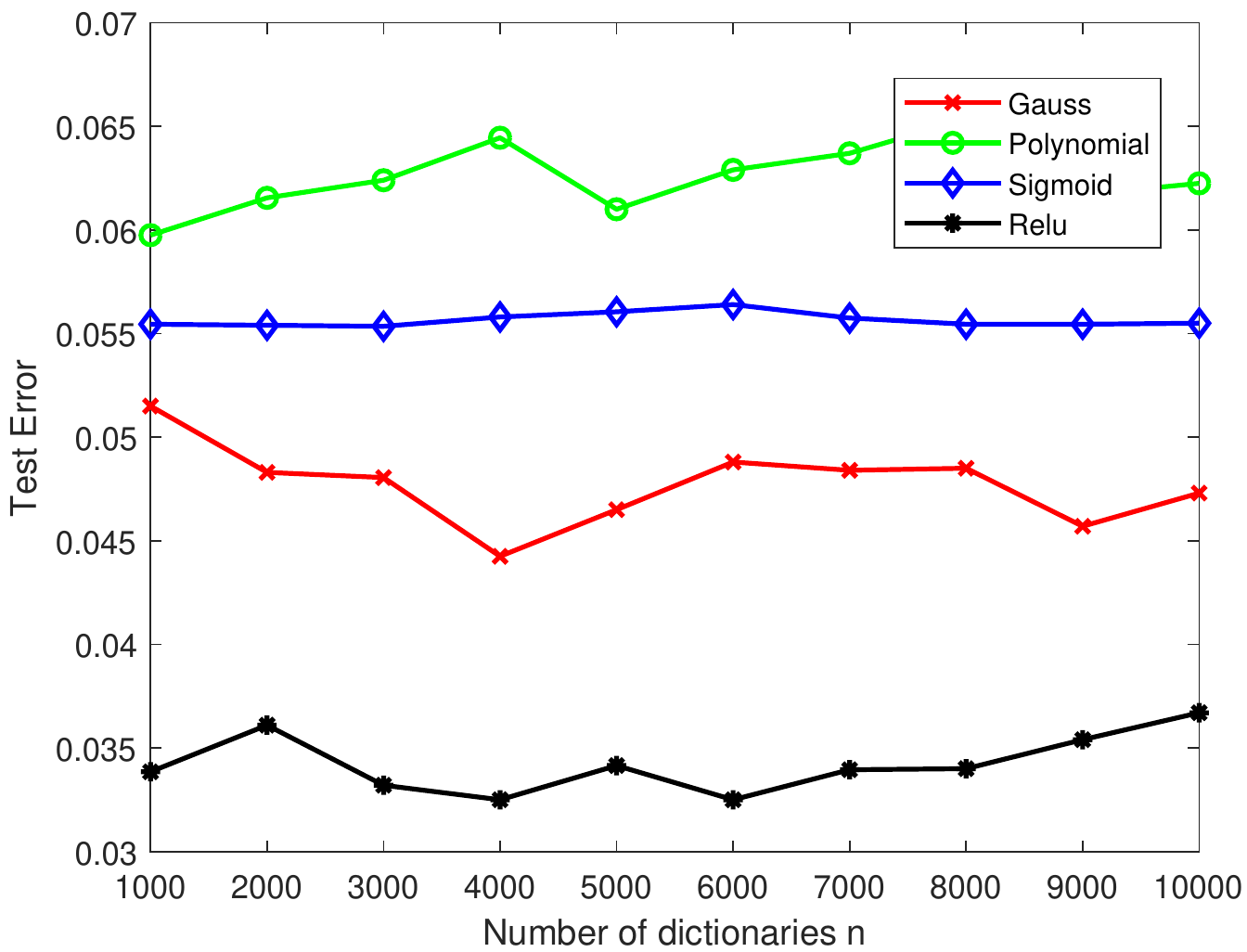}
%  \vspace{-.5cm}
\centerline{{\small (b) 17.4\% Outlier noise}}
\end{minipage}
\hfill
\caption{ Effect of the number of dictionaries $n$. (a) the curves of test error of four different dictionaries with respect to $n$ under the 30\% uniform random noise, (b) the curves of test error under the 17.4\% outlier noise where $tol=0.3$ and $ratio = 0.4$.
}
\label{fig:effectofn}
\end{figure}

{\bf Simulation 3: On comparison among different losses.}
In this experiment, we compare the performance of the fully-corrective greedy boosting method with different loss functions, including the square loss $\phi_{sq}(x)= (1-x)^2$, the hinge loss $\phi_h(x)=(1-x)_+$, the suggested \textit{squared hinge} loss $\phi_{h^2}(x) = (1-x)_+^2$ and the cubed hinge loss $\phi_{h^3}=(1-x)_+^3$. Note that the hinge loss is non-differentiable, while both the squared hinge and cubed hinge losses are differentiable. As demonstrated in the literature \cite{Mangasarian2001,Kanamori2012,Janocha2016}, the differentiability of squared hinge loss brings many benefits to both the computational implementation and theoretical analysis.
%, when applied to the support vector machines.
In this experiment, we are willing to show the similar benefits brought by the squared hinge loss.
Specifically, the training and test samples were generated via the similar way as described in {\bf Simulation 1}. Moreover, we considered different noise levels for both uniform random and outlier noise. For each case, we repeated $20$ times of experiments and record the averages of their test errors.
The test errors for different losses are presented in Table \ref{table:comp_loss_random} and Table \ref{table:comp_loss_outlier}.

From   Table \ref{table:comp_loss_random} and Table \ref{table:comp_loss_outlier}, the performance of the suggested squared hinge loss is commonly slightly better than the other three loss functions. When comparing the performance of different dictionaries, by Tables \ref{table:comp_loss_random} and  \ref{table:comp_loss_outlier}, the performance of \textit{Gauss} and \textit{Relu} are frequently better than that of \textit{polynomial} and \textit{sigmoid}, which is also observed in the previous experiments.
These show the effectiveness of the suggested squared hinge loss via comparing with the other loss functions.

\begin{table*}
\caption{Comparison on the test errors of different losses in different levels of uniform random noise. The best results among different losses are marked in bold.}
\begin{center}
\begin{tabular}{|c|c|c|c|c|c|c|c|c|c|c|c|c|}\hline
   \multirow{2}{*}{Dictionary}  & \multicolumn{4}{|c|}{20\% uniform random} & \multicolumn{4}{|c|}{30\% uniform random} & \multicolumn{4}{|c|}{40\% uniform random}\\
\cline{2-13}
                  & $\phi_{h^2}$  & $\phi_h$  &$\phi_{h^3}$  & $\phi_{sq}$  & $\phi_{h^2}$  & $\phi_h$  & $\phi_{h^3}$ & $\phi_{sq}$  & $\phi_{h^2}$   & $\phi_h$  & $\phi_{h^3}$ & $\phi_{sq}$  \\\hline
    Gauss         & {\bf 0.0239} & 0.0265 & 0.0295  & 0.0283   & {\bf 0.0418} & 0.0419 & 0.0419  & 0.0431   & {\bf 0.0851} & 0.0879 & 0.0882 & 0.0891   \\\hline
    Polynomial    & {\bf 0.0248} & 0.0265 & 0.0276  & 0.0289   & 0.0425 & {\bf 0.0403} & 0.0436  & 0.0490   & {\bf 0.0879} & 0.0929 & 0.0935 & 0.0929   \\\hline
    Sigmoid       & 0.0524 & 0.0693 & 0.0479  & {\bf 0.0271}   & 0.0597 & 0.0768 & 0.0589  & {\bf 0.0433}   & 0.0922 & 0.0925 & {\bf 0.0895} & 0.0963   \\\hline
    Relu          & {\bf 0.0219} & 0.0394 & 0.0266	& 0.0288   & {\bf 0.0335} & 0.0503 & 0.0397	 & 0.0453   & {\bf 0.0810} & 0.0864	& 0.0850 & 0.1   \\\hline
   \end{tabular}
\end{center}
\label{table:comp_loss_random}
\end{table*}

\begin{table*}
\caption{Comparison on the test errors of different losses in different levels of outlier noise with the same $tol=0.3$ and different $ratios$ varying from $0.2$ to $0.4$, where the associated noise levels are 8.51\%, 12.83\% and 17.31\%, respectively.   The best results among different losses are marked in bold.}
\begin{center}
\begin{tabular}{|c|c|c|c|c|c|c|c|c|c|c|c|c|}\hline
   \multirow{2}{*}{Dictionary} & \multicolumn{4}{|c|}{8.51\% outlier noise} & \multicolumn{4}{|c|}{12.83\% outlier noise} & \multicolumn{4}{|c|}{17.31\% outlier noise}\\
\cline{2-13}
                  & $\phi_{h^2}$   & $\phi_h$  & $\phi_{h^3}$ & $\phi_{sq}$  & $\phi_{h^2}$   & $\phi_h$  & $\phi_{h^3}$ & $\phi_{sq}$ & $\phi_{h^2}$   & $\phi_h$  & $\phi_{h^3}$ & $\phi_{sq}$ \\\hline
    Gauss         & {\bf 0.0125} & 0.0136 & 0.0126 & 0.0145  & {\bf 0.0171} & 0.0195 & 0.0255 & 0.0237   & {\bf 0.0450} & 0.0548 & 0.0498 & 0.0714   \\\hline
    Polynomial    & {\bf 0.0157} & 0.0184 & 0.0170 & 0.0172  & {\bf 0.0245} & 0.0238 & 0.0327 & 0.0332   & 0.0608 & {\bf 0.0550} & 0.0737 & 0.0866   \\\hline
    Sigmoid       & 0.0554 & 0.0584 & 0.0514 & {\bf 0.0152}  & 0.0512 & 0.0634 & 0.0487 & {\bf 0.0317}   & {\bf 0.0629} & 0.0701 & 0.0657 & 0.0770   \\\hline
    Relu          & {\bf 0.0129} & 0.0134 & 0.0149 & 0.0141  & {\bf 0.0156} & 0.0212 & 0.0211 & 0.0272   & {\bf 0.0380} & 0.0385 & 0.0405 & 0.0745
 \\\hline
   \end{tabular}
\end{center}
\label{table:comp_loss_outlier}
\end{table*}

{\bf Simulation 4: On comparison among different update schemes.}
In this experiment, we provided some comparisons between fully-corrective update  and  most of the existing types of update schemes such as that in the original boosting scheme (called \textit{OrigBoosting} for short) in \cite{Freund1997}, the regularized boosting with shrinkage (called \textit{RSBoosting} for short) in \cite{Friedman2001}, the regularized boosting with truncation (called \textit{RTBoosting}) in \cite{Zhang2005}, the forward stagewise boosting (called \textit{$\epsilon$-Boosting} for short) in \cite{Hastie2007}, and the rescaled boosting (called \textit{RBoosting} for short) suggested in the recent paper \cite{Wang2019}, when adopted to the empirical risk minimization with the squared hinge loss over the Gaussian type dictionary.
The optimal width of the Gaussian kernel was determined via cross validation from the set $\{0.1, 0.5, 1, 5\}$.
Specifically, the training and test samples were generated according to Section \ref{sc:experiment-set}, where the numbers of training and test samples were both $1,000$ and the training samples were generated with 30\% uniform random noise.
The size of the total dictionaries generated was set as $10000$.
The maximal number of iterations for the proposed FCGBoosting was set as 500, while for the other types of boosting methods, the maximal number of iterations was set as $5000$.
For each trail, we recorded the optimal test error with respect to the number of iterations, and the associated training error as well as number of dictionaries selected.
The averages of the optimal test error, training error and number of dictionaries selected over 10 repetitions are presented in Table \ref{Tab:comp-boost}.

As shown in Table \ref{Tab:comp-boost}, the performance of all boosting methods with the optimal number of dictionaries are almost the same in terms of the generalization ability measured by the test error, and under these optimal scenarios, all the boosting methods are generally well-fitted in the perspective of training error.
As demonstrated by Table \ref{Tab:comp-boost} and Fig. \ref{fig:comp-boost}(a), the most significant advantage of the adopted fully-corrective update  scheme o  is that the number of dictionaries for FCGBoosting is generally far less than that of the existing   methods   such as \textit{OrigBoosting}, \textit{RSBoosting}, \textit{RTBoosting}, \textit{$\epsilon$-Boosting} and \textit{RBoosting}.
Particularly, from Table \ref{Tab:comp-boost}, the average number of dictionaries for the proposed \textit{FCGBoosting} is only $12.6$, which is very close to the theoretical value $\left\lceil \sqrt{\frac{m}{\log m}} \right \rceil = 13$ as suggested in Theorem \ref{Thm:main-corollary}, where in this experiment $m=1000$.
Moreover, from Fig. \ref{fig:comp-boost}(b), most of the partially-corrective greedy type boosting methods select new dictionaries slowly after certain iterations, while their generalization performance improves also very slowly.

\begin{table*}
\caption{Comparisons among different types of boosting methods.}
\begin{center}
\begin{tabular}{|c|c|c|c|c|c|c|}\hline
%   \multirow{2}{*}{Boosting type} & \multicolumn{2}{|c|}{20\% uniform} & \multicolumn{2}{|c|}{30\% uniform} & \multicolumn{2}{|c|}{40\% uniform}\\
%\cline{2-7}
    Boosting type       & OrigBoosting \cite{Freund1997}   & RSBoosting \cite{Friedman2001}  & RTBoosting \cite{Zhang2005}   & $\epsilon$-Boosting \cite{Hastie2007}  & RBoosting \cite{Wang2019}   & FCGBoosting (this paper)\\\hline
    Test error          & 0.0238	& 0.0256 	& 0.0239	& 0.0256	& {\bf 0.0221}	& 0.0229 \\\hline
    Training error      & 0.3071	& 0.3074 	& 0.3069	& 0.3077	& 0.3078  	& 0.3076\\\hline
    Dictionary no.      & 103.8      & 140.1     & 72.2   	& 313.1     & 120.8     & {\bf 12.6} \\\hline
   \end{tabular}
\end{center}
\label{Tab:comp-boost}
\end{table*}

\section{Real data experiments}
\label{sc:realdata}
In this section, we show the effectiveness of the proposed method via a series of experiments on 11 UCI data sets covering various areas, and an earthquake intensity classification dataset.

%\subsection{UCI data sets}
%\label{sc:UCIdata}
\subsection{UCI Datasets}
%{\bf Experimental setting.}
%In the following, we describe the setting of the experiments.

{\bf Samples.} All data is from:
\textit{https://archive.ics.uci.edu/ml/ datasets.html}.
The sizes of  data sets are listed in Table \ref{Tab:data}.
For each data set, we used $50\%$, $25\%$ and $25\%$ samples as the training, validation and test sets, respectively.

{\bf Competitors.} We evaluated the effectiveness of the proposed boosting method via comparing with the baselines and five state-of-the-art methods including
two typical support vector machine (SVM) methods with radial basis function (\textit{SVM-RBF}) and polynomial (\textit{SVM-Poly}) kernels respectively, and a fast polynomial kernel based method for classification recently proposed in \cite{Zeng2019} called \textit{FPC},
and the random forest (\textit{RF}) \cite{Breiman2001} and AdaBoost \cite{Freund1997}.
We used the well-known \textit{libsvm} toolbox to implement these SVM methods, from the website:
\textit{https://www.csie.ntu.edu.tw/~cjlin/libsvm/}.
For the proposed method, we also considered four dictionaries including \textit{Gauss}, \textit{Polynomial}, \textit{Sigmoid} and \textit{Relu}.

{\bf Implementation.}
For the proposed boosting method, we set $\alpha =1$, $\gamma=1$, the initialization $(u^0,v^0,w^0)=(0,y,0)$ and the maximal number of iterations $T=100$ for the ADMM method used in the FCG step;
the stopping criterion of the suggested method was set as the maximal iterations less than $K$, where $K$ was chosen from these five values $\left\{\left\lceil \sqrt{\frac{m}{\log m}} \right \rceil, 2 \left\lceil \sqrt{\frac{m}{\log m}} \right \rceil,\ldots, 5\left\lceil \sqrt{\frac{m}{\log m}} \right \rceil \right\}$ via cross validation;
the size of the dictionary set was set as the number of training samples $m$.
These empirical settings are generally adequate as shown in the previous simulations.

For both \textit{SVM-RBF} and \textit{SVM-Poly}, the ranges of parameters $(c,g)$ involved in \textit{libsvm} were determined via a grid search on the region $[2^{-5}, 2^5] \times [2^{-5}, 2^5]$ in the logarithmic scale,
while for \textit{SVM-Poly}, the kernel parameter was selected from the interval [1, 10] via a grid search with 10 candidates, i.e., $\{1, 2, \ldots, 10\}$.
The kernel parameter of \textit{FPC} was selected similarly to \textit{SVM-Poly}.

For \textit{RF}, the number of trees used was determined from the interval $[2, 20]$ via a grid search with 10 candidates, i.e., $\{2, 4, \ldots, 20\}$.
For \textit{AdaBoost}, the number of trees used was set as 100.
For each data set, we ran $50$ times of experiments for all algorithms, and then record their averages of test accuracies, which is defined as the percentage of the correct classified labels.
%(\textit{TestAcc}) \footnote{\textit{TestAcc} is defined as the percentage of the correct classification.}.
%, the averages of training time (\textit{TrainTime}) and testing time ({\it TestTime}),
%as well as the average of the \textit{sparsity levels} \footnote{For SVMs, the sparsity level is the number of the support vectors, for FPC, the sparsity level is the number of selected centers, and for random forest, the sparsity level is the number of selected trees.}.

\begin{table}
\caption{Sizes of UCI data sets. In the latter tables, we use the first vocabulary of the name of the data set for short.}
\begin{center}
\begin{tabular}{|l|c|c|c|}\hline
  Data sets & Data size    & \#Attributes \\\hline
  heart                    & 270     & 14 \\ \hline
  breast\_cancer           & 683     & 9 \\ \hline
  biodeg                   & 783     & 42 \\ \hline
  banknote\_authentication & 1,372   & 4 \\ \hline
  seismic\_bumps           & 2,584   & 18  \\ \hline
  musk2                    & 6,598   & 166   \\ \hline
  HTRU2                    & 17,898  & 8  \\ \hline
  MAGIC\_Gamma\_Telescope  & 19,020  & 10 \\ \hline
  occupancy                & 20,560  & 5 \\ \hline
  default\_of\_credit\_card\_clients & 30,000  & 24  \\ \hline
  {Skin\_NonSkin}          &245,057 & 3 \\ \hline
\end{tabular}
\end{center}
 \label{Tab:data}
\end{table}

{\bf Experimental results.}
The experimental results of UCI data sets are reported in Table \ref{Tab:uci}.
From Table \ref{Tab:uci}, the proposed boosting method with different dictionaries perform slightly different.
In general, the proposed boosting method with the \textit{Gaussian}, \textit{Polynomial}, and \textit{Relu} dictionariese generally perform slightly better than the other dictionaries, as also observed in the previous experiments.
Compared to the other state-of-the-art methods, the proposed boosting method with the optimal dictionary usually performs better, where the proposed boosting method performs the best in 9 datasets, while performs slightly worse than the best results in the other 2 datasets.
If we particularly compare the performance of the proposed boosting method with the other existing methods using the same dictionary, say, \textit{Boost-Gauss vs. SVM-RBF} and \textit{Boost-Poly vs. SVM-Poly (or FPC)}, it can be observed that the adopted boosting scheme frequently improves the accuracy of these weak learners.

\begin{table*}
\caption{Test accuracies (in percentages) of different algorithms for UCI datasets, where the first four columns present the results of the proposed FCGBoosting over four differen types of dictionaries. The best and second results are marked in red and blue color, respectively.}
\begin{center}
\begin{tabular}{|c|c|c|c|c|c|c|c|c|c|c|}\hline
%   \multirow{2}{*}
   {Data sets}
%   &
%\multicolumn{5}{|c|}{TestAcc (std.) \%} \\
%\cline{2-6}
                   & Boost-Gauss   & Boost-Poly   & Boost-Sigmoid & Boost-ReLU  & SVM-RBF   & SVM-Poly     & FPC        & RF         & AdaBoost      & Baseline\\\hline
    heart          & 87.93         & 86.31        & 86.21         & {\color{red}89.93}       & 84.21     & {\color{blue}89.64}        & 84.14      & 89.43      & 89.44         & 81.36     \\\hline
    breast         & {\color{red}97.75}        & 96.95        & {\color{blue} 97.57}         & 97.10       & 97.19     & 96.84        & 96.78      & 96.81      & 96.34         & 96.20    \\\hline
    biodeg         & 96.86         & {\color{red}99.56}        & 98.35         & 97.71       & 96.42     & {\color{blue}99.50}        & 98.60      & 97.09      & 98.44         & 84.64
    \\\hline
    banknote       & {\color{red}100}           & {\color{red}100}          & 99.72        & {\color{blue}99.78}       & 98.07     & 97.72        & 98.15      & 98.99      & 99.17         & 95.81    \\\hline
    seismic        & {\color{red}96.44}         & {\color{red}96.44}        & {\color{red}96.44}         & {\color{red}96.44}       & 93.84     & 93.59        & 93.68      & 92.88      & {\color{blue}96.40}         & 88.00    \\\hline
    musk2          & {\color{red}100}           & 99.67        & {\color{blue}99.88}         & 99.76      & 91.11     & 92.82        & 99.08      & 96.56      & 98.85         & 90.30    \\\hline
    HTRU2          & 98.93         & 98.92        & 98.90         & {\color{red}99.00}       & 97.53     & 97.42        & 97.26      & 97.88      & 98.98         & {\color{blue}99.00}   \\\hline
    MAGIC          & 85.11         & 86.00        & 85.35         & {\color{red}87.49}       & 85.69     & 86.00        & 85.10      & {\color{blue}86.90}      & 82.67         & 86.34    \\\hline
    occupancy      & 98.80         & 98.52        & 98.51         & 98.76       & 98.63     & 98.95        & 98.77      & {\color{blue}99.14}      & {\color{red}99.55}         & 97.16
    \\\hline
    default        & {\color{blue}82.56}         & {\color{red}83.27}        & 81.07         & 82.36       & 81.60     & 82.10        & 80.51      & 81.01      & 81.67         & 82.00    \\\hline
    Skin           & 98.67         & 99.21        & 98.26         & {\color{blue}99.75}       & 98.80     & 99.06        & 98.83      & {\color{red}99.94}      & 99.14         & 98.09    \\\hline
\end{tabular}
\end{center}
 \label{Tab:uci}
\end{table*}

\subsection{Earthquake Intensity Classification}

In this experiment, we considered the \textit{U.S. Earthquake Intensity Database},
which was downloaded from:
%provided by
\textit{https://www.ngdc.noaa.gov/hazard/intintro.shtml}.
This database contains more than 157,000 reports on over 20,000 earthquakes that affected the United States from 1638 through 1985.
The main features for each record in this database are the geographic latitudes and longitudes of the epicentre and  ``reporting city'' (or, locality) where the Modified Mercalli Intensity (MMI) was observed, magnitudes (as a measure of seismic energy), and the hypocentral depth (positive downward) in kilometers from the surface,
while the output label is measured by MMI, varying from 1 to 12 in integer.
An illustration of the generation procedure of each earthquake record is shown in Figure \ref{fig:earthquake}.

\begin{figure}[!t]
\begin{minipage}[b]{0.98\linewidth}
\centering
\vspace{-.5cm}
\includegraphics*[scale=.5]{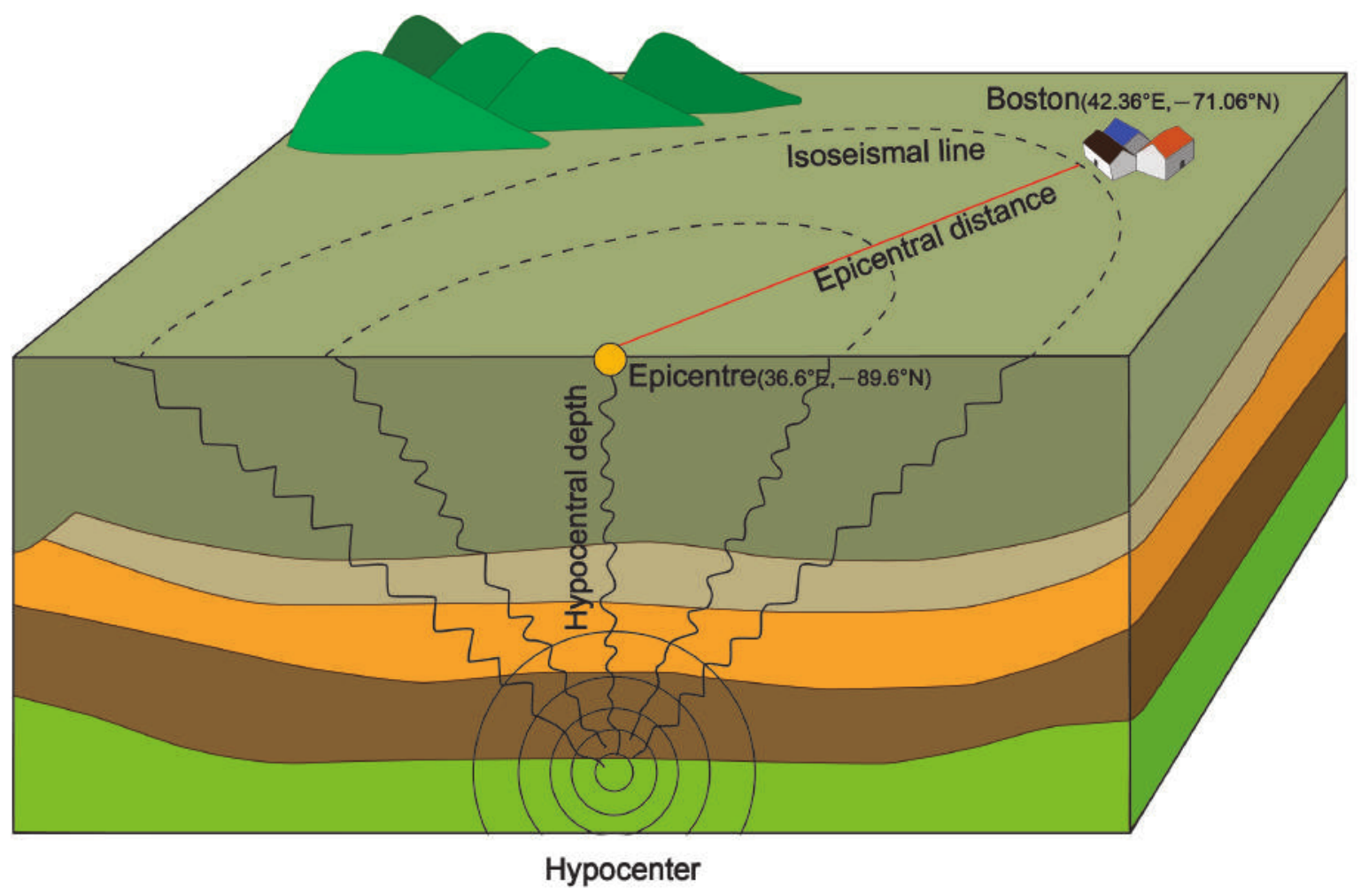}
%  \vspace{-1cm}
%\centerline{{\small (a) Test error}}
\end{minipage}
\hfill
\caption{ An illustration of the earthquake intensity data.
}
\label{fig:earthquake}
\end{figure}

To transfer such multi-classification task into the binary classification setting considered in this paper, we set the labels lying in 1 to 4 as the positive class, while set the other labels lying in 5 to 12 as the negative class, mainly according to the damage extent of the earthquake suggested by the referred website.
Moreover, we removed those incomplete records with missing labels.
After such preprocessing, there are total 8,173 effective records.
The settings of this experiment were similar to those on the UCI data sets.
The classification accuracies of all algorithms are shown in Table \ref{Tab:earthquakedata}.

From Table \ref{Tab:earthquakedata}, the proposed boosting method with a suitable dictionary is generally better than the other state-of-the-art methods including two SVM methods, random forest, AdaBoost, and FPC.
Moreover, the performance of the proposed boosting method with the polynomial kernel in this experiment is the best one among the used dictionaries.

\begin{table*}
\caption{Test accuracies (in percentages) on the earthquake intensity data set, where the first four columns present the results of the proposed FCGBoosting over four differen types of dictionaries. The best and second results are marked in red and blue color, respectively.}
\begin{center}
\begin{tabular}{|c|c|c|c|c|c|c|c|c|c|}\hline
   {Algorithm}     & FCGBoost-Gauss   & FCGBoost-Poly   & FCGBoost-Sigmoid & FCGBoost-ReLU  & SVM-RBF   & SVM-Poly     & FPC        & RF         & AdaBoost     \\\hline
    Test Acc. (\%)  & 78.93         & {\color{red}80.48}        & 79.27        & {\color{blue}80.38}       & 80.37     & 73.92        & 80.16      & 74.51      & 75.80
    \\\hline
\end{tabular}
\end{center}
 \label{Tab:earthquakedata}
\end{table*}

\section{Proofs}
\label{sc:proofs}
In this section, we prove Theorem \ref{Thm:main-corollary} and Theorem \ref{Thm:main-Tsybakov} by developing  a novel concentration inequality associated with the squared hinge, a fast numerical convergence rate for FCGBoosting, and some standard error analysis techniques in \cite{Steinwart2008,Lin2017}.
Throughout the proofs, we will omit the subscript of $\phi_{h^2}$ for simplicity, and denote $\phi$ as the squared hinge loss.

\subsection{Concentration inequality with squared hinge loss}

Denote by
${\cal E}(f):= {\cal E}^{\phi}(f):=\int_{Z} \phi(yf(x))d \rho$ and
${\cal E}_D(f):={\cal E}_D^{\phi}(f):=\frac{1}{m}\sum_{i=1}^m \phi(y_if(x_i))$ the expectation risk and empirical risk, respectively. Let
\[f_{\rho}(x):= \arg \min_{t\in \mathbb{R}} \int_Y \phi(yt)d \rho(y|x),\]
 be the regression regression minimizing ${\cal E}(f)$. Since $\phi$ is the squared hinge loss,  it can be found in \cite{Bartlett2006} that
\begin{equation}\label{reg-relation}
    f_{\rho}(x)=2\eta(x)-1.
\end{equation}
Our aim is to derive a learning rate for the generalization error
$
     \mathcal E(f)-\mathcal E(f_\rho).
$
Noting that the squared hinge loss $\phi$ is of quadratic type,  we have
\cite[Lemma 7]{Bartlett2006} (see also \cite{Lin2017})
\begin{equation}\label{quadratic type}
     \frac12\|f-f_\rho\|_\rho^2\leq\mathcal E(f)-\mathcal
     E(f_\rho)\leq \|f-f_\rho\|_\rho^2, \quad\forall f\in L_{\rho_X}^2,
\end{equation}
where $L_{\rho_X}^2$ denotes
{the} space of    $\rho_X$ square integrable functions endowed  with
  norm $\|\cdot\|_\rho$. For   $\mathcal F\in L^1(X)$,  denote
${\cal N}_1(\epsilon,{\cal F})$ and ${\cal N}_1(\epsilon,{\cal F},x_1^m)$
as the $\varepsilon$-covering number of $\mathcal F$ under the $L^1(X)$ and $\ell^1$ norms, respectively. The following concentration inequality is the main tool in our analysis.

  \begin{theorem}\label{theorem:CONCENTRATION INEQUALITY 1}
  Let $\mathcal F$ be a set of functions $f:\mathcal X\rightarrow\mathbb R$ satisfying  $|f(x)|\leq 1, \forall x\in {\cal X}.$ Then for arbitrary $\beta>0$ and $f\in\mathcal F$, with confidence at least $1-\delta$, there holds
\begin{eqnarray}\label{con-1-1}
        && \mathcal E_D(f)-\mathcal
       E_D(f_\rho)-(\mathcal E(f)-\mathcal E(f_\rho)) \nonumber\\
       &\leq&
       \frac{17}{18}(\mathcal E(f)-\mathcal E(f_\rho))+\frac{1211}{m}\log\frac1\delta+\frac{4\beta}9 \nonumber\\
        &+&
         \frac{1164}{m }\exp\left(-\frac{\beta m}{654}\right)\mathbb{E} {\cal N}_1(\frac{\beta}{70}, {\cal F}, x_1^m).
\end{eqnarray}
\end{theorem}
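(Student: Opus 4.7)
The goal is to bound, uniformly over $f\in\mathcal F$, the one-sided deviation
\[ T_f := \mathcal E_D(f) - \mathcal E_D(f_\rho) - [\mathcal E(f) - \mathcal E(f_\rho)] = \tfrac{1}{m}\sum_{i=1}^m \xi_f(z_i) - \mathbb{E}\xi_f, \]
where $\xi_f(z):=\phi(yf(x))-\phi(yf_\rho(x))$. My plan rests on three pillars: (a) a variance-to-mean comparison for $\xi_f$ coming from the quadratic type of the squared hinge, (b) a one-sided Bernstein inequality for a single $f$ refined by Young's inequality, and (c) a Pollard-style ghost-sample symmetrization combined with an empirical $\ell^1$-cover to upgrade to uniformity.

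First I would record the moment bounds on $\xi_f$. Since $|f(x)|\le 1$ by hypothesis and $|f_\rho(x)|=|2\eta(x)-1|\le 1$ by \eqref{reg-relation}, and since $\phi'(t)=-2(1-t)_+$ satisfies $|\phi'|\le 4$ on $[-1,1]$, the squared hinge is $4$-Lipschitz on the relevant range. Therefore $|\xi_f(z)|\le 4|f(x)-f_\rho(x)|\le 8$, and combining the same Lipschitz bound with \eqref{quadratic type} yields $\mathbb{E}[\xi_f^2]\le 16\,\|f-f_\rho\|_\rho^2 \le 32\,(\mathcal E(f)-\mathcal E(f_\rho)) = 32\,\mathbb{E}\xi_f$. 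This linear control of variance by the mean is the engine that produces the $\tfrac{17}{18}(\mathcal E(f)-\mathcal E(f_\rho))$ absorption on the right-hand side: one-sided Bernstein for a fixed $f$ gives, with probability at least $1-\delta$, $T_f\le \sqrt{2\sigma_f^2\log(1/\delta)/m}+\tfrac{8\log(1/\delta)}{3m}$ with $\sigma_f^2\le 32(\mathcal E(f)-\mathcal E(f_\rho))$, after which Young's inequality $\sqrt{ab}\le \epsilon a + b/(4\epsilon)$ with $\epsilon$ chosen so that the $\sigma_f^2$ coefficient matches $\tfrac{17}{18}$ converts the square-root term into $\tfrac{17}{18}(\mathcal E(f)-\mathcal E(f_\rho))$ plus an additive remainder whose constant works out to $\tfrac{1211}{m}\log(1/\delta)$ once merged with the $\tfrac{8\log(1/\delta)}{3m}$ Bernstein term.

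To pass from a single $f$ to the uniform bound, I would apply a Pollard-style ghost-sample symmetrization, introducing an independent copy $z_1',\ldots,z_m'$ so that $\mathbb{E}\xi_f$ is replaced by $\tfrac{1}{m}\sum_i\xi_f(z_i')$ inside the tail probability. Conditioning on the extended sample then reduces the supremum over $\mathcal F$ to a supremum over an empirical $\ell^1$-cover of $\mathcal F$ at scale $\beta/70$, of cardinality $\mathcal N_1(\beta/70,\mathcal F,x_1^m)$; applying the single-function Bernstein bound to each cover representative and taking a union bound yields the summand $\tfrac{1164}{m}\exp(-\beta m/654)\,\mathbb{E}\mathcal N_1(\beta/70,\mathcal F,x_1^m)$, while the $4$-Lipschitzness of $\phi$ controls the approximation error between $f$ and its closest cover center, producing the $\tfrac{4\beta}{9}$ contribution. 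The main obstacle is the precise tracking of constants across all stages: the symmetrization factor, the Young optimization producing $\tfrac{17}{18}$ and $1211$, the Lipschitz-to-excess-risk conversion giving $\tfrac{4}{9}$, and the interplay of the union bound with the Bernstein exponent that produces $\tfrac{1164}{m}$, $\tfrac{1}{654}$, and the cover scale $\tfrac{1}{70}$. With each choice matched carefully, the three pieces assemble into \eqref{con-1-1}.
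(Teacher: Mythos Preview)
Your proposal diverges from the paper's proof in a way that leaves a genuine gap. The paper does \emph{not} proceed by ``single-$f$ Bernstein, then upgrade to uniformity via symmetrization and a union bound over a cover.'' Instead it introduces the \emph{normalized} process
\[
v_{f,\varepsilon}(z)=\frac{h_f(z)-\mathbb{E}h_f}{\varepsilon+\mathbb{E}h_f},\qquad h_f(z)=\phi(yf(x))-\phi(yf_\rho(x)),
\]
and applies \emph{Talagrand's inequality} to $\sup_{f}\tfrac1m\sum_i v_{f,\varepsilon}(z_i)$. Talagrand splits the supremum into (i) its expectation and (ii) a fluctuation term governed by $\sigma^2=16/\varepsilon$ and $B=8/\varepsilon$. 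Part~(ii), after setting $\varepsilon=\mathcal E(f)-\mathcal E(f_\rho)$ and applying $\sqrt{ab}\le\tfrac12(a+b)$, is what produces both the $\tfrac{17}{18}(\mathcal E(f)-\mathcal E(f_\rho))$ absorption and the $\tfrac{1211}{m}\log\tfrac1\delta$ term. Part~(i), the expectation of the supremum, is bounded by a \emph{ratio-type} concentration inequality of Gy\"orfi et al.\ (the paper's Lemma~1) combined with the tail-integration identity $\mathbb E[\xi]=\int_0^\infty\mathbb P[\xi>t]\,dt$; this is precisely what yields the additive form $\tfrac{4\beta}{9}+\tfrac{1164}{m}\exp(-\beta m/654)\,\mathbb E\,\mathcal N_1(\beta/70,\mathcal F,x_1^m)$.

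Your scheme misses the normalization step, and that is fatal for the stated bound. The inequality must hold with high probability \emph{simultaneously} for all $f\in\mathcal F$ (it is later applied to the data-dependent $\pi f_{D,k}$), yet the right-hand side carries the $f$-dependent quantity $\tfrac{17}{18}(\mathcal E(f)-\mathcal E(f_\rho))$. A Pollard symmetrization followed by Bernstein on cover representatives and a union bound gives a uniform tail governed by the \emph{worst-case} variance over the cover, not by the variance of the particular $f$; you therefore cannot recover the self-bounding term without first dividing through by $\varepsilon+\mathbb E h_f$ (or using a peeling argument, which you do not mention). Moreover, a plain union bound produces a $\log\mathcal N_1$ contribution, not the multiplicative $\mathcal N_1\cdot\exp(-c\beta m)$ structure; that specific form arises from integrating the Gy\"orfi ratio-tail, not from a one-shot union bound at a fixed threshold. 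In short, the missing ingredients are the normalized process $v_{f,\varepsilon}$, Talagrand's inequality, and the Gy\"orfi ratio-type bound on $\mathbb E[\sup]$; without them the argument, as written, does not deliver \eqref{con-1-1}.
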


It should be mentioned that a similar concentration inequality for the square loss is proved in \cite[Theorem 11.4]{Gyorfi2002}. In \cite{Wu2005}, a more general concentration inequality associated with the $L^\infty$ covering number was  presented for any bounded loss. Since we do not impose any bounded assumption on $f_{D,k}$,  it is difficult to derive an $L^\infty$ covering number estimates for the hypothesis space of FCGBoosting. Under this circumstance, a concentration inequality presented in Theorem \ref{theorem:CONCENTRATION INEQUALITY 1} is highly desired.

Let ${\cal F}$ be a set of functions $f:\mathbb{R}^d \rightarrow [-1,1]$. For $\varepsilon>0$ and $f\in\mathcal F$,
$$
     h_f(z)
     =\phi(yf(x))-\phi(yf_{\rho}(x))
 $$
 and
 $$
      v_{f,\varepsilon}(z)=\frac{h_f(z)-\mathbb{E}h_f}{\varepsilon+ \mathbb{E}h_f}.
$$
Denote
$$
   \mathcal H:=\{  h_f:f\in\mathcal F\},\qquad \mathcal V_\varepsilon:=\{
   v_{f,\varepsilon}:f\in\mathcal F\}.
$$
By the definition of $h_f$, one has
%      Then we obtain
\begin{equation}\label{Eq:exp-h}
{\cal E}(f) - {\cal E}(f_{\rho})=\mathbb{E} h_f,\
 {\cal E}_D(f) - {\cal E}_D(f_{\rho})=\frac{1}{m}\sum_{i=1}^m h_f(z_i).
\end{equation}

One of the most important step-stones of our proof is the following relation between variance and expectation, which can be found in \cite[Lemma 7 and Table 1]{Bartlett2006}.
\begin{equation}\label{key-stone}
      \mathbb{E}h_f^2(z)\leq 32\mathbb{E}h_f(z)  .
\end{equation}
To derive another tool, we recall a classical concentration inequality shown in the following lemma \cite[Theorem 11.6]{Gyorfi2002}.

\begin{lemma}
\label{Lemm:concentration-ineq2}
Let  ${\cal G}$ be a set of functions $g:\mathbb{R}^d \rightarrow [-B,B]$ and $\xi, \xi_1,\ldots, \xi_m$ be i.i.d. $\mathbb{R}^d$-valued random variables. Assume $\alpha>0$, $0<\epsilon<1$, and $m\geq 1$. Then
\begin{align*}
&\mathbb{P} \left\{ \sup_{g\in {\cal G}} \frac{\frac{1}{m}\sum_{i=1}^m g(\xi_i) - \mathbb{E}g(\xi)}{\alpha+\frac{1}{m}\sum_{i=1}^m g(\xi_i)+\mathbb{E}g(\xi)} >\epsilon \right\}\\
&\leq 4\mathbb{E} {\cal N}_1(\frac{\alpha \epsilon}{5}, {\cal G}, \xi_1^m)\exp\left(-\frac{3\epsilon^2\alpha m}{40B}\right).
\end{align*}
\end{lemma}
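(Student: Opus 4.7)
My plan is to reduce \eqref{con-1-1} to a uniform deviation for the centred excess-loss class $\mathcal{H}=\{h_f:f\in\mathcal{F}\}$ and then to combine a ratio-type concentration (Lemma \ref{Lemm:concentration-ineq2}) with the variance-expectation comparison \eqref{key-stone}. By \eqref{Eq:exp-h} the left-hand side of \eqref{con-1-1} is exactly $\frac{1}{m}\sum_{i=1}^m h_f(z_i)-\mathbb{E}h_f$; since $|f|\le 1$ and $|f_\rho|=|2\eta-1|\le 1$, each summand satisfies $|h_f|\le 4$, so the boundedness parameter in Lemma \ref{Lemm:concentration-ineq2} may be taken as $B=4$.

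I would first apply Lemma \ref{Lemm:concentration-ineq2} to $\mathcal{H}$ with a parameter $\alpha$ and level $\epsilon$. The complement event gives
\[
\tfrac{1}{m}\sum_{i=1}^m h_f(z_i)-\mathbb{E}h_f \;\le\; \tfrac{2\epsilon}{1-\epsilon}\,\mathbb{E}h_f+\tfrac{\epsilon}{1-\epsilon}\,\alpha
\]
with probability at least $1-4\,\mathbb{E}\mathcal{N}_1(\alpha\epsilon/5,\mathcal{H},z_1^m)\exp(-3\epsilon^2\alpha m/160)$. A single use of Lemma \ref{Lemm:concentration-ineq2} is insufficient, because driving the coefficient $2\epsilon/(1-\epsilon)$ strictly below $1$ forces $\epsilon<1/3$ and yields only a weak $\epsilon^2$-exponent. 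To sharpen, I would peel $\mathcal{F}$ into shells $\mathcal{F}_j=\{f:2^{j-1}\beta\le \beta+\mathbb{E}h_f\le 2^j\beta\}$, $j\ge 0$, on each of which \eqref{key-stone} specialises to $\mathbb{E}h_f^2\le 32\cdot 2^j\beta$. Invoking Lemma \ref{Lemm:concentration-ineq2} shell-by-shell with $(\alpha_j,\epsilon_j)$ tuned so that the shell tail probabilities form a convergent geometric series in $j$, and taking a union bound, would yield a single exponential factor of the form $(C/m)\exp(-\beta m/c)\,\mathbb{E}\mathcal{N}_1(c'\beta,\mathcal{H},z_1^m)$; splitting off a separate probability budget of size $\delta$ and inverting it produces the additive $\log(1/\delta)/m$ term, while balancing the shell parameters delivers exactly the coefficient $17/18$ on $\mathbb{E}h_f$ together with the residual $4\beta/9$.

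To translate $\mathcal{N}_1(\cdot,\mathcal{H},z_1^m)$ into $\mathcal{N}_1(\cdot,\mathcal{F},x_1^m)$ I exploit that on $[-1,1]$ the squared hinge is $4$-Lipschitz (since $|\phi'(t)|=2(1-t)_+\le 4$ there), so the map $f\mapsto h_f$ is $4$-Lipschitz in the empirical $\ell^1$-norm, giving $\mathcal{N}_1(\eta,\mathcal{H},z_1^m)\le \mathcal{N}_1(\eta/4,\mathcal{F},x_1^m)$. With the radius $\eta$ dictated by the peeling step, this converts the covering-number argument to the target radius $\beta/70$.

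The principal obstacle is constant-matching: the specific numbers $17/18$, $4/9$, $70$, $654$, $1164$, $1211$ must emerge simultaneously from one coordinated choice of the shell ratio, the shell-wise level $\epsilon_j$, the base width $\alpha_0$, and the split of the total failure probability between the $\log(1/\delta)/m$ contribution and the covering-number-weighted exponential contribution. This bookkeeping is essentially mechanical; the only genuinely non-routine ingredient is confirming that the variance bound \eqref{key-stone}, rather than a crude boundedness estimate, is exactly what pushes the coefficient on $\mathbb{E}h_f$ strictly below $1$, which is indispensable if \eqref{con-1-1} is to feed into the fast learning rates derived in Section~III.
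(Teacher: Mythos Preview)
You have misidentified the target. The statement you were asked to prove is Lemma~\ref{Lemm:concentration-ineq2}, the ratio-type concentration inequality for a generic bounded class $\mathcal G$. The paper does not prove this lemma at all: it is quoted verbatim from \cite[Theorem~11.6]{Gyorfi2002} and used as a black box. Your proposal, by contrast, is a strategy for proving Theorem~\ref{theorem:CONCENTRATION INEQUALITY 1} (the bound \eqref{con-1-1}), and it \emph{invokes} Lemma~\ref{Lemm:concentration-ineq2} as an ingredient. So as a proof of the stated lemma your write-up is circular: it assumes the very inequality it is supposed to establish.

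If your intent was in fact to prove Theorem~\ref{theorem:CONCENTRATION INEQUALITY 1}, then your route differs substantially from the paper's. The paper does not peel $\mathcal F$ into shells. Instead it (i) applies Lemma~\ref{Lemm:concentration-ineq2} once with the fixed level $\epsilon=1/7$, (ii) integrates the resulting tail via \eqref{Expectedform} to obtain an \emph{expectation} bound on $\frac{1}{m}\sum_i v_{f,\varepsilon}(z_i)$ (Lemma~\ref{Lemm:maintool1}), and then (iii) feeds that expectation bound, together with the variance control \eqref{key-stone}, into a simplified Talagrand inequality (Lemma~\ref{Lemma:Talagrand inequality}) with $\gamma=1/3$. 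The specific constants $17/18$, $4/9$, $654$, $1164$, $1211$ fall out of that particular combination, not from a shell-by-shell union bound. Your peeling scheme could in principle deliver a qualitatively similar inequality, but there is no reason to expect the same numerical constants, and your Lipschitz constant for $f\mapsto h_f$ (you use $4$, the paper uses $2$ via the factored form $\phi(yf_1)-\phi(yf_2)$ on $|yf|\le1$) already shifts the covering radius away from $\beta/70$.
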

Based on Lemma \ref{Lemm:concentration-ineq2}, we can derive the following bound for $v_{f,\varepsilon}\in \mathcal V_{\varepsilon}$ easily.
\begin{lemma}
\label{Lemm:maintool1}
For arbitrary $\beta,\varepsilon>0$ and $f\in\mathcal F$,
\begin{eqnarray*}
 &&\mathbb{E} \left[  \frac1m\sum_{i=1}^mv_{f,\varepsilon}(z_i) \right]
 \leq \frac{\mathbb{E} h_f}{3\varepsilon}  \\
   &+&
  \frac\beta{6\varepsilon}+ \frac{436}{m\varepsilon}\exp\left(-\frac{\beta m}{654}\right)\mathbb{E} {\cal N}_1(\frac{\beta}{35}, {\cal H}, z_1^m).
\end{eqnarray*}
\end{lemma}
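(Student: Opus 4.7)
The plan is to combine the ratio-type concentration inequality of Lemma~\ref{Lemm:concentration-ineq2} with an algebraic manipulation that converts its denominator $\alpha+\tfrac{1}{m}\sum_i h_f(z_i)+\mathbb{E}h_f$ into the form $\varepsilon+\mathbb{E}h_f$ that defines $v_{f,\varepsilon}$. The starting observation is that $\mathcal{H}$ is uniformly bounded: since $\|f\|_\infty\le 1$ for $f\in\mathcal F$ and $|f_\rho|=|2\eta-1|\le 1$ by \eqref{reg-relation}, both $\phi(yf(x))$ and $\phi(yf_\rho(x))$ lie in $[0,4]$, so $|h_f(z)|\le 4$. Hence Lemma~\ref{Lemm:concentration-ineq2} applies to $\mathcal{H}$ with $B=4$.

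First I would apply Lemma~\ref{Lemm:concentration-ineq2} with $\alpha=\beta$ and deviation level $\epsilon=1/7$, so that the cover radius $\alpha\epsilon/5=\beta/35$ matches the one appearing in the statement. On the ``good'' event, where the supremum ratio is at most $1/7$, rearranging $7(\tfrac{1}{m}\sum_i h_f(z_i)-\mathbb{E}h_f)\le \beta+\tfrac{1}{m}\sum_i h_f(z_i)+\mathbb{E}h_f$ gives
$$\tfrac{1}{m}\sum_i h_f(z_i)-\mathbb{E}h_f\;\le\;\tfrac{\beta}{6}+\tfrac{\mathbb{E}h_f}{3}.$$
Dividing the two summands on the right by $\varepsilon+\mathbb{E}h_f\ge\varepsilon$ separately, I obtain the key deterministic inequality on the good event,
$$\tfrac{1}{m}\sum_i v_{f,\varepsilon}(z_i)\;\le\;\tfrac{\beta}{6\varepsilon}+\tfrac{\mathbb{E}h_f}{3\varepsilon},$$
which reproduces the first two summands of the claimed bound.

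To pass to the expectation I would use the tail formula $\mathbb{E}[Y_+]=\int_0^\infty\mathbb{P}(Y>t)\,dt$ applied to $Y:=\tfrac{1}{m}\sum_i v_{f,\varepsilon}(z_i)-\tfrac{\beta}{6\varepsilon}-\tfrac{\mathbb{E}h_f}{3\varepsilon}$. Since $|h_f|\le 4$ forces $Y\le 4/\varepsilon$ almost surely and the tail $\mathbb{P}(Y>t)$ is controlled, uniformly in $t$, by Lemma~\ref{Lemm:concentration-ineq2} (re-applied with the free parameter $\alpha$ set as a function of $t$ to track the stronger deviation), integrating the resulting exponential tail in $t$ produces the desired residual term of the form $\tfrac{c_1}{m\varepsilon}\exp(-c_2\beta m)\,\mathbb{E}\mathcal{N}_1(\beta/35,\mathcal{H},z_1^m)$. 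The factor $1/m$ in this last term is harvested from the $t$-integration of an exponential tail whose rate is linear in $m$.

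The main obstacle will be the bookkeeping of the numerical constants $1/6$, $1/3$, $1/35$, $1/654$, and $436$. The first three are pinned down cleanly by the choice $\epsilon=1/7$ together with the formula $\alpha\epsilon/5$ from Lemma~\ref{Lemm:concentration-ineq2}; the exponential constant $1/654$ (in place of the naive $3/(160\cdot 49)$ obtained by plugging $\epsilon=1/7,B=4$ into the single-level bound) and the $1/m$ in the residual coefficient both arise from the tail-integration step rather than from one application of Lemma~\ref{Lemm:concentration-ineq2} at a fixed $\epsilon$. No further conceptual subtleties are anticipated; the Bernstein-type variance relation \eqref{key-stone} is not needed for this lemma, but it is precisely what will make the lemma useful when it is chained into the proof of Theorem~\ref{theorem:CONCENTRATION INEQUALITY 1}.
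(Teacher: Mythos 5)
Your proposal reproduces the paper's proof essentially step for step: apply Lemma~\ref{Lemm:concentration-ineq2} to $\mathcal{H}$ with $B=4$ and deviation level $\epsilon=1/7$ (giving cover radius $\alpha/35$), rearrange the ratio condition into $\tfrac{6}{m}\sum_{i}h_f(z_i)-8\mathbb{E}h_f(z)\le\alpha$ on the good event, and pass to the expectation via $\mathbb{E}[\xi]=\int_0^\infty\mathbb{P}[\xi>t]\,dt$ with the integral split at $\beta$. The only cosmetic difference is that the paper tail-integrates the $h_f$-level quantity $\tfrac{6}{m}\sum_i h_f-8\mathbb{E}h_f$ first and divides by $\varepsilon+\mathbb{E}h_f\ge\varepsilon$ once at the very end, which avoids the $t$-dependent re-parameterization of $\alpha$ you anticipate; and your intuition about the constant is slightly off --- the $1/654$ is simply $\lceil 1960/3\rceil$ coming straight out of $\int_\beta^\infty e^{-3\alpha m/1960}\,d\alpha=\tfrac{1960}{3m}e^{-3\beta m/1960}$, not a sharpening gained in the integration step.
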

\begin{IEEEproof}
Since $|h_f(z)|\leq 4$,  for $\varepsilon=1/7$,
it follows from Lemma \ref{Lemm:concentration-ineq2} with¡¡ $\mathcal G=\mathcal H$ and $B=4$
that with confidence
 $$
1- 4\mathbb{E} {\cal N}_1(\frac{\alpha }{35}, {\cal H}, z_1^m)\exp\left(-\frac{3 \alpha m}{1960}\right),
 $$
for all $h_f\in \mathcal H$, there holds
\begin{eqnarray*}
    \frac{1}{m}\sum_{i=1}^m 6h_f(z_i) - 8\mathbb{E}h_f(z)  \leq \alpha.
\end{eqnarray*}
For arbitrary $\beta\geq 0$ and $f\in\mathcal F$, if $  \frac{6}{m}\sum_{i=1}^m h_f(z_i) - 8 \mathbb{E}h_f(z) \geq0$,
we
 apply the formula
\begin{equation}\label{Expectedform}
  \mathbb E[\xi] =\int_0^\infty \mathbb P\left[\xi > t\right] d t
\end{equation}
and obtain
\begin{eqnarray*}
     && \mathbb{E} \left[ \frac{6}{m}\sum_{i=1}^m h_f(z_i) - 8 \mathbb{E}h_f(z)\right]
     \leq \beta \nonumber \\
     &+&
     4\int_{\beta}^\infty  \mathbb{E} {\cal N}_1(\frac{\alpha }{35}, {\cal H}, z_1^m)\exp \left(-\frac{3 \alpha m}{1960}\right)d\alpha \nonumber \\
     &\leq&
     \beta+4\mathbb{E} {\cal N}_1(\frac{\beta}{35}, {\cal H}, z_1^m)
      \int_{\beta}^\infty \exp\left(-\frac{3 \alpha m}{1960}\right)d\alpha \nonumber \\
      &\leq&
      \beta+ \frac{2616}{m}\exp\left(-\frac{\beta m}{654}\right)\mathbb{E} {\cal N}_1(\frac{\beta}{35}, {\cal H}, z_1^m).
\end{eqnarray*}
If $  \frac{6}{m}\sum_{i=1}^m h_f(z_i) - 8 \mathbb{E}h_f(z) <0$, the above estimate also holds trivially.
Then for arbitrary $\varepsilon>0$ and $f\in\mathcal F$, it follows from the above estimate that \begin{eqnarray*}
   &
   \mathbb E[\frac{1}{m}\sum_{i=1}^m v_{f,\varepsilon}(z_i)]\leq\frac{2\mathbb E[h_f]+ \mathbb{E} \left[ \frac{6}{m}\sum_{i=1}^m h_f(z_i) -8  \mathbb{E}h_f(z)\right]}{6\varepsilon}\\
   &\leq
   \frac1{6\varepsilon}\left(2 \mathbb E h_f+\beta+ \frac{2616}{m}\exp\left(-\frac{\beta m}{654}\right)\mathbb{E} {\cal N}_1(\frac{\beta}{35}, {\cal H}, z_1^m)\right).
\end{eqnarray*}
This completes the proof of Lemma \ref{Lemm:maintool1}.
\end{IEEEproof}

The third tool of our proof is a simplified Talagrand's inequality, which can be easily deduced from Theorem 7.5 and Lemma 7.6 in \cite{Steinwart2008}.

\begin{lemma}\label{Lemma:Talagrand inequality}
Let $\varepsilon>0$, $B\geq0$
and $\sigma\geq0$ be constants such that
$\mathbb E[v_{f,\varepsilon}^2]\leq\sigma^2$  and $\|v_{f,\varepsilon}\|_\infty\leq B$ for all $v_{f,\epsilon}\in\mathcal
V_{\epsilon}$.
Then, for any $\tau>0$,  $\gamma>0$ and $f\in\mathcal F$, with confidence $1-e^{-\tau}$, there holds
\begin{eqnarray*}
       &   \sup_{ v_{f,\varepsilon} \in \mathcal
        V_{\varepsilon}} \frac1m\sum_{i=1}^mv_{f,\varepsilon}(z_i)
        \leq
        \sqrt{\frac{2\tau\sigma^2}{m}}+\left(\frac23+\frac1\gamma\right)
        \frac{\tau B}{m} \\
        &+(1+\gamma) \mathbb{E} \left[\sup_{ v_{f,\varepsilon} \in \mathcal
        V_{\varepsilon}} \frac1m\sum_{i=1}^mv_{ f,\varepsilon}(z_i)\right].
\end{eqnarray*}
\end{lemma}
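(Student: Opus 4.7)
The overall strategy is to localize the quadratic-type variance control in (\ref{key-stone}) through the normalized class $\mathcal{V}_\varepsilon$ and then push it through Talagrand's inequality. For a free parameter $\varepsilon>0$, I observe the identity
\[
\mathcal{E}_D(f)-\mathcal{E}_D(f_\rho)-[\mathcal{E}(f)-\mathcal{E}(f_\rho)] = (\varepsilon+\mathbb{E}h_f)\cdot\tfrac{1}{m}\textstyle\sum_{i=1}^m v_{f,\varepsilon}(z_i).
\]
Since $|f|\le 1$ and $f_\rho = 2\eta-1\in[-1,1]$ by (\ref{reg-relation}), one has $|h_f|\le 4$, and hence $\|v_{f,\varepsilon}\|_\infty\le 8/\varepsilon=:B$. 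Combining with (\ref{key-stone}) also yields $\mathbb{E}v_{f,\varepsilon}^2\le \tfrac{32\mathbb{E}h_f}{(\varepsilon+\mathbb{E}h_f)^2}\le 32/\varepsilon=:\sigma^2$.

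\textbf{Talagrand plus expected supremum.} Applying Lemma \ref{Lemma:Talagrand inequality} to $\mathcal{V}_\varepsilon$ with $\tau=\log(1/\delta)$, the above $B,\sigma^2$, and a constant $\gamma>0$ to be chosen yields, with probability at least $1-\delta$,
\[
\tfrac{1}{m}\textstyle\sum_i v_{f,\varepsilon}(z_i) \le \sqrt{\tfrac{64\tau}{m\varepsilon}} + \bigl(\tfrac{2}{3}+\tfrac{1}{\gamma}\bigr)\tfrac{8\tau}{m\varepsilon} + (1+\gamma)\mathbb{E}\bigl[\sup_{v\in\mathcal{V}_\varepsilon} \tfrac{1}{m}\textstyle\sum v(z_i)\bigr].
\]
To control the expected supremum I replay the derivation of Lemma \ref{Lemm:maintool1}: invoking Lemma \ref{Lemm:concentration-ineq2} with $\epsilon=1/7$, $B=4$, $\mathcal{G}=\mathcal{H}$ gives the uniform tail $\mathbb{P}[\sup_f(\tfrac{6}{m}\sum h_f - 8\mathbb{E}h_f)>\alpha]\le 4\mathbb{E}\mathcal{N}_1(\alpha/35,\mathcal{H},z_1^m)\exp(-3\alpha m/1960)$; integrating via the layer-cake formula and splitting at $\beta$ delivers $\mathbb{E}[\sup_f(\tfrac{6}{m}\sum h_f - 8\mathbb{E}h_f)_+]\le \beta+\tfrac{2616}{m}e^{-\beta m/654}\mathbb{E}\mathcal{N}_1(\beta/35,\mathcal{H},z_1^m)$. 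Combining this with the pointwise identity $\tfrac{1}{m}\sum v_{f,\varepsilon}=\tfrac{(\tfrac{6}{m}\sum h_f - 8\mathbb{E}h_f)+2\mathbb{E}h_f}{6(\varepsilon+\mathbb{E}h_f)}$ and $\mathbb{E}h_f/(\varepsilon+\mathbb{E}h_f)\le 1$ converts it into the required bound on $\mathbb{E}[\sup_v\tfrac{1}{m}\sum v]$.

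\textbf{Covering reduction, multiplying through, and constant tuning.} Because $\phi(t)=(\max(0,1-t))^2$ is Lipschitz on $[-1,1]$ with an absolute constant, $|h_f(z)-h_{f'}(z)|\le L\,|f(x)-f'(x)|$, so $\mathcal{N}_1(\cdot,\mathcal{H},z_1^m)$ can be translated into $\mathcal{N}_1(\cdot,\mathcal{F},x_1^m)$ with a rescaled radius; after the final constant adjustment this produces the $\mathcal{N}_1(\beta/70,\mathcal{F},x_1^m)$ and the leading $1164/m$ appearing in the conclusion. I then multiply the Talagrand bound through by $\varepsilon+\mathbb{E}h_f$ to recover the LHS. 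The $\sqrt{\,}$-term is linearized by AM-GM $(\varepsilon+\mathbb{E}h_f)\sqrt{A}\le \lambda(\varepsilon+\mathbb{E}h_f)^2 + A/(4\lambda)$ together with $(\varepsilon+\mathbb{E}h_f)^2\le 2\varepsilon^2+8\mathbb{E}h_f$ (using $\mathbb{E}h_f\le 4$); analogous algebra dispatches the remaining products. Picking $\varepsilon$ as a small absolute constant (so that $\varepsilon^2$ is absorbed into the $\tau/m$ term) and tuning $\gamma$ and $\lambda$ makes the aggregate coefficient on $\mathbb{E}h_f$ equal to $17/18$ and that on $\beta$ equal to $4/9$, delivering the numerics $1211$, $1164$, $654$, $70$. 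The hard part will be exactly this simultaneous balancing of three competing budgets — the Talagrand factor $(1+\gamma)$, the AM-GM slack $\lambda$, and the bias quotient $\mathbb{E}h_f/(\varepsilon+\mathbb{E}h_f)\le 1$ — so that the coefficient of $\mathbb{E}h_f$ on the right stays strictly below one while producing clean $\beta$ and $\log(1/\delta)/m$ remainders; a subsidiary subtlety is extracting from the proof of Lemma \ref{Lemm:maintool1} the uniform-in-$f$ expected-supremum bound (rather than its stated single-$f$ form) that is needed inside Talagrand's expectation term.
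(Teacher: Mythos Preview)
You have proved the wrong statement. Your entire write-up is a sketch of the proof of Theorem~\ref{theorem:CONCENTRATION INEQUALITY 1} (the concentration inequality (\ref{con-1-1})), in which Lemma~\ref{Lemma:Talagrand inequality} is \emph{applied} as a black box. Indeed, your second paragraph begins ``Applying Lemma~\ref{Lemma:Talagrand inequality} to $\mathcal V_\varepsilon$\ldots'', and everything thereafter---the use of Lemma~\ref{Lemm:maintool1} for the expected supremum, the covering-number translation from $\mathcal H$ to $\mathcal F$, the AM--GM balancing to get the coefficient $17/18$---tracks the paper's proof of Theorem~\ref{theorem:CONCENTRATION INEQUALITY 1}, not of the lemma you were asked to establish.

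Lemma~\ref{Lemma:Talagrand inequality} itself is a version of Talagrand's concentration inequality for the supremum of a centered empirical process, and the paper does not prove it at all: it is quoted directly from Theorem~7.5 and Lemma~7.6 of Steinwart--Christmann, \emph{Support Vector Machines}. A genuine proof would start from Bousquet's (or Klein--Rio's) form of Talagrand's inequality for bounded empirical processes, which gives a deviation bound for $\sup_{g}\frac1m\sum_i g(z_i)$ in terms of its expectation, the uniform variance bound $\sigma^2$, and the envelope $B$; the factor $(1+\gamma)$ in front of the expected supremum and the $(\tfrac23+\tfrac1\gamma)$ in front of $\tau B/m$ then come from the standard Young-inequality decoupling of the square-root term in that bound. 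None of this appears in your proposal.
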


With these tools, we are now in the position to prove Theorem \ref{theorem:CONCENTRATION INEQUALITY 1}.
\begin{IEEEproof}
 For
arbitrary $f\in \mathcal F$, we have
  $\|h_f\|_\infty\leq 4$ and $\|h_f-\mathbb{E}h_f\|_\infty\leq 8$.
   Let
$\varepsilon\geq2\inf_{f\in \mathcal F}\mathbb E[h_f]$. Then for arbitrary
$v_{f,\varepsilon}\in\mathcal V_{\varepsilon}$, there
exists an $f\in \mathcal F$ such that $
v_{f,\varepsilon}=\frac{ h_f-\mathbb E[h_f]}{\mathbb E[h_f]+\varepsilon}$.
Then, we get from \eqref{key-stone} that
\begin{equation}\label{1.con.1}
     \|v_{f,\varepsilon}\|_\infty\leq\frac{8}{\varepsilon}=:B,
\end{equation}
and
\begin{equation}\label{1.con.2}
     \mathbb E[v^2_{f,\varepsilon}]\leq
     \frac{\mathbb E[h_f^2]}{(\mathbb E[h_f]+\varepsilon)^2}
     \leq  \frac{32\mathbb E[h_f]}{(\mathbb E[h_f]+\varepsilon)^2} \leq \frac{16}{\varepsilon}.
\end{equation}
Then Lemma
\ref{Lemma:Talagrand inequality} with $\gamma=1/3$ and $\varepsilon\geq \inf_{ f\in
\mathcal F }\mathbb E[h_f]$,   Lemma \ref{Lemm:maintool1},   (\ref{1.con.1})
and (\ref{1.con.2}) that with confidence at least $1-e^{-\tau}$, for any $f\in\mathcal F$,
there holds
\begin{eqnarray*}
        && \frac1m\sum_{i=1}^mv_{f,\varepsilon}(z_i)
        \leq
        \sqrt{\frac{32\tau}{m\varepsilon}}+\frac{88\tau}{3m\varepsilon}
        +\frac4{9\varepsilon}\mathbb E[h_f]\\
        &+&
           \frac{2\beta}{9\varepsilon}+ \frac{582}{m\varepsilon}\exp\left(-\frac{\beta m}{654}\right)\mathbb{E} {\cal N}_1(\frac{\beta}{35}, {\cal H}, z_1^m).
\end{eqnarray*}
For arbitrary $f\in \mathcal F$, set $\tau=\log\frac1\delta$ and
$\varepsilon= \mathcal E(f)-\mathcal E(f_\rho) \geq \inf_{f\in \mathcal
F}\mathbb E[h_f]$. It follows from \eqref{Eq:exp-h} that , with
confidence $1-\delta$, there holds
\begin{eqnarray*}
        && \mathcal E_D(f)-\mathcal
       E_D(f_\rho)-(\mathcal E(f)-\mathcal E(f_\rho))\\
       &\leq&
       \frac8{9 }\mathbb E[h_f]+\sqrt{\frac{128(\mathcal E(f)-\mathcal E(f_\rho))}{m}\log\frac1\delta}+\frac{176}{3m}\log\frac1\delta\\
        &+&
        \frac{4\beta}9+ \frac{1164}{m }\exp\left(-\frac{\beta m}{654}\right)\mathbb{E} {\cal N}_1(\frac{\beta}{35}, {\cal H}, z_1^m).
\end{eqnarray*}
For arbitrary $h_{f_1},h_{f_2}\in\mathcal H$, we have
\begin{eqnarray*}
\|h_{f_1}-h_{f_2}\|_{\ell^1}=\|\phi(yf_1)- \phi(yf_2)\|_{\ell^1}
   \leq
   2\|f_1-f_2\|_{\ell^1},
\end{eqnarray*}
Then,
$$
     \mathbb{E} {\cal N}_1(\frac{\beta}{35}, {\cal H}, z_1^m)
     \leq
     \mathbb{E} {\cal N}_1(\frac{\beta}{70}, {\cal F}, x_1^m).
$$
Noting further that the
 element inequality $\sqrt{ab}\leq\frac12(a+b)$ for
$a,b>0$ yields
$$
  \sqrt{\frac{64(\mathcal E(f)-\mathcal E(f_\rho))}{m}\log\frac1\delta}
  \leq \frac1{18}(\mathcal E(f)-\mathcal E(f_\rho))+\frac{1152}{m}\log\frac1\delta.
$$
This proves (\ref{con-1-1}) and  completes the proof of Theorem
\ref{theorem:CONCENTRATION INEQUALITY 1}.
\end{IEEEproof}

\subsection{Numerical convergence without boundedness assumption}
In this part, we show the fast convergence rate of FCGBoosting without imposing any boundedness assumption. Our proof is motivated by  \cite{Shalev-Shwartz2010} by taking the special property of the squared hinge.  The following numerical convergence rate is another main tool in our proof.

\begin{proposition}
\label{Proposition:hypothesis-error}
For arbitrary $h\in \mathrm{span}{\cal G}_n$,  we have
\begin{align}
\label{Eq:hypothesis-error-bound}
\mathcal E_D(f_{D,k})-\mathcal E_D(h) \leq \frac{4\|h\|_{\ell^1}^2}{k}.
\end{align}
\end{proposition}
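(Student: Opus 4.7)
My strategy mirrors the standard analysis of orthogonal/fully-corrective greedy algorithms (à la Shalev-Shwartz–Srebro–Zhang and Barron et al.), specialized to the squared hinge loss, which gives a clean $2$-smoothness bound on $\mathcal{E}_D$. Set $B := \|h\|_{\ell^1}$, write $h = \sum_{j=1}^n \alpha_j g_j$ with $\sum_j |\alpha_j| = B$ (using that $\mathrm{span}\,\mathcal{G}_n$ is taken over real coefficients, so we may assume positive weights by absorbing signs into the $g_j$'s), and let $\epsilon_k := \mathcal{E}_D(f_{D,k}) - \mathcal{E}_D(h)$. I will show a one-step progress inequality of the form $\epsilon_k \leq \epsilon_{k-1} - \epsilon_{k-1}^2/(4B^2)$, from which the standard telescoping trick $1/\epsilon_k \geq 1/\epsilon_{k-1} + 1/(4B^2)$ delivers $\epsilon_k \leq 4B^2/k$.

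\textbf{Key step 1 (smoothness).} Since $\phi_{h^2}(t) = (1-t)_+^2$ satisfies $\phi_{h^2}''(t) \leq 2$ wherever defined, one has the quadratic upper bound $\phi_{h^2}(s) \leq \phi_{h^2}(t) + \phi_{h^2}'(t)(s-t) + (s-t)^2$. Averaging over the sample and using $\|g_j\|_\infty \leq 1$ yields, for any $f \in \mathrm{span}\,\mathcal{G}_n$, any $g \in \mathcal{G}_n$, and any $\beta \in \mathbb{R}$,
\begin{equation*}
\mathcal{E}_D(f + \beta g) \;\leq\; \mathcal{E}_D(f) + \beta\,\langle \nabla \mathcal{E}_D(f), g\rangle + \beta^2.
\end{equation*}
Applied at $f = f_{D,k-1}$ with $g = g_{j_k}$ and $r_k := -\langle \nabla \mathcal{E}_D(f_{D,k-1}), g_{j_k}\rangle$, and then minimized over $\beta$ (optimal $\beta = r_k/2$), this bound gives $\mathcal{E}_D(f_{D,k-1} + (r_k/2) g_{j_k}) \leq \mathcal{E}_D(f_{D,k-1}) - r_k^2/4$. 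Because the fully-corrective step \eqref{FCB} minimizes $\mathcal{E}_D$ over all of $\mathrm{span}\{g_j\}_{j \in \mathcal{T}^k}$, which contains $f_{D,k-1} + (r_k/2) g_{j_k}$, we obtain
\begin{equation*}
\mathcal{E}_D(f_{D,k}) \;\leq\; \mathcal{E}_D(f_{D,k-1}) - r_k^2/4.
\end{equation*}

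\textbf{Key step 2 (lower bound on the greedy gain).} By the first-order optimality of the FCG step, $\nabla \mathcal{E}_D(f_{D,k-1})$ is orthogonal to $\mathrm{span}\{g_j\}_{j\in\mathcal{T}^{k-1}}$, hence $\langle \nabla \mathcal{E}_D(f_{D,k-1}), f_{D,k-1}\rangle = 0$. Combining this with convexity of $\mathcal{E}_D$ applied at $h$ and $f_{D,k-1}$,
\begin{equation*}
\mathcal{E}_D(h) - \mathcal{E}_D(f_{D,k-1}) \;\geq\; \langle \nabla \mathcal{E}_D(f_{D,k-1}), h - f_{D,k-1}\rangle \;=\; \sum_{j} \alpha_j \langle \nabla \mathcal{E}_D(f_{D,k-1}), g_j\rangle.
\end{equation*}
The last sum is bounded below by $-B \max_j |\langle \nabla \mathcal{E}_D(f_{D,k-1}), g_j\rangle| = -B r_k$, the maximum being attained at $g_{j_k}$ by the greedy rule. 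Therefore $r_k \geq \epsilon_{k-1}/B$ whenever $\epsilon_{k-1} > 0$ (the case $\epsilon_{k-1} \leq 0$ makes the target inequality trivial since $\mathcal{E}_D$ is monotone nonincreasing along the iterates).

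\textbf{Finishing.} Substituting $r_k \geq \epsilon_{k-1}/B$ into the progress inequality yields $\epsilon_k \leq \epsilon_{k-1} - \epsilon_{k-1}^2/(4B^2)$. Dividing through and using $1/(1-x) \geq 1 + x$ gives $1/\epsilon_k \geq 1/\epsilon_{k-1} + 1/(4B^2)$, so iterating $k$ times produces $1/\epsilon_k \geq k/(4B^2)$, i.e.\ $\epsilon_k \leq 4B^2/k$, which is exactly \eqref{Eq:hypothesis-error-bound}. The main technical obstacle is Key step 1: one must exploit the quadratic structure of the squared hinge to get the clean $2$-smoothness bound that makes the line search contribution $r_k^2/4$ clean enough for the recursion to close; with the logistic or exponential loss the corresponding constant would blow up without a boundedness assumption, which is precisely why the squared hinge is chosen in Algorithm~\ref{alg:boosting}.
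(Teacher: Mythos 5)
Your proof is correct and follows essentially the same route as the paper's: both arguments rely on the $2$-smoothness of the squared hinge to get a per-step quadratic upper bound, on the first-order optimality (orthogonality) of the FCG minimizer so that $\langle \nabla \mathcal{E}_D(f_{D,k-1}), f_{D,k-1}\rangle = 0$, on convexity of $\mathcal{E}_D$, and on an $\ell^1$-weighted dual-norm step; the paper keeps the line-search parameter $\eta$ free in a $|\alpha_j|$-weighted average and picks $\eta = \epsilon_k/(2s)$ at the end, whereas you minimize over $\beta$ first and lower-bound the greedy gain $r_k \geq \epsilon_{k-1}/B$, but these are algebraically the same derivation, and your reciprocal telescoping is exactly what the paper's Lemma~\ref{Lemm:sequence-ineq} encodes. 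One small caveat that you share with the paper's own proof: your claim that ``the maximum $\max_j |\langle \nabla \mathcal{E}_D(f_{D,k-1}), g_j\rangle|$ is attained at $g_{j_k}$ by the greedy rule'' (and likewise the paper's inequality $\mathcal{E}_D(f_{D,k+1}) \leq \min_{j\in V}\min_\eta \mathcal{E}_D(f_{D,k}+\eta\,\mathrm{sgn}(h_j)g_j)$) implicitly assumes the dictionary is sign-symmetric, since Algorithm~\ref{alg:boosting} as written selects $g_{j_k}$ by maximizing $-\langle \nabla \mathcal{E}_D(f_{D,k-1}), g_j\rangle$ rather than its absolute value; this is a standard and harmless convention in the greedy-approximation literature, but neither proof states it explicitly.
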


It can be found in \cite{Barron2008} and
Proposition \ref{Proposition:hypothesis-error} that the numerical convergence rates for FCGBoosting are the same for the square loss and squared hinge loss. To prove the above proposition,
we need the following lemma, which was proved in \cite[Lemma B.2]{Shalev-Shwartz2010}.
\begin{lemma}
\label{Lemm:sequence-ineq}
Let $c>0$ and let $\gamma_0, \gamma_1, \ldots $ be a sequence such that $\gamma_{t+1} \leq \gamma_t - c \gamma_t^2$ for all $t$. Let $\epsilon$ be a positive scalar and $k$ be a positive integer such that $k \geq \lceil \frac{1}{c\epsilon}\rceil$. Then $\gamma_k \leq \epsilon$.
\end{lemma}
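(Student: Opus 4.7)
The plan is to set $\gamma_k := \mathcal E_D(f_{D,k}) - \mathcal E_D(h)$ and establish the recurrence $\gamma_{k+1} \leq \gamma_k - \frac{\gamma_k^2}{4\|h\|_{\ell^1}^2}$, after which Lemma \ref{Lemm:sequence-ineq} with $c = 1/(4\|h\|_{\ell^1}^2)$ and $\epsilon = 4\|h\|_{\ell^1}^2/k$ closes the argument. All the real work is in deriving this one-step decrease, which I would build from three ingredients: smoothness of the squared hinge loss, the optimality conditions for the fully-corrective step, and the fact that the greedy index $j_{k+1}$ picks the steepest descent direction in $\mathcal{G}_n$.

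First I would exploit the second-order structure of $\phi_{h^2}$: since $\phi_{h^2}''(u) \leq 2$ wherever it is differentiable, and $|y_i g(x_i)| \leq 1$ for any $g \in \mathcal{G}_n$, a second-order Taylor expansion gives the quadratic upper bound
\begin{equation*}
\mathcal E_D(f_{D,k} + \beta g_{j_{k+1}}) \leq \mathcal E_D(f_{D,k}) + \beta \langle \nabla \mathcal E_D(f_{D,k}), g_{j_{k+1}}\rangle + \beta^2
\end{equation*}
for every $\beta \in \mathbb{R}$. Because $f_{D,k+1}$ minimizes $\mathcal E_D$ over $\mathrm{span}\{g_{j_i}\}_{i=1}^{k+1}$ and the candidate $f_{D,k} + \beta g_{j_{k+1}}$ lives in that span, optimizing the right-hand side in $\beta$ yields
\begin{equation*}
\mathcal E_D(f_{D,k+1}) \leq \mathcal E_D(f_{D,k}) - \tfrac{1}{4}\langle \nabla \mathcal E_D(f_{D,k}), g_{j_{k+1}}\rangle^2.
\end{equation*}

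Next I would lower-bound the magnitude of this greedy inner product by $\gamma_k/\|h\|_{\ell^1}$. Writing $h = \sum_j \alpha_j g_j$ (absorbing signs into the dictionary elements if $\mathcal{G}_n$ is taken to be symmetric, as is standard in such analyses), the FCG optimality condition at iteration $k$ forces $\nabla \mathcal E_D(f_{D,k})$ to annihilate every previously selected atom, hence $\langle \nabla \mathcal E_D(f_{D,k}), f_{D,k}\rangle = 0$. Combined with convexity of $\mathcal E_D$, this gives
\begin{equation*}
-\langle \nabla \mathcal E_D(f_{D,k}), h\rangle = \langle \nabla \mathcal E_D(f_{D,k}), f_{D,k} - h\rangle \geq \mathcal E_D(f_{D,k}) - \mathcal E_D(h) = \gamma_k.
\end{equation*}
Since $-\langle \nabla \mathcal E_D(f_{D,k}), h\rangle \leq \|h\|_{\ell^1} \cdot \max_j |\langle \nabla \mathcal E_D(f_{D,k}), g_j\rangle|$ and the greedy rule selects $g_{j_{k+1}}$ to maximize this quantity, I obtain $|\langle \nabla \mathcal E_D(f_{D,k}), g_{j_{k+1}}\rangle| \geq \gamma_k / \|h\|_{\ell^1}$. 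Substituting into the one-step bound above produces $\gamma_{k+1} \leq \gamma_k - \gamma_k^2/(4\|h\|_{\ell^1}^2)$, and Lemma \ref{Lemm:sequence-ineq} finishes.

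The main obstacle I anticipate is the fully-corrective orthogonality step: one must justify that $\nabla \mathcal E_D(f_{D,k})$ is genuinely orthogonal to $\mathrm{span}\{g_{j_i}\}_{i=1}^{k}$ despite $\mathcal E_D$ being only piecewise quadratic (the squared hinge is non-smooth where $1 - y_i f(x_i) = 0$). This is handled by noting that the subdifferential still contains a selection behaving as a gradient and that the minimizer in \eqref{FCB} satisfies first-order conditions almost everywhere in parameter space; once this is granted, the identity $\langle \nabla \mathcal E_D(f_{D,k}), f_{D,k}\rangle = 0$ follows because $f_{D,k}$ itself lies in the span of the previously chosen atoms. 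The other minor issue is ensuring the symmetry convention on $\mathcal{G}_n$ so that the $\ell^1$-representation of $h$ carries nonnegative coefficients; this is a standard reduction and does not change the constant $4$.
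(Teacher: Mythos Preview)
Your proposal does not prove the stated lemma at all. Lemma~\ref{Lemm:sequence-ineq} is a purely elementary fact about abstract real sequences: given \emph{any} sequence $(\gamma_t)$ satisfying $\gamma_{t+1}\le \gamma_t - c\gamma_t^2$, one must show $\gamma_k\le\epsilon$ whenever $k\ge\lceil 1/(c\epsilon)\rceil$. There is nothing to ``set'' here---$\gamma_t$ is a hypothetical sequence, not something you get to define. What you have written is instead a proof of Proposition~\ref{Proposition:hypothesis-error} (the FCGBoosting convergence bound), which \emph{applies} Lemma~\ref{Lemm:sequence-ineq} after establishing the recurrence for the particular choice $\gamma_k=\mathcal E_D(f_{D,k})-\mathcal E_D(h)$. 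You have confused the tool with the result that uses it.

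For the record, the paper does not prove Lemma~\ref{Lemm:sequence-ineq} either; it simply cites it from \cite[Lemma~B.2]{Shalev-Shwartz2010}. A self-contained argument would go roughly as follows: if some $\gamma_t\le\epsilon$ already, the recurrence forces $\gamma_{t+1}\le\gamma_t$ (since $c\gamma_t^2\ge0$), so the bound persists. Otherwise $\gamma_t>\epsilon$ for all $t<k$, and dividing $\gamma_{t+1}\le\gamma_t(1-c\gamma_t)$ by $\gamma_t\gamma_{t+1}$ gives $\frac{1}{\gamma_{t+1}}-\frac{1}{\gamma_t}\ge c$, which telescopes to $\frac{1}{\gamma_k}\ge\frac{1}{\gamma_0}+ck\ge ck\ge\frac{1}{\epsilon}$. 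Your entire write-up---smoothness of $\phi_{h^2}$, FCG optimality, the greedy selection bound---belongs to the proof of Proposition~\ref{Proposition:hypothesis-error}, and indeed tracks the paper's own argument for that proposition fairly closely.
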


Based on this lemma, we estimate the upper bound of ${\cal H}(D,k,h)$ in the following proposition. Similar results can be found in \cite[Theorem 2.7]{Shalev-Shwartz2010} for the general smooth type loss functions. We provide its proof here for the sake of completeness.

\begin{proof}[Proof of Proposition \ref{Proposition:hypothesis-error}]
Let $h = \sum_{j=1}^n \alpha_j g_j$ be an arbitrary function in $\mathrm{span}{\cal G}_n$. For $\alpha=(\alpha_1,\dots,\alpha_n)^T$, let $V:=V_{\alpha}:= \mathrm{supp}(\alpha)$ be the support of $\alpha$, which implies
\begin{align}
\label{Eq:h-coeff}
\alpha_j = 0, \ j\in V^c.
\end{align}
%Without loss of generality, we firstly assume that $V\backslash{\cal T}^k \neq \emptyset$, while the proof under the case of $V\setminus {\cal T}^k = \emptyset$ will be shown latter.
By Algorithm \ref{alg:boosting},
\begin{align}
{\cal E}_D(f_{D,k+1})
&= \min_{\mathrm{supp}(f) = {\cal T}^{k+1}} {\cal E}_D(f) \nonumber\\
&\leq \min_{j\in V}\min_{\eta} {\cal E}_D(f_{D,k} + \eta \cdot \mathrm{sgn}(h_j) g_j). \label{Eq:Err-k+1}
\end{align}
By the Lipschitz continuity of $\phi'$ with the Lipschtz constant $L=2$ and $\|g_j\|_\infty\leq 1$, \cite[Lemma B.1]{Shalev-Shwartz2010} (see also\cite{Nestrov2004}) shows
\begin{align*}
&{\cal E}_D(f_{D,k} + \eta \mathrm{sgn}(h_j)g_j)\\
&\leq {\cal E}_D(f_{D,k}) + \eta \mathrm{sgn}(h_j) \langle \nabla {\cal E}_D(f_{D,k}), g_j \rangle + \eta^2\\
&=: E_j(\eta)
\end{align*}
for any $\eta \in \mathbb{R}$ and $j\in V$. Let $s:= \sum_{j\in V} |\alpha_j| = \|h\|_{\ell^1}$,
then
\begin{align}
&s \min_{j\in V } E_j(\eta)
\leq \sum_{j\in V } |\alpha_j| E_j(\eta) \nonumber\\
&= s {\cal E}_D(f_{D,k}) + \eta \sum_{j\in V } \alpha_j \langle \nabla {\cal E}_D(f_{D,k}), g_j \rangle + s \eta^2. \label{Eq:sEj}
\end{align}
Since ${\cal E}_D(f_{D,k}) = \min_{\mathrm{supp}(f) = {\cal T}^k} {\cal E}_D(f)$, and $f_{D,k}=\sum_{j=1}^n \beta_j^k g_j$, there holds
\begin{align}
\label{Eq:opt-cond}
\langle \nabla {\cal E}_D(f_{D,k}), g_j \rangle =0, \ \forall j\in {\cal T}^k,
\end{align}
and $\beta_j^k =0$ for any $j\in ({\cal T}^k)^c$.
Thus,
\begin{align}
\sum_{j\in V } \alpha_j \langle \nabla {\cal E}_D(f_{D,k}), g_j \rangle
&=\sum_{j\in V \backslash {\cal T}^k} \alpha_j \langle \nabla {\cal E}_D(f_{D,k}), g_j \rangle \nonumber\\
&=\sum_{j\in V \backslash {\cal T}^k} (\alpha_j-\beta_j^k) \langle \nabla {\cal E}_D(f_{D,k}), g_j \rangle \nonumber\\
&=\sum_{j\in V \cup {\cal T}^k} (\alpha_j-\beta_j^k) \langle \nabla {\cal E}_D(f_{D,k}), g_j \rangle \nonumber\\
&=\langle \nabla {\cal E}_D(f_{D,k}), h-f_{D,k} \rangle, \label{Eq:inner-product}
\end{align}
where the first equality holds for \eqref{Eq:opt-cond},  the second equality holds for $\beta_j^k = 0, \forall j\in ({\cal T}^k)^c$,  the third equality holds for $\langle \nabla {\cal E}_D(f_{D,k}), g_j\rangle=0, \forall j\in {\cal T}^k$, and the final equality holds for $\alpha_j = \beta_j^k =0, \forall j\in (V\cup {\cal T}^k)^c$.
Furthermore, by the convexity of $\phi$, there holds
\begin{align}
\label{Eq:convex-ineq}
{\cal E}_D(h) - {\cal E}_D(f_{D,k}) \geq \langle \nabla {\cal E}_D(f_{D,k}), h-f_{D,k}\rangle.
\end{align}
Thus, by \eqref{Eq:Err-k+1}, \eqref{Eq:sEj}, \eqref{Eq:inner-product} and \eqref{Eq:convex-ineq}, there holds
\begin{align*}
&s {\cal E}_D(f_{D,k+1}) \leq s {\cal E}_D(f_{D,k})
-\eta ({\cal E}_D(f_{D,k}) - {\cal E}_D(h)) + s\eta^2
\end{align*}
for any $\eta \in \mathbb{R}$.
Taking $\eta = \frac{{\cal E}_D(f_{D,k}) - {\cal E}_D(h)}{2s}$, the above inequality yields
\begin{align*}
{\cal E}_D(f_{D,k+1})
%&\leq {\cal E}_D(f_{D,k}) - \frac{({\cal E}_D(f_{D,k}) - {\cal E}_D(h))^2}{4s^2}\\
&\leq {\cal E}_D(f_{D,k}) - \frac{({\cal E}_D(f_{D,k}) - {\cal E}_D(h))^2}{4\|h\|_{\ell^1}^2}.
\end{align*}
Denote $\epsilon_k = {\cal E}_D(f_{D,k}) - {\cal E}_D(h)$. The above inequality implies
\begin{align*}
\epsilon_{k+1} \leq \epsilon_k - \frac{\epsilon_k^2}{4\|h\|_{\ell^1}^2}.
\end{align*}
Then Proposition \ref{Proposition:hypothesis-error} follows from    Lemma \ref{Lemm:sequence-ineq}.
\end{proof}

\subsection{Learning rate analysis}
Based on Theorem \ref{theorem:CONCENTRATION INEQUALITY 1} and Proposition \ref{Proposition:hypothesis-error}, we are in a position to prove the following theorem, which is a key stone to prove Theorems \ref{Thm:main-corollary} and \ref{Thm:main-Tsybakov}.

\begin{theorem}
\label{Thm:main}
Let ${\cal G}_n := \{g_j\}_{j=1}^n$ be a set of dictionaries with $n \sim m^a$ for some $a\geq 1$.
Let $f_{D,k}$ be the predictor of Algorithm \ref{alg:boosting} after $k$ running iterations.
If Assumption \ref{assump:approx-error} holds, then for all $k\in \mathbb{N}$, with confidence at least $1-\delta$, there holds
\begin{align}\label{oracle}
     {\cal E}(\pi f_{D,k}) - {\cal E}(f_{\rho})
    \leq  \tilde{C}\left(n^{-2r}+k^{-1}
     +
  \frac{k\log m}{m}\log\frac2\delta\right).
\end{align}
where $\pi t=\min\{1,|t|\} \cdot \mathrm{sgn}(t)$ denotes the truncation of $t\in \mathbb R$ to $[-1,1]$ and $\tilde{C}$ is a constant depending only on $C_1$, $C_2$ and $a$, whose concrete value will be given in the proof.
\end{theorem}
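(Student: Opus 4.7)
The plan is to establish Theorem \ref{Thm:main} through a standard error decomposition that isolates approximation, sample, and optimization (numerical) errors, and then to control each term using the three tools just developed: Assumption \ref{assump:approx-error} (approximation), Theorem \ref{theorem:CONCENTRATION INEQUALITY 1} (sample/concentration), and Proposition \ref{Proposition:hypothesis-error} (optimization).

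\textbf{Step 1 (Decomposition).} Let $h := 2h_0 \in \mathrm{span}\,{\cal G}_n$. Since $f_\rho = 2\eta-1$ by \eqref{reg-relation}, Assumption \ref{assump:approx-error} yields $\|h-f_\rho\|_\rho \leq 2C_1 n^{-r}$ and $\|h\|_{\ell^1}\leq 2C_2$. I will decompose
\begin{align*}
{\cal E}(\pi f_{D,k}) - {\cal E}(f_\rho)
&= \underbrace{[{\cal E}(\pi f_{D,k}) - {\cal E}(f_\rho)] - [{\cal E}_D(\pi f_{D,k}) - {\cal E}_D(f_\rho)]}_{S_1}\\
&\quad + \underbrace{{\cal E}_D(\pi f_{D,k}) - {\cal E}_D(f_{D,k})}_{S_2}\\
&\quad + \underbrace{{\cal E}_D(f_{D,k}) - {\cal E}_D(h)}_{S_3}\\
&\quad + \underbrace{[{\cal E}_D(h) - {\cal E}_D(f_\rho)] - [{\cal E}(h) - {\cal E}(f_\rho)]}_{S_4}\\
&\quad + \underbrace{{\cal E}(h) - {\cal E}(f_\rho)}_{S_5}.
\end{align*}
The term $S_2\leq 0$ because for the squared hinge $\phi(yt) = (1-yt)_+^2$ is non-increasing in $yt$ on $(-\infty,1]$, so truncating $f$ to $[-1,1]$ can only decrease the pointwise loss (since $y\in\{-1,1\}$). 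The approximation term satisfies $S_5\leq \|h-f_\rho\|_\rho^2 \leq 4C_1^2 n^{-2r}$ by \eqref{quadratic type}. Proposition \ref{Proposition:hypothesis-error} yields $S_3 \leq 4\|h\|_{\ell^1}^2/k \leq 16C_2^2/k$.

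\textbf{Step 2 (Concentration on the hypothesis class).} To bound $S_1$, I apply Theorem \ref{theorem:CONCENTRATION INEQUALITY 1} to the $k$-sparse truncated class
\[
{\cal F}_k := \Bigl\{\pi\bigl(\sum_{j\in T}\beta_j g_j\bigr) : T\subset\{1,\dots,n\},\ |T|\leq k,\ \beta_j\in\mathbb R\Bigr\},
\]
which contains $\pi f_{D,k}$ and whose elements are bounded by $1$ in absolute value. The key quantity is $\mathbb{E}\,{\cal N}_1(\beta/70, {\cal F}_k, x_1^m)$. Using that ${\cal F}_k$ is a union over $\binom{n}{k}\leq n^k$ choices of a $T$ of truncated $k$-dimensional linear spans whose pseudo-dimension is $O(k)$, standard covering arguments (Haussler/Pollard) give
\[
\log {\cal N}_1(\varepsilon,{\cal F}_k,x_1^m) \lesssim k\log(n/\varepsilon).
\]
Choosing $\beta\asymp (k\log m)/m$ in Theorem \ref{theorem:CONCENTRATION INEQUALITY 1} (so that the exponential factor absorbs the $k\log n \asymp ka\log m$ term, using $n\sim m^a$) produces
\[
S_1 \leq \tfrac{17}{18}\bigl({\cal E}(\pi f_{D,k}) - {\cal E}(f_\rho)\bigr) + C\,\frac{k\log m}{m}\log\frac{2}{\delta}.
\]
For $S_4$, which involves only the fixed $h$, a one-function Bernstein inequality combined with the variance bound \eqref{key-stone} gives $S_4 \leq \frac{1}{18}({\cal E}(h)-{\cal E}(f_\rho)) + C\,\frac{\log(2/\delta)}{m}$ after absorbing a $\sqrt{\cdot}$ term via $\sqrt{ab}\leq (a+b)/2$.

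\textbf{Step 3 (Solving for the excess risk).} Plugging everything into the decomposition leaves $\tfrac{17}{18}({\cal E}(\pi f_{D,k}) - {\cal E}(f_\rho))$ on the right-hand side. Moving it to the left, multiplying by $18$, and collecting constants produces
\[
{\cal E}(\pi f_{D,k}) - {\cal E}(f_\rho) \leq \tilde C\Bigl(n^{-2r} + k^{-1} + \frac{k\log m}{m}\log\frac{2}{\delta}\Bigr),
\]
as claimed, with $\tilde C$ depending only on $C_1,C_2,a$.

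\textbf{Main obstacle.} The critical difficulty is controlling the empirical covering number of the hypothesis space without any a priori boundedness of the coefficients produced by the FCG step: the only reason ${\cal F}_k$ is a reasonable class is that we truncate \emph{after} running the algorithm. This is exactly why Theorem \ref{theorem:CONCENTRATION INEQUALITY 1} was stated for an $L^1$ (empirical) covering number rather than an $L^\infty$ one, and it is what forces the extra $\log m$ factor (arising from $\log n \asymp a\log m$) in the sample error. Handling this cleanly, and verifying that truncation does not increase the empirical loss, is the delicate part; the rest reduces to carefully tracking constants through Steps 1-3.
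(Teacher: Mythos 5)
Your proposal is correct and follows essentially the same route as the paper's proof: the same five-term error decomposition (the paper writes it as an inequality with the truncation step absorbed via their display \eqref{Truncation}, you write it as an exact identity with an explicit $S_2\le 0$, which is equivalent), the same use of Proposition~\ref{Proposition:hypothesis-error} for the optimization term, the same application of Theorem~\ref{theorem:CONCENTRATION INEQUALITY 1} on the truncated $k$-sparse class ${\cal F}_k$ with $\beta\asymp k\log m/m$ to absorb the covering number, and a one-function Bernstein bound for the fixed comparator $h$. The only cosmetic differences are that you take $h=2h_0$ (which is actually the careful choice, since $f_\rho=2(\eta-1/2)$; the paper somewhat loosely writes ${\cal A}({\cal G}_n,h_0)\le C_1^2 n^{-2r}$ without the factor $4$), and you invoke a generic Haussler/Pollard pseudo-dimension argument for $\log{\cal N}_1\lesssim k\log(n/\varepsilon)$ where the paper instead cites the explicit bound from Barron et al.\ (their Lemma~\ref{Lemm:covering-number}); both give the same order and lead to the identical choice of $\beta$ and the identical final estimate.
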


 The proof idea is somewhat standard \cite{Lin2013,Lin2017} which devotes to decomposing the generalization error into three terms: approximation error, sample error and hypothesis error.  The approximation error can be derived from Assumption \ref{assump:approx-error} and (\ref{quadratic type}), the hypotheses error can be deduced from Proposition \ref{Proposition:hypothesis-error} and the sample error error can be derived using the concentration inequality in Theorem \ref{theorem:CONCENTRATION INEQUALITY 1} and a covering number estimate of the hypothesis space.

 Given any set $\Lambda \subset {\cal G}_n,$ we define ${\cal J}_{\Lambda}:= \mathrm{span}\{g: g\in \Lambda\}$ and denote by $\pi {\cal J}_{\Lambda}:= \{\pi f: f\in {\cal J}_{\Lambda}\}$ the set of all truncations of the elements of ${\cal J}_{\Lambda}$. We then define
\begin{align}
\label{Eq:funcation-space}
{\cal F}_k := \bigcup_{\Lambda \subset {\cal G}_n, \sharp (\Lambda)\leq k} \pi {\cal J}_{\Lambda},
\end{align}
where $\sharp (\Lambda)$ represents the cardinality of set $\Lambda$ and
  $\pi f(x) = \min\{1,|f(x)|\}\cdot\mathrm{sgn}(f(x))$ is the truncation operator on $f(x)$ to $[-1,1]$.
Due to \cite[Lemma 3.3]{Barron2008} with $B=1$, there holds
\begin{align*}
{\cal N}_1(\epsilon,{\cal F}_k) \leq 3 m^{ak}\left(\frac{2e}{\epsilon}\log \frac{3e}{\epsilon} \right)^{k+1}.
\end{align*}
Noting further $\sup_{x_1^m} {\cal N}_1(\epsilon,{\cal F}_k,x_1^m) \leq {\cal N}_1(\epsilon,{\cal F}_k)$, we have the following covering number estimates.
%$L_1(v) \leq L_{\infty}(v)$.

\begin{lemma}
\label{Lemm:covering-number}
Let ${\cal F}_k$ be defined as in \eqref{Eq:funcation-space}. Assume that $n \sim m^a$ for some $a\geq 1$.
Then for any probability measure $v$, for any $\epsilon>0$, we have the following $L_1$ empirical covering number estimate of ${\cal F}_k$
\begin{align}
\label{Eq:covering-number}
\sup_{x_1^m} {\cal N}_1(\epsilon,{\cal F}_k,x_1^m) \leq 3 m^{ak}\left(\frac{2e}{\epsilon}\log \frac{3e}{\epsilon} \right)^{k+1}.
\end{align}
\end{lemma}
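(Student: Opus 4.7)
The strategy is to exploit the union-of-subspaces structure of ${\cal F}_k$, bound the covering number of each truncated finite-dimensional piece with an off-the-shelf estimate, and then combine via a union bound over index sets.

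Concretely, first I would observe that by the definition \eqref{Eq:funcation-space}, ${\cal F}_k = \bigcup_{\Lambda} \pi {\cal J}_{\Lambda}$, where $\Lambda$ ranges over subsets of ${\cal G}_n$ of cardinality at most $k$. The number of such index sets is at most $\sum_{j=0}^{k}\binom{n}{j} \leq n^k$, which under the assumption $n \sim m^a$ is bounded by a constant multiple of $m^{ak}$ (with the constant absorbed into the ``$\sim$''). Next, for any fixed $\Lambda$ with $\sharp(\Lambda) \leq k$, the set $\pi {\cal J}_{\Lambda}$ is the $[-1,1]$-truncation of a linear span of at most $k$ weak learners, each satisfying $\|g\|_{\infty} \leq 1$. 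I would then quote \cite[Lemma 3.3]{Barron2008} with the boundedness parameter $B = 1$, which provides a uniform empirical $\ell^1$ covering number estimate of the form
\[
\sup_{x_1^m} {\cal N}_1\!\left(\epsilon, \pi {\cal J}_{\Lambda}, x_1^m\right) \leq 3\left(\frac{2e}{\epsilon}\log \frac{3e}{\epsilon}\right)^{k+1},
\]
valid uniformly over all $\Lambda$ of cardinality at most $k$, since the bound depends on $\Lambda$ only through its size.

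Finally, a union bound over the at most $m^{ak}$ such subsets yields
\[
\sup_{x_1^m} {\cal N}_1(\epsilon, {\cal F}_k, x_1^m) \leq \binom{n}{k} \cdot 3\left(\frac{2e}{\epsilon}\log \frac{3e}{\epsilon}\right)^{k+1} \leq 3\, m^{ak}\left(\frac{2e}{\epsilon}\log \frac{3e}{\epsilon}\right)^{k+1},
\]
which is exactly \eqref{Eq:covering-number}. The main obstacle is purely bookkeeping: verifying that the per-subspace estimate from \cite[Lemma 3.3]{Barron2008} produces the claimed constants with the truncation level $B=1$ matching $\|g_j\|_{\infty}\leq 1$, and that the combinatorial factor $\binom{n}{k} \leq n^k \lesssim m^{ak}$ can be absorbed cleanly into the leading factor without degrading the $\epsilon$-dependence. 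There is no delicate probabilistic or analytic step — the lemma reduces to combinatorial counting of active index sets combined with a standard finite-dimensional covering estimate for truncated function classes.
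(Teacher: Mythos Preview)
Your proposal is correct and is essentially the paper's argument unpacked: the paper simply invokes \cite[Lemma 3.3]{Barron2008} with $B=1$ directly on the full class ${\cal F}_k$ (so the combinatorial factor $n^k\lesssim m^{ak}$ is already built into the cited bound) and then notes $\sup_{x_1^m}{\cal N}_1(\epsilon,{\cal F}_k,x_1^m)\leq {\cal N}_1(\epsilon,{\cal F}_k)$, whereas you make the union-over-$\Lambda$ step explicit before citing the per-subspace estimate. The only cosmetic slip is writing $\binom{n}{k}$ in your final display when you mean $\sum_{j\le k}\binom{n}{j}$; either way the bound $n^k\le m^{ak}$ covers it.
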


With these helps, we prove Theorem \ref{Thm:main} as follows.
\begin{IEEEproof}[Proof of Theorem \ref{Thm:main}]
 Since $y=\{-1,1\}$, we have
\begin{equation}\label{Truncation}
      {\cal E}_D(\pi f_{D,k}) \leq {\cal E}_D(f_{D,k}).
\end{equation}
Then for arbitrary $h \in \mathrm{span}{\cal G}_n$, there holds
\begin{align}
\label{Eq:error-decomposition}
& {\cal E}(\pi f_{D,k}) - {\cal E}(f_{\rho}) \\
&\leq {\cal A}({\cal G}_n,h) + {\cal H}(D,k,h) + {\cal S}_D(h) - S_D(\pi f_{D,k}), \nonumber
\end{align}
where
\begin{align*}
& {\cal A}({\cal G}_n,h):= {\cal E}(h) - {\cal E}(f_{\rho}), \\
& {\cal H}(D,k,h) := {\cal E}_D(f_{D,k}) - {\cal E}_D(h),\\
& {\cal S}_D(f) := ({\cal E}_D(f) - {\cal E}_D(f_{\rho})) - ({\cal E}(f) - {\cal E}(f_{\rho}))
\end{align*}
are the approximation, hypothesis and sample errors, respectively.
 Due to Assumption \ref{assump:approx-error}, \eqref{reg-relation}, \eqref{quadratic type} and $\|f\|_\rho\leq\|f\|_\infty$, we can get the following approximation error estimate directly.
\begin{align}
\label{Eq:approx-error-bound}
{\cal A}({\cal G}_n, h_0) \leq C_1^2n^{-2r}, \ \text{and} \ \|h_0\|_{\ell^1} \leq C_2.
\end{align}
Furthermore, Proposition \ref{Proposition:hypothesis-error} shows
\begin{align}
\label{Eq:hypothesis-error-bound-1-1}
\mathcal H(D,k,h) \leq \frac{4\|h\|_{\ell^1}^2}{k}.
\end{align}
 The only thing remainder is to bound the sample error ${\cal S}_D(h)$ and $- S_D(\pi f_{D,k})$. The former is pretty standard, we refer the readers to \cite{Shi2011,Lin2017}  (with a slight change of constant) that with confidence $1-\delta/2$, there holds,
 \begin{align}
\label{Eq:sample-error1-bound}
{\cal S}_D(h)
&\leq \frac{4(B_0+1)^2\log \frac{2}{\delta}}{3m} \\
&+ 2(B_0+1) \|h-f_{\rho}\|_{\rho}\sqrt{\frac{2\log \frac{2}{\delta}}{m}}, \nonumber
\end{align}
where $B_0 := \max \{\|h\|_{\infty},1\}$. Now, we turn to bound $- S_D(\pi f_{D,k})$. From Theorem \ref{theorem:CONCENTRATION INEQUALITY 1} with $\mathcal F=\mathcal F_k$, we  have for arbitrary $\beta>0$, with confidence $1-\delta/2$
 \begin{align}
         - S_D(\pi f_{D,k})
       &\leq
       \frac{17}{18}(\mathcal E(\pi f_{D,k})-\mathcal E(f_\rho))+\frac{1211}{m}\log\frac2\delta \label{sample-err-2-1}\\
       & +\frac{4\beta}9 +
         \frac{1164}{m }\exp\left(-\frac{\beta m}{654}\right)\mathbb{E} {\cal N}_1(\frac{\beta}{70}, {\cal F}_k, x_1^m). \nonumber
\end{align}
For $\beta\geq1/m$, Lemma \ref{Lemm:covering-number} implies
\begin{eqnarray*}
  &&\frac{1164}{m }\exp\left(-\frac{\beta m}{654}\right)\mathbb{E} {\cal N}_1(\frac{\beta}{70}, {\cal F}_k, x_1^m)\\
  &\leq&
  \exp\left\{\log\left[3492m^{ak-1}(140e m\log (210e m))^{k+1}\right]-\frac{\beta m}{654}\right\}\\
  &\leq&
  \exp\bigg\{ \log(3492)+(ak-1)\log m+(k+1)\Big[\log(140e)\\
  &+&\log m+\log\log(210em)\Big]-\frac{\beta m}{654}\bigg\}\\
  &\leq&
  \exp\Big\{\bar{C}ak\log m-\frac{\beta m}{654}\Big\},
\end{eqnarray*}
 where $\bar{C}\geq 1$ is an absolute constant. Setting
 $\beta=\frac{1308\bar{C}a k\log m}{m},$ we obtain from \eqref{sample-err-2-1} that
 \begin{eqnarray}\label{sample-err-2}
         - S_D(\pi f_{D,k})
       \leq
       \frac{17}{18}(\mathcal E(f)-\mathcal E(f_\rho))
         +
         \frac{\bar{C}_1 a k\log m}{m}\log\frac2\delta  ,
\end{eqnarray}
 where we use
 $$
    \exp\big\{-\bar{C}a k\log m\big\}\leq m^{-a \bar{C}k}\leq \frac1m
 $$
  and $\bar{C}_1$ is an absolute constant.
 Plugging  \eqref{sample-err-2}, \eqref{Eq:sample-error1-bound}, \eqref{Eq:hypothesis-error-bound-1-1} and
 \eqref{Eq:approx-error-bound} into (\ref{Eq:error-decomposition})  and noting $\|h\|_\infty\leq\|h\|_{\ell^1}\leq C_2$, $\sqrt{ab}\leq \frac12(a+ b)$, we obtain that
 \begin{eqnarray*}
  &&
    \frac1{18}({\cal E}(\pi f_{D,k}) - {\cal E}(f_{\rho}) )
    \leq  C_1^2n^{-2r}+\frac{4C_2^2}{k}\\
    &+&
    \frac{4(C_2+2)^2\log \frac{2}{\delta}}{3m}
  +  (C_2+2)( C_1^2 n^{-2r}+ 2m^{-1}\log \frac{2}{\delta} )\\
  &+&
  \frac{\bar{C}_1a k\log m}{m}\log\frac2\delta
\end{eqnarray*}
  holds with confidence $1-\delta$. That is, with confidence $1-\delta$, there holds
   \begin{eqnarray*}
    {\cal E}(\pi f_{D,k}) - {\cal E}(f_{\rho})
    \leq  \tilde{C}\left(n^{-2r}+k^{-1}
     +
  \frac{k\log m}{m}\log\frac2\delta\right)
\end{eqnarray*}
where $\tilde{C}=18\max\{(C_2+3)C_1^2,4C_2^2,2(C_2+2)^2+2(C_2+2)+a\bar{C}_1\}$.
This completes the proof of Theorem \ref{Thm:main}.
\end{IEEEproof}

Based on Theorem \ref{Thm:main}, we can prove Theorems \ref{Thm:main-corollary} and \ref{Thm:main-Tsybakov} as follows.

\begin{proof}[Proof of Theorem \ref{Thm:main-corollary}]
By \eqref{oracle}, if $n \sim m^a$ for $a\geq 1$, $r\geq \frac{1}{4a}$, and $k\sim \sqrt{\frac{m}{\log m}}$, then
\begin{align}
\label{Eq:generalization-error}
&{\cal E}(\pi f_{D,k}) - {\cal E}(f_{\rho})\leq C_4 \left(\frac{m}{\log m}\right)^{-1/2} \log \frac{4}{\delta},
\end{align}
where $C_4$ is a positive constant independent of $\delta$ or $m$.
Furthermore, by the comparison inequality established by \cite{Chen2004}, that is,
\begin{align}
\label{Eq:comparison-ineq1}
{\cal R}(\mathrm{sgn}(f)) - {\cal R}(f_c) \leq C_{\phi}\sqrt{{\cal E}(f) - {\cal E}(f_{\rho})}
\end{align}
for some constant $C_{\phi}>0$. Thus, Theorem \ref{Thm:main-corollary} follows from \eqref{Eq:generalization-error} and \eqref{Eq:comparison-ineq1}.
\end{proof}

\subsubsection{Proof of Theorem \ref{Thm:main-Tsybakov}}
\label{sc:proof-Theorem2}

\begin{proof}
The claim of this theorem is yielded by \eqref{Eq:generalization-error} and the comparison inequality under Assumption \ref{assump:Tsybakov-noise} (\cite{Bartlett2006}, \cite{Xiang2011}, see also \cite[Theorem 8.29]{Steinwart2008}), saying that for arbitrary measurable function $f:X \rightarrow \mathbb{R}$, there holds
\begin{align*}
%\label{Eq:comparison-ineq2}
&{\cal R}(\mathrm{sgn}(f)) - {\cal R}(f_c) \\
&\leq 2^{\frac{3q+4}{q+2}}(\hat{c}_q)^{-\frac{q}{q+2}}C_{\phi,1}^{-\frac{q+1}{q+2}}\left({\cal E}(f) - {\cal E}(f_{\rho})\right)^{\frac{q+1}{q+2}}, %\nonumber
\end{align*}
where $C_{\phi,1}$ is a constant depending only on the loss $\phi$.
Let $C_5 = 2^{\frac{3q+4}{q+2}}(\hat{c}_q)^{-\frac{q}{q+2}}C_{\phi,1}^{-\frac{q+1}{q+2}} C_3^{\frac{q+1}{q+2}}$. This finishes the proof.
\end{proof}

\section{Conclusion}
\label{sc:conclusion}

Binary classification is a very significant problem in machine learning.
In this paper, we propose an efficient boosting method, aiming to improve the classification accuracy and establish the theoretical generalization guarantee.
We adopt the fully-corrective greedy update scheme to the boosting procedure, and then exploit the special form of the so-called \textit{squared hinge} loss to establish its fast learning rates in the framework of statistical learning, under some regular assumptions.
Certain efficient early stopping rule is also derived for the proposed boosting method.
The toy simulations are implemented to verify the feasibility of the proposed method as well as our theoretical findings.
Moreover, a series of UCI data experiments and a real earthquake intensity data experiment are provided to show the effectiveness of the proposed method, particularly, the classification accuracies can be improved via our proposed method with an appropriate dictionary.
Some future work is how to adopt the tree structures as the dictionary into the proposed method and establish the associated theoretical generalization guarantees.

%% use section* for acknowledgment
%\section*{Acknowledgment}
%
%
%The authors would like to thank...

\section*{Acknowledge}
The authors would like to thank Mr. Huaqing Zhang from China Academy of Art for his kindly help to improve the quality of Figure \ref{fig:earthquake}.

% you can choose not to have a title for an appendix
% if you want by leaving the argument blank
%\section{}
%Appendix two text goes here.

\section*{Appendix}

\subsection*{Appendix A: ADMM for FCG subproblem with squared hinge}
\label{sc:append-ADMM}
%Note that in the fully-corrective greedy  step \eqref{FCB} in Algorithm \ref{alg:boosting}, the following optimization problem
Note that in Algorithm \ref{alg:boosting}, the fully-corrective greedy  step \eqref{FCB} presented in the functional form is equivalent to the following optimization problem presented in the vector form, that is,
\begin{align}
\label{Eq:FCG-subproblem}
\beta^* =&\arg\min_{\beta \in \mathbb{R}^n} \frac{1}{m}\sum_{i=1}^m \left(1-y_i \sum_{j=1}^n \beta_j g_j(x_i) \right)_+^2 \\
&\text{subject to} \ \  \mathrm{supp}(\beta) \subset {\cal T}^{k}, \nonumber
\end{align}
where $(z)_+ := \max \{0,z\}$ for any $z\in \mathbb{R}$, and $\mathrm{supp}(\beta)$ denotes the support set of $\beta$, i.e., the nonzero set of $\beta$.
Then $f_{D,k} = \sum_{j\in {\cal T}^{k}} \beta_j^* g_j$.

In the following, we describe how to adopt the alternating direction method of multipliers (ADMM) to fast solve the optimization problem \eqref{Eq:FCG-subproblem}.
Let $s$ be the cardinality of the set ${\cal T}^{k}$, $u = \beta_{{\cal T}^{k}}$, $A \in \mathbb{R}^{m\times s}$ be a matrix induced by the input $\{x_i\}_{i=1}^m$ and the dictionaries selected in ${\cal T}^{k}$, i.e., $A_{ij} = g_{{\cal T}^{k}(j)}(x_i)$, where ${\cal T}^{k}(j)$ represents the $j$-th component of the set ${\cal T}^{k}$ and $g_{{\cal T}^{k}(j)}$ represents the dictionary with the index ${\cal T}^{k}(j)$.
Thus, the optimization problem \eqref{Eq:FCG-subproblem} can be reformulated as the following,
\begin{align}
\label{Eq:opt_prob}
u^* = \arg\min_{u\in \mathbb{R}^s} \frac{1}{m}\sum_{i=1}^m \left(1-y_i \sum_{j=1}^s A_{ij}u_j\right)_+^2.
\end{align}

First, we reformulate \eqref{Eq:opt_prob} as the following equivalent problem
\begin{align}
\label{Eq:opt_prob_admm}
&\mathop{\mathrm{minimize}}_{u \in \mathbb{R}^s, v\in \mathbb{R}^m} \quad f(v) := \frac{1}{m}\sum_{i=1}^m \left(1-y_i v_i\right)_+^2 \\
& \text{subject to} \quad v = Au. \nonumber
\end{align}
Then its augmented Lagrangian function is
\begin{align}
\label{Eq:ALM}
{\cal L}_{\gamma}(u,v,w) = f(v) + \langle w, v-Au \rangle + \frac{\gamma}{2} \|v-Au\|_2^2,
\end{align}
where $w\in \mathbb{R}^m$ is a multiplier variable, $\gamma>0$ is an augmented parameter.

Based on the above defined augmented Lagrangian function, the ADMM method for \eqref{Eq:opt_prob} can be described as follows:
given the initialization $u^0, v^0, w^0$, for $t=1, 2, \ldots,$\\
\textbf{(a) update $u^t$ via proximal scheme:} for some $\alpha>0$ (default $\alpha=1$),
\begin{align}
\label{Eq:update-u-prox}
u^t = \arg\min_{u \in \mathbb{R}^s} {\cal L}_{\gamma}(u,v^{t-1},w^{t-1}) + \frac{\alpha}{2} \|u-u^{t-1}\|_2^2,
\end{align}
which implies that
\begin{align}
\label{Eq:update-u-alg2}
u^t = (\gamma A^TA+\alpha {\bf I})^{-1} \left(A^T(\gamma v^{t-1} + w^{t-1}) + \alpha u^{t-1} \right).
\end{align}
\\
\textbf{(b) update $v^t$:}
\begin{align}
\label{Eq:update-v-prox}
v^t = \arg\min_{v \in \mathbb{R}^m}\ {\cal L}_{\gamma}(u^t,v,w^{t-1}).
\end{align}
From the above equation, the $v^t$-subproblem is separable and thus can be reduced to the following univariate optimization problem,
\begin{align}
\label{Eq:update-v-alg2}
&v_i^t = \\
&\arg \min_{v_i \in \mathbb{R}} \left(1-y_i v_i\right)_+^2 + \frac{m \gamma}{2}\left(v_i - \sum_{j=1}^n A_{ij}u_i^{t} + \gamma^{-1}w_i^{t-1}\right)^2, \nonumber
\end{align}
which has the closed form solution as shown in Lemma \ref{Lemm:solution-squaredhinge-min} in Appendix B.

In a summary, the specific procedure of ADMM for the FCG optimization is presented in Algorithm \ref{alg:ADMM}.

\begin{algorithm}\caption{ADMM for FCG subproblem \eqref{Eq:FCG-subproblem}}\label{alg:ADMM}
\begin{algorithmic}
\STATE{ {\bf Input}: training sample set $D:= \{x_i,y_i\}_{i=1}^m$, $X:=(x_1,\ldots, x_m)^T$, $y:=(y_1,\ldots,y_m)^T$, dictionaries $\{g_j\}_{j\in {\cal T}^k}$, and matrix $A\in \mathrm{R}^{m\times s}$ with $A_{ij} = g_{{\cal T}^{k}(j)}(x_i)$, where $s:=|{\cal T}^k|$, i.e., the cardinality of set ${\cal T}^k$.}
\STATE{Initialization:$u^0 \in \mathbb{R}^k$, $v^0 \in \mathbb{R}^m$, and $w^0 \in \mathrm{null}(A^T)$ (i.e., the null space of the transpose of  matrix $A$, say, $w^0 =0$).}
\STATE{ for $t=1,2, \ldots,$}
\STATE{\ \ update $u^t$ via \eqref{Eq:update-u-alg2},}
\STATE{\ \ update $v^{t}$ via \eqref{Eq:update-v-alg2},}
\STATE{\ \ $w^t = w^{t-1} + \gamma (v^t - Au^t)$.}
\STATE{End until the stopping criterion satisfied.}
\STATE{ {\bf Output}:
$u^t$.}
\end{algorithmic}
\end{algorithm}

\subsection*{Appendix B. Closed form solution of proximal of squared hinge}
Consider the following optimization problem
\begin{align}
\label{Eq:squaredhinge-min}
u^* = \arg\min_u g(u) := \left(\max\{0,1-a\cdot u\}\right)^2 + \frac{\gamma}{2}(u-b)^2,
\end{align}
where $\gamma>0$.
\begin{lemma}
\label{Lemm:solution-squaredhinge-min}
The optimal solution of the problem \eqref{Eq:squaredhinge-min} is shown as follows
\begin{align*}
\mathrm{hinge}_{\gamma}^2(a,b) =
&\left\{
\begin{array}{cl}
b, &\ \mathrm{if} \ a=0,\\
\frac{2a+\gamma b}{2a^2+\gamma}, & \mathrm{if}\ a\neq 0, ab<1\\
b, & \mathrm{if}\ a \neq 0, ab\geq 1.
\end{array}
\right.
\end{align*}
\end{lemma}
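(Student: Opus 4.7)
The plan is to exploit the convexity of $g$ and perform a case analysis based on whether the squared hinge term is active at the optimum. Since $u \mapsto (\max\{0,1-au\})^2$ is convex (the maximum of an affine function and zero, composed with squaring after ensuring nonnegativity, yields a convex function) and $\frac{\gamma}{2}(u-b)^2$ is strictly convex, $g$ is strictly convex. Hence $g$ has a unique minimizer, and any point satisfying a first-order optimality condition is the global minimum.

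First I would dispose of the trivial case $a=0$: the first term reduces to the constant $1$, so $g(u) = 1 + \frac{\gamma}{2}(u-b)^2$ is minimized at $u^*=b$. For $a\neq 0$, I would split into the two subcases in the statement. If $ab \geq 1$, then $u=b$ satisfies $1-ab \leq 0$, so the hinge term vanishes there, giving $g(b)=0$. Since $g\geq 0$ everywhere, $b$ must be the unique minimizer. If instead $ab<1$, the minimum cannot occur in the region $\{u : au\geq 1\}$, because on that region $g$ reduces to $\frac{\gamma}{2}(u-b)^2$, whose unconstrained minimizer $b$ lies outside the region (as $ab<1$), forcing the minimum over that closed region to be attained on its boundary $au=1$, which is already in the other region. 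So the minimizer lies in $\{u : au<1\}$, where $g$ is the smooth quadratic $(1-au)^2 + \frac{\gamma}{2}(u-b)^2$.

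On this quadratic piece the first-order condition $g'(u)=-2a(1-au)+\gamma(u-b)=0$ yields $u^* = \frac{2a+\gamma b}{2a^2+\gamma}$. I would then verify consistency: $au^* = \frac{2a^2+\gamma ab}{2a^2+\gamma}$, which is $<1$ precisely when $\gamma ab < \gamma$, i.e., $ab<1$, exactly our hypothesis. Hence the candidate lies in the correct piece and is therefore the global minimizer by strict convexity.

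There is essentially no hard obstacle here; the only subtlety is the boundary check that the candidate produced by the unconstrained first-order condition on the active-hinge piece genuinely lies in that piece, which is what the inequality $ab<1$ is designed to guarantee. The argument that rules out minimizers in the $au\geq 1$ region when $ab<1$ is the one step worth stating explicitly, since without it one might worry about a spurious critical point on the inactive piece.
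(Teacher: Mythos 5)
Your proof is correct, and it takes a cleaner route than the paper's. The paper proves the lemma by splitting on the sign of $a$ (three cases: $a>0$, $a=0$, $a<0$) and then, within each sign case, analyzing the piecewise quadratic. Your proof instead splits on the sign of $1-ab$, which is the quantity that actually governs which branch of the hinge is active at the optimum. Your observation that $g(b)=0$ when $ab\ge 1$ is a nice shortcut that the paper does not use: it disposes of an entire case with no computation, since $g\ge 0$ pointwise. Your handling of the $ab<1$ case is also tighter: rather than enumerating where the stationary point of each quadratic piece can lie, you argue directly that the minimizer cannot sit in the region $\{u: au\ge 1\}$ (the quadratic there has its unconstrained minimizer $b$ outside, so the infimum over that closed half-line occurs on the boundary, which belongs to the other piece), and then verify that the first-order stationary point of the active-hinge piece lands inside $\{u: au<1\}$ precisely when $ab<1$. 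The region-membership check you flag as the only subtle step is indeed the crux, and you carry it out correctly.

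One further remark in your favor: the paper's own Case 3 ($a<0$) lists a spurious middle subcase, claiming $u^\ast = a^{-1}$ when $a^{-1}<b<a^{-1}-\gamma^{-1}a$. For $a<0$ this interval is nonempty and lies entirely in $\{b: ab<1\}$, where (as your computation shows, and as a concrete check with $a=-1$, $\gamma=1$, $b=-1/2$ confirms) the minimizer is $\frac{2a+\gamma b}{2a^2+\gamma}$, not $a^{-1}$. That middle branch appears to be an error in the paper's proof; it does not affect the correctness of the lemma's statement, which your proof establishes without that defect. The only stylistic improvement I would suggest is to note explicitly that at the boundary $ab=1$ the two formulas agree ($\frac{2a+\gamma b}{2a^2+\gamma}=b$ when $b=1/a$), so the closed/open split between the two nontrivial cases is immaterial.
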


\begin{proof}
We consider the problem \eqref{Eq:squaredhinge-min} respectively in the following three scenarios: (1) $a>0$, (2) $a=0$ and (3) $a<0$.

\textbf{Case 1. $a>0$:}
In this case,
\begin{align*}
g(u)
= \left\{
\begin{array}{cl}
(1-au)^2+\frac{\gamma}{2}(u-b)^2, \ & u<a^{-1},\\
\frac{\gamma}{2}(u-b)^2, \ & \mathrm{u\geq a^{-1}}.
\end{array}
\right.
\end{align*}
It is easy to show that the solution of the problem is
\begin{align}
\label{Eq:hing-sol-part1}
u^* = \left\{
\begin{array}{cl}
\frac{2a+\gamma b}{2a^2+\gamma}, &\ \mathrm{if} \ a >0 \ \mathrm{and}\ b< a^{-1},\\
b, &\ \mathrm{if} \ a >0 \ \mathrm{and}\ b\geq a^{-1}.
\end{array}
\right.
\end{align}

\textbf{Case 2. $a=0$:}
It is obvious that
\begin{align}
\label{Eq:hing-sol-part2}
u^* =b.
\end{align}

\textbf{Case 3. $a<0$:}
Similar to \textbf{Case 1},
\begin{align*}
g(u)
= \left\{
\begin{array}{cl}
(1-au)^2+\frac{\gamma}{2}(u-b)^2, \ & u\geq a^{-1},\\
\frac{\gamma}{2}(u-b)^2, \ & {u < a^{-1}}.
\end{array}
\right.
\end{align*}
Similarly, it is easy to show that the solution of the problem is
\begin{align}
\label{Eq:hing-sol-part3}
u^* = \left\{
\begin{array}{cl}
\frac{2a+\gamma b}{2a^2+\gamma}, &\ \mathrm{if} \ a <0 \ \mathrm{and}\ b\geq a^{-1},\\
a^{-1}, &\ \mathrm{if} \ a <0 \ \mathrm{and}\ a^{-1}<b<a^{-1}-\gamma^{-1}a ,\\
b, &\ \mathrm{if} \ a <0 \ \mathrm{and}\ b < a^{-1}.\\
\end{array}
\right.
\end{align}
Thus, we finish the proof of this lemma.
\end{proof}

% Can use something like this to put references on a page
% by themselves when using endfloat and the captionsoff option.
\ifCLASSOPTIONcaptionsoff
  \newpage
\fi

\end{document}